\documentclass[twoside]{article}

\usepackage[preprint]{aistats2027}
\usepackage[round]{natbib}

\usepackage[normalem]{ulem}
\usepackage{xcolor}
\definecolor{SMIDeleted}{RGB}{170,35,35}
\definecolor{SMIAdded}{RGB}{20,70,175}

\usepackage{paper_style}
\definecolor{deepgreen}{HTML}{19382B}
\definecolor{green}{HTML}{28734F}
\definecolor{lightgreen}{HTML}{8EB49D}
\definecolor{lightlightgreen}{HTML}{F3F8F4}
\definecolor{orange}{HTML}{E7836B}
\usepackage{hyperref}
\hypersetup{
    colorlinks=true,
    linkcolor=green, 
    citecolor=green,     
    urlcolor=green       
}

\usepackage{etoc}

\begin{document}
\etocdepthtag.toc{main}

%
\runningtitle{Spread Mutual Information}

%
\runningauthor{Yu, OuYang, Hern\'{a}ndez-Lobato, Liu}

\twocolumn[

\aistatstitle{Controlling Dependence in Implicit Generative Models\\via Spread Mutual Information}

\begin{center}

\textbf{Jiahao Yu}\textsuperscript{1,*}
\qquad
\textbf{Song Liu}\textsuperscript{2}
\qquad
\textbf{Jos\'{e} Miguel Hern\'{a}ndez-Lobato}\textsuperscript{3}
\qquad
\textbf{RuiKang OuYang}\textsuperscript{3,\(\dagger\)}

\vspace{0.4em}

{\small
\textsuperscript{1}University of Oxford
\qquad
\textsuperscript{2}University of Bristol
\qquad
\textsuperscript{3}University of Cambridge
}

\vspace{0.3em}

{\small
\textsuperscript{*}Work completed at the University of Cambridge.
\qquad
\textsuperscript{\(\dagger\)}Correspondence to \texttt{ro352@cam.ac.uk}.
}

\end{center}

\vspace{1em}
]




\begin{abstract}
Mutual information (MI) provides an objective for suppressing or encouraging statistical dependence in implicit generative models.
However, direct MI evaluation is challenging in implicit models due to typically intractable densities.
A remedy is estimating the generator gradient from the difference between conditional and marginal scores. 
This score difference can, in turn, be estimated by differentiating a log density ratio learned through classification.
This construction nevertheless faces two difficulties: (i) singular distributions need not admit the required score functions, and (ii) poor overlap can hinder density-ratio estimation.
We therefore introduce \textit{\textbf{S}pread \textbf{M}utual \textbf{I}nformation} (SMI), a weighted integral of MI across noise levels obtained by applying a common spreading kernel to the generated variable.
Gaussian spreading yields smooth, strictly positive conditional and marginal densities, extending the gradient construction to distributions that may originally be singular.
Across a variaty of experiments, SMI consistently achieves effective dependence control among MI-based methods and remains competitive with established task-specific approaches.
\end{abstract}


\section{Introduction}
\label{sec:introduction}

Controlling how much information generated outputs retain about an auxiliary variable is important in conditional generation and representation learning. 
Reducing dependence supports information bottlenecks, invariance, and independence regularisation \citep{tishby_information_2000,alemi_deep_2017,zhang_causaldiff_2024,gaudi_coind_2025,yu_missing_2025}, while increasing dependence can preserve latent factors and conditioning signals \citep{chen_infogan_2016,yu_regularized_2024,wang_information_2025}.
Mutual information (MI) quantifies statistical dependence and provides a common objective for both goals. We study dependence minimisation and maximisation in implicit generative models, which allow sampling without tractable density evaluations \citep{mohamed_learning_2016}.

Optimising MI in these models presents two distinct difficulties. First, its generator gradient must be estimated from samples. Methods such as MINE \citep{belghazi_mutual_2018}, InfoNCE \citep{oord_representation_2018}, and CLUB \citep{cheng_club_2020} provide tractable objectives through variational bounds or conditional-density estimation, but accurate objective values do not by themselves guarantee accurate gradients. Second, MI itself can be poorly suited to smooth dependence control. Singular conditional distributions may lead to infinite MI or undefined scores, while even finite MI can be insensitive to support geometry. 
For example, if $Y$ is uniform on $\{-1,+1\}$ and $X_a=(Z,aY)$ with independent Gaussian $Z$, then $I(X_a;Y)=\log 2$ for every $a>0$, regardless of how close the two parallel lines supporting $X_a\mid Y=+1$ and $X_a\mid Y=-1$ become. Independence occurs only at $a=0$, but the MI gradient with respect to $a$ is zero for every $a>0$, so gradient descent cannot reduce $a$ towards zero. We revisit this example experimentally in \Cref{fig:synthetic-summary}a.

We introduce \textit{\textbf{S}pread \textbf{M}utual \textbf{I}nformation} (SMI) for dependence control, building on Spread Divergence and KL objectives integrated over noise levels \citep{zhang_spread_2020,he_training_2025}. SMI measures dependence after applying a fixed noise channel to the generated variable, identically under the joint and product-of-marginals laws; we combine noise levels through a weighted integral. Gaussian spreading produces smooth, positive conditional and marginal densities even when the original distributions are singular. 
In the example above, it makes the information available about $Y$ sensitive to the distance $2a$ between these lines, providing a gradient for reducing $a$.

To optimise SMI, we target its generator gradient directly, following the motivation of MIGE \citep{wen_mutual_2020}. Under suitable regularity conditions, this gradient depends on the difference between conditional and marginal scores. Following DiffRatio \citep{chen_diffratio_2025}, we estimate this difference by differentiating a learned log density ratio. A single classifier, conditioned on the noise level, learns the ratio by distinguishing joint samples from independently paired samples \citep{gutmann_noisecontrastive_2010}. This construction avoids fitting two separate score models and supports both minimisation and maximisation alongside a task-specific objective.

The noise channel and its weighting determine the scales at which SMI measures dependence.
Under suitable conditions on the spreading kernel and the weighting across noise levels,
SMI vanishes if and only if the variables are \textit{independent}.\footnote{Small SMI does not in general imply small MI.}

Our contributions are: (i) we introduce SMI to mitigate singularity and poor overlap in MI optimisation, and develop a classifier-based pathwise gradient estimator for both minimisation and maximisation, (ii) we characterise independence and information dissipation under spreading, and bound gradient sampling and score-difference estimation errors, and (iii) we show that SMI performs strongly on controlled synthetic problems and remains competitive on real-world tasks.
\section{Background}
\label{sec:background}

\textbf{Mutual information (MI)} quantifies the statistical dependence between
two random variables $X$ and $Y$, defined as the
Kullback--Leibler (KL) divergence between their joint distribution
and the product of their marginals. When the relevant densities
exist, this gives
\begin{align}
    I(X;Y)
    &=\mathbb{D}_{\mathrm{KL}}\!\left(
        p(x\mid y)p_Y(y)
        \,\middle\|\,
        p(x)p_Y(y)
    \right) \notag\\
    &=\mathbb{E}_{p(x,y)}\!\left[
        \log\frac{p(x\mid y)}{p(x)}
    \right].
    \label{eq:mutual-info-def-cancel}
\end{align}
The density expression above is a special case of the KL definition, which applies to probability laws even without densities and allows MI to take the value $+\infty$.
\paragraph{Problem setup.}
We consider conditional generation of an output $X\in\mathbb{R}^d$
given an input $Y\sim p_Y$, whose distribution does not depend on
the generator parameters $\theta$.
When densities exist, a parameterized conditional distribution
$p_\theta(x\mid y)$ induces the joint density
$p_\theta(x,y)=p_\theta(x\mid y)p_Y(y)$ and the marginal density
\(
    p_\theta(x)
    =\mathbb{E}_{Y\sim p_Y}\!\left[p_\theta(x\mid Y)\right].
\)

Our objective is to maximise or minimise MI under this induced
joint distribution:
\begin{align}
    I_\theta(X;Y)
    =\mathbb{D}_{\mathrm{KL}}\!\left(
        p_\theta(x\mid y)p_Y(y)
        \,\middle\|\,
        p_\theta(x)p_Y(y)
    \right).
    \label{eq:neural-mutual-info}
\end{align}

\paragraph{Implicit generative model.}
We parameterize the conditional distribution using an implicit
generator
\begin{equation}
    X=g_\theta(Y,Z),
    \quad Z\sim q_Z,
    \quad Z\perp Y,
\end{equation}
where $g_\theta$ is a neural network and $q_Z$ is a fixed noise
distribution independent of $\theta$.
For each input $y$, the conditional law of $X$ is the
pushforward of $q_Z$ through $g_\theta(y,\cdot)$.
When this law admits a density, it can be written formally as
\begin{equation}
    p_\theta(x\mid y)
    =\mathbb{E}_{Z\sim q_Z}\!\left[
        \delta\!\left(x-g_\theta(y,Z)\right)
    \right],
\end{equation}
where $\delta$ denotes the Dirac delta. This construction enables sampling from the conditional law
without evaluating its density.
Even when densities exist, evaluating either
$p_\theta(x\mid y)$ or $p_\theta(x)$ is \textit{generally intractable}.

\paragraph{MI optimisation via gradient estimation.}
The intractability of these densities generally prevents direct
evaluation of \Cref{eq:neural-mutual-info}.
Nevertheless, MI optimisation can proceed by estimating its gradient
\textit{without evaluating MI itself}.
Earlier work derived MI differentials in terms of score differences \citep{babaie-zadeh_differential_2004}, while
\citet{wen_mutual_2020} developed score-based estimators of the MI
gradient for representation learning.

Suppose the generator is differentiable in $\theta$ and the
induced densities are differentiable in $(\theta,x)$ and
positive on a common, $\theta$-independent support.
If MI and the relevant expectations are finite and differentiation
can be interchanged with integration, then
\begin{multline}
    \nabla_\theta I_\theta(X;Y)
    =\mathbb{E}_{Y\sim p_Y,\,Z\sim q_Z}\!\biggl[
        \left(
            \frac{\partial g_\theta(Y,Z)}{\partial\theta}
        \right)^{\!\top}
        \\
        \left.
        \left(
            \nabla_{x}\log p_\theta(x\mid Y)
            -\nabla_{x}\log p_\theta(x)
        \right)
        \right|_{x=g_\theta(Y,Z)}
    \biggr].
    \label{eq:mi-gradient}
\end{multline}
This identity expresses the MI gradient in terms of the generator
Jacobian and the difference between the conditional and marginal
scores. Equivalently, this score difference is the input
gradient of the log density ratio
$\log[p_\theta(x\mid y)/p_\theta(x)]$.
If the scores are known exactly, independent samples from
$p_Yq_Z$ yield an unbiased Monte Carlo estimator of
\Cref{eq:mi-gradient}.
In implicit models, however, the score difference must generally be estimated, introducing approximation error into the generator gradient.
Moreover, the output laws may be singular, in which case the required densities and scores need not exist.
These issues motivate the spread mutual information formulation introduced next.


\section{Spread Mutual Information}
\label{sec:spread-mutual-information}


To address these limitations, we measure dependence after passing the generated variable through a stochastic channel.
At any positive Gaussian noise level, spreading turns even singular laws into smooth, strictly positive conditional and marginal densities with common support.
This makes the required scores well defined and eliminates exact support separation, which can facilitate density-ratio estimation.


This idea underlies spread KL divergence \citep[SKL;][]{zhang_spread_2020}.
Given probability measures $p$ and $q$ and a spreading kernel $k$, SKL compares the corresponding spread distributions:
\begin{equation}
    \mathbb D_{\mathrm{KL}}(p*k \,\|\, q*k),
    \ \ 
    (p*k)(\tilde x)
    =
    \int k(\tilde x\mid x)\,p(\mathrm dx).
\end{equation}
A common choice is the Gaussian spearding kernel $k(\tilde x|x)=\mathcal{N}(\tilde x;\alpha x,\sigma^2I)$ with $\alpha,\sigma>0$.
This kernel produces smooth, strictly positive densities, even when
the original distributions are singular or have disjoint supports.
Moreover, since both scaling by $\alpha>0$ and Gaussian convolution
are injective, this choice preserves identifiability:
$\mathrm{SKL}=0\Leftrightarrow p=q$.

Mutual information is the KL divergence between the joint distribution
of $(X,Y)$ and the product of its marginals. We define
\textit{spread mutual information} by applying the same channel to $X$
in both distributions, leaving $Y$ unchanged.
\begin{definition}[Spread Mutual Information]
\itshape
Let $(X,Y)$ be jointly distributed, with $X\in\mathbb{R}^d$
and $Y\sim q_Y$ taking values in a standard Borel space.
For a probability transition kernel $k$ on $\mathbb{R}^d$,
let $\tilde X\mid(X,Y)\sim k(\cdot\mid X)$.
The \textit{spread mutual information} w.r.t. $k$ is
\begin{equation}
    I_k(X;Y)\equiv I(\tilde X;Y).
\end{equation}
\end{definition}
In particular, this definition allows discrete or continuous $Y$ and requires
no density for $X$.
When $k$ admits a transition density, denoted by
$k(\tilde x\mid x)$, the spread conditional and marginal densities are
\begin{align}
    \tilde p(\tilde x\mid y)
    &=\mathbb{E}[k(\tilde x\mid X)\mid Y=y],\\
    \tilde p(\tilde x)
    &=\mathbb{E}[k(\tilde x\mid X)]
      =\mathbb{E}_Y[\tilde p(\tilde x\mid Y)].
\end{align}
Consequently,
\begin{equation}
    I_k(X;Y)
    =\mathbb{E}_Y\!\left[
        \mathbb{D}_{\mathrm{KL}}\!\left(
            \tilde p(\cdot\mid Y)\,\|\,\tilde p
        \right)
    \right].
\end{equation}
SMI is nonnegative and bounded above by MI. If the spreading
operation is injective, it also detects independence exactly.

\begin{proposition}[Data processing and identifiability]\label{prop:smi-properties}
Spread mutual information satisfies
\begin{equation}
    0\le I_k(X;Y)\le I(X;Y).
\end{equation}
If the spreading operation is injective, i.e.
$r*k=s*k$ implies $r=s$ for any input distributions
$r$ and $s$, then
\begin{equation}
    I_k(X;Y)=0
    \quad\Longleftrightarrow\quad
    X\perp Y.
\end{equation}
\end{proposition}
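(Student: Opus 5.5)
The plan is to treat $\tilde X$ as the output of a channel applied to $X$ that does not look at $Y$, and then use the data processing inequality for KL divergence.

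\emph{Nonnegativity.} The lower bound $I_k(X;Y)\ge 0$ is immediate, since $I_k(X;Y)=I(\tilde X;Y)$ is a KL divergence between two probability laws.

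\emph{Upper bound.} The construction $\tilde X\mid(X,Y)\sim k(\cdot\mid X)$ makes $Y\to X\to\tilde X$ a Markov chain. Define the Markov kernel $K$ on the product space by
\[
K\big((x,y),\cdot\big)=k(\cdot\mid x)\otimes\delta_y .
\]
The steps are then:
\begin{enumerate}
\item Pushing the joint law $P_{XY}$ through $K$ gives $P_{\tilde X Y}$.
\item Pushing the product law $P_X\otimes P_Y$ through $K$ gives $(P_X k)\otimes P_Y=P_{\tilde X}\otimes P_Y$. Here I use $P_{\tilde X}=P_X k$, which holds because the kernel acts identically under both laws.
\item The data processing inequality for KL divergence under a common Markov kernel (valid for arbitrary laws on standard Borel spaces, including the value $+\infty$) then gives
\[
I(\tilde X;Y)=\mathbb D_{\mathrm{KL}}\big(P_{XY}K\,\|\,(P_X\otimes P_Y)K\big)\le \mathbb D_{\mathrm{KL}}\big(P_{XY}\,\|\,P_X\otimes P_Y\big)=I(X;Y).
\]
\end{enumerate}
No densities are needed at any point, which matches the measure-theoretic definition of MI given in \Cref{sec:background}.

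\emph{Identifiability, ($\Leftarrow$).} If $X\perp Y$, then $P_{XY}=P_X\otimes P_Y$, so the upper bound forces $I_k=0$. Alternatively, $\tilde X$ is generated from $X$ alone by independent kernel noise, so $\tilde X\perp Y$ directly.

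\emph{Identifiability, ($\Rightarrow$).} Suppose $I_k(X;Y)=0$. Then $P_{\tilde X Y}=P_{\tilde X}\otimes P_Y$, so $\tilde X\perp Y$. Because $Y$ lives in a standard Borel space, regular conditional distributions $P_{X\mid Y=y}$ exist. By the Markov structure, $P_{\tilde X\mid Y=y}=P_{X\mid Y=y}*k$. Independence therefore gives
\[
P_{X\mid Y=y}*k=P_{\tilde X}=P_X*k \quad\text{for } P_Y\text{-a.e. } y.
\]
Applying injectivity with $r=P_{X\mid Y=y}$ and $s=P_X$ yields $P_{X\mid Y=y}=P_X$ for $P_Y$-a.e.\ $y$, which is exactly $X\perp Y$.

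\emph{Expected obstacle.} The main delicacy is measure-theoretic bookkeeping rather than any hard step. I need to verify that $P_{\tilde X\mid Y=y}=P_{X\mid Y=y}*k$ holds as a version of the regular conditional law. This follows by checking, for Borel $A$ and $B$,
\[
P(\tilde X\in A,\,Y\in B)=\int_B\!\int k(A\mid x)\,P_{X\mid Y=y}(dx)\,P_Y(dy).
\]
I also need to handle the null set on which the conditional equality may fail. Both points are routine once disintegration on the standard Borel space is invoked.
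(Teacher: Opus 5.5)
Your proposal is correct and follows essentially the same route as the paper's proof. Both use data processing along $Y\to X\to\tilde X$ for the bounds, then disintegration on the standard Borel space to obtain $P_{X\mid Y=y}*k=P_X*k$ for $P_Y$-a.e.\ $y$, followed by injectivity; your explicit kernel $k(\cdot\mid x)\otimes\delta_y$ and your check that $y\mapsto P_{X\mid Y=y}*k$ is a version of $P_{\tilde X\mid Y}$ spell out steps the paper leaves implicit.
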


In practice, we aggregate SMI across multiple noise levels, with the
noise distribution and weighting determining the scales at which
dependence is measured.

\subsection{Gradients of SMI}
For an implicit generator $g_\theta(y,z)$ and a fixed transition
kernel $k$, we optimise $I_\theta(\tilde X;Y)$ through its gradient
with respect to the generator parameters $\theta$.
\begin{proposition}[SMI gradient, informal]
\label{prop:smi-gradient}
Let $\tilde X=h(g_\theta(Y,Z),\epsilon)$ be the output of a
generator followed by a fixed, reparameterizable spreading
channel. Under suitable regularity conditions,
\begin{multline}
    \nabla_\theta I_\theta(\tilde X;Y)
    =\mathbb{E}\biggl[
        \biggl(\frac{\partial\tilde X}{\partial\theta}\biggr)^{\!\top}\\
        \bigl(
            \nabla_{\tilde x}
            \log \tilde p_\theta(\tilde x\mid y)
            -
            \nabla_{\tilde x}
            \log \tilde p_\theta(\tilde x)
        \bigr)        
        \bigg|_{\tilde x=\tilde X}
    \biggr],
\label{eq:smi-gradient}
\end{multline}
The formal statement appears in \Cref{prop:smi-gradient-formal}.
\end{proposition}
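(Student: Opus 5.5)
We reduce the statement to the known MI-gradient identity \Cref{eq:mi-gradient}, now applied to the spread pair $(\tilde X,Y)$. Write $\tilde X=h(g_\theta(Y,Z),\epsilon)$ with $(Y,Z,\epsilon)\sim p_Y\,q_Z\,q_\epsilon$, where $q_\epsilon$ does not depend on $\theta$. The map $(y,z,\epsilon)\mapsto h(g_\theta(y,z),\epsilon)$ is then an implicit generator for $\tilde X$ given $Y$, driven by the enlarged $\theta$-independent noise $(Z,\epsilon)$. For Gaussian spreading, $h(x,\epsilon)=\alpha x+\sigma\epsilon$, so $\tilde p_\theta(\tilde x\mid y)=\mathbb E_Z[\mathcal N(\tilde x;\alpha g_\theta(y,Z),\sigma^2 I)]$ and $\tilde p_\theta(\tilde x)=\mathbb E_{Y}[\tilde p_\theta(\tilde x\mid Y)]$. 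Both densities are $C^\infty$ in $\tilde x$ and strictly positive on all of $\mathbb R^d$, so their common support is $\theta$-independent and the scores are well defined. This holds even when $p_\theta(x\mid y)$ is singular, and it is precisely the hypothesis set under which \Cref{eq:mi-gradient} was stated.

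We would still carry out the derivation directly, to make the regularity assumptions explicit. Start from
\[
I_\theta(\tilde X;Y)=\mathbb E_{p_Y}\int \tilde p_\theta(\tilde x\mid y)\log\frac{\tilde p_\theta(\tilde x\mid y)}{\tilde p_\theta(\tilde x)}\,\mathrm d\tilde x
\]
and differentiate under the integral. The first term is $\int \nabla_\theta\tilde p_\theta(\tilde x\mid y)\log(\tilde p_\theta(\tilde x\mid y)/\tilde p_\theta(\tilde x))$. The remaining terms are $\int \nabla_\theta \tilde p_\theta(\tilde x\mid y)$ and $-\mathbb E_{p_Y}\!\int \tilde p_\theta(\tilde x\mid y)\,\nabla_\theta\tilde p_\theta(\tilde x)/\tilde p_\theta(\tilde x)=-\int\nabla_\theta\tilde p_\theta(\tilde x)$, and both vanish because the densities integrate to one. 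What remains is $\nabla_\theta\,\mathbb E[f(\tilde X,Y)]$ with $f=\log(\tilde p(\cdot\mid y)/\tilde p)$ held frozen at the current $\theta$. Applying the reparameterisation trick gives $\mathbb E[(\partial\tilde X/\partial\theta)^\top\nabla_{\tilde x}f(\tilde X,Y)]$, and $\nabla_{\tilde x}f$ is exactly the score difference in \Cref{eq:smi-gradient}. Equivalently, one can pass through a pathwise/score-function duality: $\nabla_\theta\tilde p_\theta(\tilde x\mid y)=-\nabla_{\tilde x}\cdot\mathbb E[\partial_\theta\tilde X\,\delta(\tilde x-\tilde X)\mid Y=y]$, followed by integration by parts.

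The main obstacle is justifying the interchange of $\nabla_\theta$ with the integrals and expectations, together with the vanishing of boundary terms. For the formal version we would assume the following. The generator $g_\theta$ is $C^1$ in $\theta$, and $\mathbb E\|\partial_\theta g_\theta(Y,Z)\|^2$ is locally uniformly bounded in $\theta$. Under Gaussian spreading, this gives the bound $\|\nabla_{\tilde x}\log\tilde p_\theta(\tilde x\mid y)\|\le (\alpha\|\cdot\|+\|\tilde x\|)/\sigma^2$-type moments, through the posterior-mean (Tweedie) representation $\nabla_{\tilde x}\log\tilde p=(\alpha\,\mathbb E[X\mid\tilde x]-\tilde x)/\sigma^2$. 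With second moments of $X$ finite, the score difference is then square-integrable, and Cauchy–Schwarz yields an integrable dominating function, so dominated convergence licenses the interchanges. Since the densities are finite-moment Gaussian mixtures, MI finiteness follows as well. The fact that $\log\tilde p_\theta$ depends on $\theta$ through both numerator and denominator is handled by the normalisation cancellation above.
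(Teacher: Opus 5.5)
Your argument is correct and is essentially the paper's proof. Both split $\nabla_\theta I_\theta(\tilde X;Y)$ into two parts. The first is the derivative through the law of $\tilde X$, which is evaluated pathwise. The second is the derivative of the explicit $\theta$-dependence of $\ell_\theta=\log[\tilde p_\theta(\cdot\mid y)/\tilde p_\theta]$, which vanishes because $\int\partial_\theta\tilde p_\theta(u\mid y)\,\mathrm{d}u=\int\partial_\theta\tilde p_\theta(u)\,\mathrm{d}u=0$; the marginal term uses the marginal law of $\tilde X$, exactly as you do.

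The only difference is the order of the steps. The paper applies the chain rule directly to the reparameterised expectation $\mathbb{E}[\ell_\theta(\tilde X,Y)]$. You instead pass through the density form and then convert $\int\nabla_\theta\tilde p_\theta\,\ell_\theta$ back to pathwise form, which costs one extra interchange. The paper also simply assumes the needed interchanges rather than deriving them. Your dominated-convergence sketch is therefore additional, and as written it would need a bound uniform over a neighbourhood of $\theta$, not just at $\theta$.
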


\begin{figure}[t]
    \centering
    \includegraphics[width=0.645\linewidth]
        {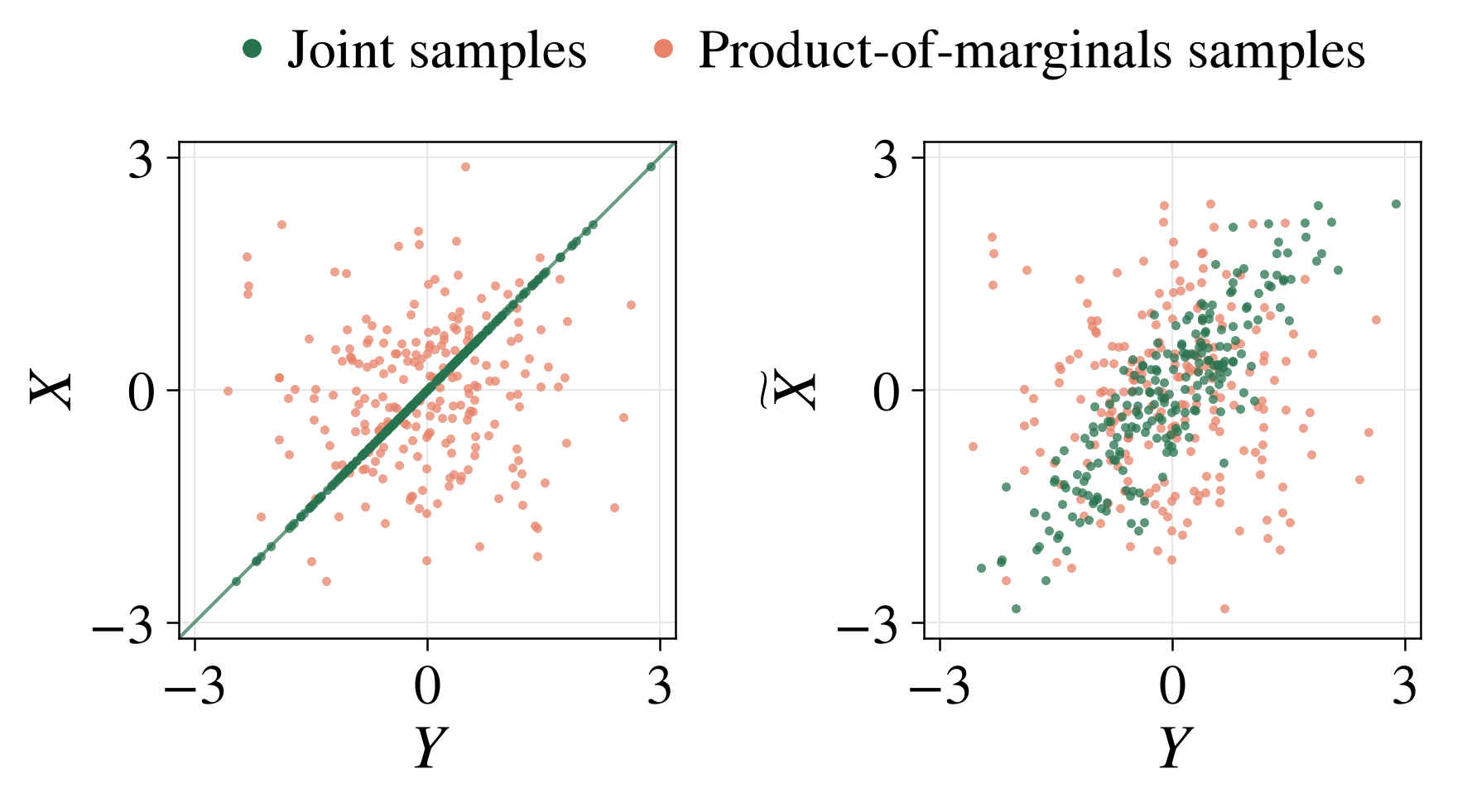}
    \hfill
    \includegraphics[width=0.335\linewidth]
        {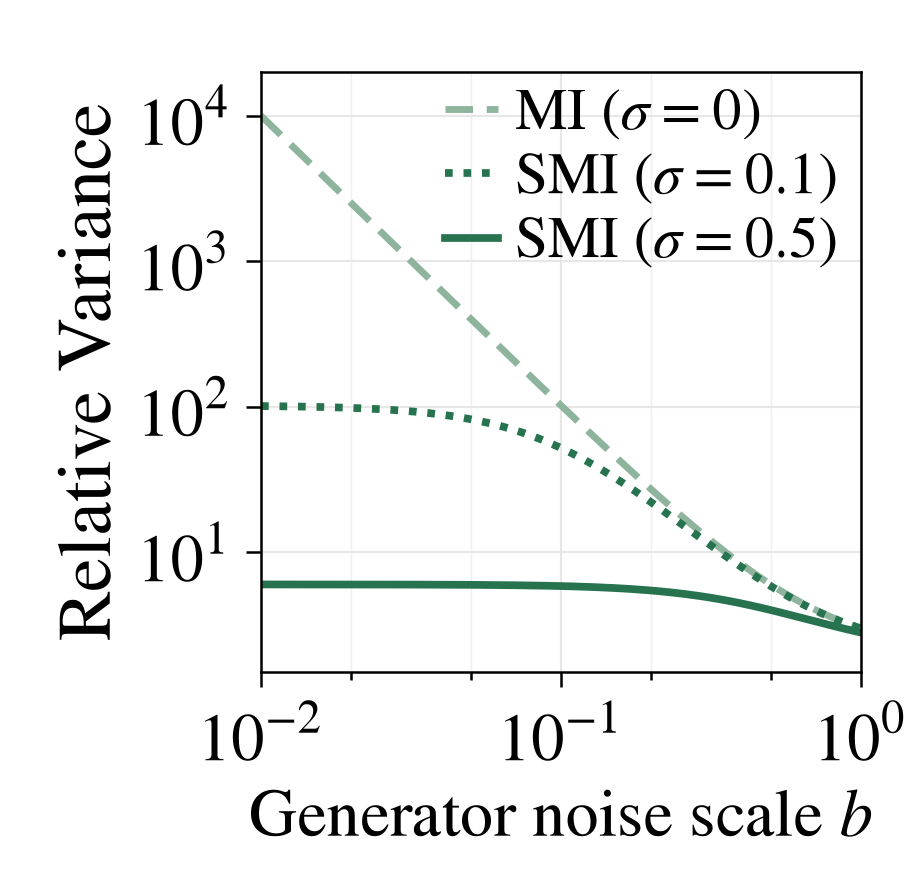}
    \caption{
    \textbf{Gaussian spreading near a singular limit.}
    \textbf{Left pair:} {\color{green}Joint} and
    {\color{orange}product-of-marginals} samples at
    $b=0$, before (leftmost) and after (middle) spreading with $\sigma=0.5$.
    \textbf{Right:} Relative variance of the pathwise estimator
    of $\partial_a I(\widetilde{X};Y)$ using exact scores.
    }
    \label{fig:smi-singularity}
\end{figure}

The model in \Cref{fig:smi-singularity} illustrates how
Gaussian spreading regularises score-based gradient estimation
near singular generator outputs. In particular, we consider $X=aY+bZ$ and $\widetilde{X}=X+\sigma\epsilon$,
    with $a=1$ and independent standard Gaussian $Y,Z,\epsilon$.
At $b=0$, $X\mid Y$ is a point mass, so its score is undefined
and $I(X;Y)=\infty$.
Any $\sigma>0$ yields smooth, positive conditional and marginal
densities, making the score difference in \Cref{eq:smi-gradient}
well-defined.
More generally, for $a>0$ and $b^2+\sigma^2>0$, the single-sample
pathwise estimator $G_\sigma$ of
$\partial_a I(\widetilde{X};Y)$ in \Cref{eq:smi-gradient},
evaluated using exact scores, satisfies
\begin{equation}
    \frac{\operatorname{Var}(G_\sigma)}
         {(\mathbb{E}G_\sigma)^2}
    =2+\frac{a^2}{b^2+\sigma^2}.\label{eq:toy-relative-variance}
\end{equation}
Without spreading ($\sigma=0$), the relative variance \Cref{eq:toy-relative-variance} diverges as $b\to0$, whereas for any fixed $\sigma>0$ it remains bounded by $2+a^2/\sigma^2$.
This reveals an advantage of SMI in highly correlated settings: Gaussian spreading preserves well-defined scores and bounded relative gradient variance near the deterministic limit, where the unspread law becomes singular and MI diverges.


\subsection{Estimating SMI gradients through density ratios}
\label{sec:smi-class-ratio}

The main challenge in estimating
$\nabla_\theta I_\theta(\tilde X;Y)$ using
\Cref{eq:smi-gradient} is estimating the score difference:
{
\fontsize{9.5pt}{11.4pt}\selectfont
\begin{equation}
    \nabla_{\tilde x}\log\tilde p_\theta(\tilde x\mid y)
      -\nabla_{\tilde x}\log\tilde p_\theta(\tilde x)
    =\nabla_{\tilde x}\log
        \frac{\tilde p_\theta(\tilde x\mid y)}
             {\tilde p_\theta(\tilde x)}.
\label{eq:smi-score-diff}
\end{equation}}

Following \cite{chen_diffratio_2025}, we estimate this field by
differentiating a learned log density ratio. Let
$c_\phi(\tilde x,y)\in(0,1)$ predict whether a pair is drawn from the
\emph{joint} distribution (label $1$) or the product of its marginals
(label $0$). With $\theta$ fixed, minimise the balanced cross-entropy
objective
\begin{multline}
    \mathcal{L}_{\mathrm{clf}}(\phi)
    =-\mathbb{E}[\log c_\phi(\tilde X,Y)]
       \\-\mathbb{E}[\log(1-c_\phi(\tilde X,\bar Y))],
\end{multline}
where $\bar Y\sim p_Y$ is independent of $(\tilde X,Y)$.
When the spread densities are positive, as for Gaussian spreading,
the population objective over all measurable classifiers is minimised by
\begin{equation}
    c_\phi^*(\tilde x,y)
    =\frac{\tilde p_\theta(\tilde x\mid y)}
           {\tilde p_\theta(\tilde x\mid y)+\tilde p_\theta(\tilde x)},
\end{equation}
uniquely up to null sets \citep{gutmann_noisecontrastive_2010}. The score difference can be estimated through
\begin{align}
    \nabla_{\tilde x}\log
        \frac{\tilde p_\theta(\tilde x\mid y)}
             {\tilde p_\theta(\tilde x)}=\nabla_{\tilde x}\log\frac{c_\phi^*(\tilde x, y)}{1-c_\phi^*(\tilde x, y)}.\label{eq:classifier-to-score-diff}
\end{align}

\subsection{SMI with diffusion kernels}
\label{sec:smi-with-diffusion-kernels}
We use Gaussian noising kernels of the form employed in diffusion
models \citep{song_scorebased_2021,ho_denoising_2020}. Write $X_0:=g_\theta(Y,Z)$ and
let $X_t$ denote the corresponding spread variable at noise level $t$:
\begin{equation}
    k_t(x_t\mid x_0)
    =\mathcal{N}(x_t;\alpha_t x_0,\sigma_t^2I_d).
\end{equation}

The corresponding reparameterization is
\begin{equation}
    X_t=h_t(X_0,\epsilon)
       =\alpha_tX_0+\sigma_t\epsilon,
    \ \ \epsilon\sim\mathcal{N}(0,I_d).
    \label{eq:diffusion-noising-x0}
\end{equation}

The schedule is fixed with respect to $\theta$. We assume
$\mathbb{E}\|X_0\|^2<\infty$ and use $\alpha_t,\sigma_t>0$
at the sampled training levels. Positive noise gives smooth,
positive conditional and marginal densities, denoted by
$p_\theta(x_t\mid y)$ and $p_\theta(x_t)$, respectively.
The parameter-gradient identity additionally requires the
differentiability and integrability assumptions of
\Cref{prop:smi-gradient}.

Since $I_\theta(X_t;Y)=I_\theta(X_0+\sqrt{\tau_t}\epsilon;Y)$ (see \Cref{app:effective-noise-variance-proof}),
the effective noise variance is $\tau_t=\sigma_t^2/\alpha_t^2$.
Increasing $\tau_t$ reduces the information about $Y$, whereas
$\tau_t\to0$ recovers the unspread MI, as established in
\Cref{thm:smi-information-dissipation}. 
Small effective noise variances retain fine-scale dependence, while larger variances measure dependence that persists under stronger perturbations. To control dependence across these scales, we average over $t\sim p_t$, with the sampling distribution $p_t$ and nonnegative weights $w_{\mathrm{SMI}}(t)$ determining the contribution of each scale:
\begin{equation}
    \mathcal{L}_{\mathrm{SMI}}(\theta)
    =\mathbb{E}_{t}\!\left[
        w_{\mathrm{SMI}}(t)I_\theta(X_t;Y)
    \right].
    \label{eq:smi-objective}
\end{equation}
The time-dependent classifier is trained with
\begin{align}
    \mathcal{L}_{\mathrm{clf}}(\phi;t)
    &=-\mathbb{E}[\log c_\phi(X_t,Y,t)]\notag\\
    &\qquad-\mathbb{E}[\log(1-c_\phi(X_t,\bar Y,t))],\\
    \mathcal{L}_{\mathrm{clf}}(\phi)
    &=\mathbb{E}_{t}\!\left[
        w_{\mathrm{clf}}(t)\mathcal{L}_{\mathrm{clf}}(\phi;t)
    \right],
    \label{eq:smi-classifier-objective}
\end{align}
where $\bar Y\sim p_Y$ is independent of $(Y,Z,\epsilon,t)$.
The distribution $p_t$ and positive weights
$w_{\mathrm{SMI}},w_{\mathrm{clf}}$ are fixed with respect to $\theta$
and chosen so that the SMI objective is finite. A sufficient choice is
to sample a compact time interval on which the scales and weights
are bounded and $\sigma_t$ is bounded away from zero. We assume that
the classifier loss is finite at the fitted parameters and that
differentiation of SMI can be interchanged with the time expectation. Let
\begin{equation}
    v_t(x_t,y)\equiv\nabla_{x_t}\log
        \frac{p_\theta(x_t\mid y)}{p_\theta(x_t)},
    \label{eq:smi-diffusion-field}
\end{equation}
then \Cref{prop:smi-gradient} gives
\begin{align}
    &\nabla_\theta\mathcal{L}_{\mathrm{SMI}}(\theta) \nonumber\\
    =\,&\mathbb{E}\!\left[
        w_{\mathrm{SMI}}(t)\alpha_t
        \left(\frac{\partial g_\theta(y,z)}{\partial\theta}\right)^\top
        v_t(x_t,y)
    \right].
    \label{eq:smi-weighted-gradient}
\end{align}
where $v_t(x,y)$ is estimated through the classifier $c_\phi$ as \Cref{eq:classifier-to-score-diff}. The effect of estimation error in $v_t$ is analysed in \Cref{prop:smi-classifier-error}.

\subsection{Optimising SMI in practice}
SMI controls dependence but does not determine the output distribution; for example, a constant generator achieves zero SMI. We therefore use it as a regulariser alongside a task-specific loss, such as a distribution-matching, reconstruction, or prediction loss.
We therefore use it as a regulariser alongside a task-specific objective.
It could alternatively be imposed as an explicit dependence constraint.
Specifically, we optimise
\begin{align}
    \min_\theta\;
    \mathcal{L}_{\mathrm{task}}(\theta)
    + \beta\mathcal{L}_{\mathrm{SMI}}(\theta).
    \label{eq:optimise-smi-with-task-loss}
\end{align}
Here, $\beta>0$ encourages dependence minimisation, $\beta<0$ encourages maximisation, and $|\beta|$ controls its strength.
\Cref{alg:smi} summarises the training procedure.

\begin{algorithm}[t]
\caption{Dependence control with SMI}
\label{alg:smi}
\small
\KwInput{
    Generator $g_\theta$; classifier $c_\phi$;
    batch size $B$; classifier updates $N_{\mathrm{clf}}$;
    SMI strength $\beta$;
    learning rates $\eta_\theta,\eta_\phi$.
}
\While{the stopping criterion is not met}{
    \tcp{Classifier updates:\hspace{0.6em}hold $\theta$ fixed}
    \For{$j=1,\ldots,N_{\mathrm{clf}}$}{
        $y_i,\bar y_i\sim p_Y$, $z_i\sim q_Z$,
        $t_i\sim p_t$, and
        $\epsilon_i\sim\mathcal{N}(0,I_d)$\;

        $x_i\leftarrow$
        \Cref{eq:diffusion-noising-x0}\;
        $\widehat{\mathcal{L}}_{\mathrm{clf}}\gets$\Cref{eq:smi-classifier-objective}\;

        $\phi\gets\phi-\eta_\phi\nabla_\phi\widehat{\mathcal{L}}_{\mathrm{clf}}$\;
    }

    \tcp{Generator update:\hspace{0.6em}hold $\phi$ fixed}
    $y_i\sim p_Y$, $z_i\sim q_Z$,
        $t_i\sim p_t$, and
        $\epsilon_i\sim\mathcal{N}(0,I_d)$\;
        
    $x_i\gets$\Cref{eq:diffusion-noising-x0}\;

    $v_i\gets$ \Cref{eq:classifier-to-score-diff}\;

    $\nabla_\theta\widehat{\mathcal{L}}_\mathrm{SMI}\gets$ \Cref{eq:smi-weighted-gradient}\;

    Compute a minibatch task loss
 $\widehat{\mathcal{L}}_{\mathrm{task}}$\;

    $\displaystyle
        \theta\gets\theta-\eta_\theta
        \left(
            \nabla_\theta\widehat{\mathcal{L}}_{\mathrm{task}}
            +\beta\,
            {\nabla_\theta\widehat{\mathcal{L}}_{\mathrm{SMI}}}
        \right)
    $\;
}
\end{algorithm}
\section{Theoretical Analysis}
\label{sec:smi-theoretical-analysis}
We analyse SMI under the Gaussian spreading model introduced in
\Cref{sec:smi-with-diffusion-kernels}, focusing on regularity and
bounds on gradient sampling and estimation errors.
All proofs and supplementary results are provided in
\Cref{sec:smi-theory-proofs}.
The appendix additionally compares the relative variance of
exact-score SMI and standard-MI gradient estimators in a binary
Gaussian model (\Cref{sec:smi-theory-comparison}) and characterizes
information dissipation, limiting behavior, and scale sensitivity
across noise levels (\Cref{sec:smi-theory-dissipation}).


\subsection{Regularity and gradient moments}
\label{sec:smi-theory-regularity}
\label{sec:smi-theory-gradient}
Gaussian spreading gives the conditional and marginal distributions
smooth, positive densities with common support $\mathbb{R}^d$ and
finite SMI, even when $X_0$ has no density
(\Cref{thm:smi-smooth-finite}). For $\alpha_t\ne0$, it also preserves
the equivalence between zero SMI and independence. Parameter
differentiability still requires the assumptions of
\Cref{prop:smi-gradient}. Additional bounds on spatial derivatives
and a sufficient condition for strong log-concavity under bounded
generator outputs are given in \Cref{thm:smi-field-regularity}.

By \Cref{prop:smi-gradient}, a single-sample estimator of the
fixed-level SMI gradient is
\begin{equation}
    G_t=\alpha_t
        \left(\frac{\partial g_\theta(Y,Z)}{\partial\theta}\right)^{\!\top}
        v_t(X_t,Y).
    \label{eq:theory-single-gradient}
\end{equation}
Here $G_t$ is a random vector whose dependence on $(Y,Z,\epsilon)$
is suppressed for brevity.
Singular unspread laws may lack ambient-space densities, so the
scores appearing in the MI-gradient identity need not exist
\citep{song_generative_2019}.
Controlling the second moment of stochastic gradient estimates is
standard in convergence analyses of stochastic gradient methods
\citep{bottou_optimization_2018}.
The following result shows that, at every fixed positive noise level,
Gaussian spreading yields an exact-score SMI gradient estimator with
finite second moment under a bounded generator Jacobian, without
requiring bounded generator outputs.
Consequently, averaging $B$ independent samples gives an
$O(B^{-1})$ mean-squared sampling-error bound.


\begin{proposition}[Second-moment bounds for SMI gradients]
\label{thm:smi-gradient-moments}
Assume the regularity conditions of \Cref{prop:smi-gradient} and,
for some $M<\infty$,
\begin{equation}
    \left\|\frac{\partial g_\theta(Y,Z)}{\partial\theta}
    \right\|_{\mathrm{op}}\le M
    \qquad\text{almost surely}.
    \label{eq:theory-jacobian-bound}
\end{equation}
Then $\mathbb{E}[G_t]=\nabla_\theta I_\theta(X_t;Y)$ and %
\vspace{-1em}
\par
\begingroup
\fontsize{9.5}{11.5}\selectfont
\noindent
\begin{minipage}{0.48\linewidth}
\begin{equation}
    \mathbb{E}\|v_t(X_t,Y)\|^2 \le \frac{d}{\sigma_t^2},
    \label{eq:theory-score-moment}
\end{equation}
\end{minipage}\hfill
\begin{minipage}{0.48\linewidth}
\begin{equation}
    \mathbb{E}\|G_t\|^2 \le \frac{M^2d\alpha_t^2}{\sigma_t^2}.
    \label{eq:theory-gradient-moment}
\end{equation}
\end{minipage}
\par
\endgroup
For $B$ independent draws of $(T,Y,Z,\epsilon)$, the weighted
estimator satisfies
\begin{multline}
    \mathbb{E}\left\|
        \frac1B\sum_{i=1}^B w_{\mathrm{SMI}}(t_i)G_{t_i,i}
        -\nabla_\theta\mathcal{L}_{\mathrm{SMI}}
    \right\|^2\\
    \le\frac{M^2d}{B}\,
       \mathbb{E}_t\!\left[
           w_{\mathrm{SMI}}(t)^2\frac{\alpha_t ^2}{\sigma_t^2}
       \right],
    \label{eq:theory-weighted-gradient-mse}
\end{multline}
provided the RHS is finite and differentiation can be
interchanged with the time expectation in \Cref{eq:smi-objective}.
\end{proposition}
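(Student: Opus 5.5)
The unbiasedness claim $\mathbb{E}[G_t]=\nabla_\theta I_\theta(X_t;Y)$ is immediate. We take \Cref{prop:smi-gradient} with the reparameterization $X_t=\alpha_t g_\theta(Y,Z)+\sigma_t\epsilon$ from \Cref{eq:diffusion-noising-x0}. Then $\partial X_t/\partial\theta=\alpha_t\,\partial g_\theta/\partial\theta$, and $G_t$ in \Cref{eq:theory-single-gradient} is exactly the integrand of \Cref{eq:smi-gradient}.

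The core step is the score-moment bound \Cref{eq:theory-score-moment}. Our plan is to avoid bounding the two scores separately and to use a Tweedie-type representation instead. Let $s_t(x_t\mid y)=\nabla\log p_\theta(x_t\mid y)$ and $s_t(x_t)=\nabla\log p_\theta(x_t)$. Differentiating the Gaussian convolution under the integral (justified by Gaussian tails and $\mathbb{E}\|X_0\|^2<\infty$) gives
\begin{equation*}
s_t(x_t\mid y)=\mathbb{E}\!\left[-\tfrac{1}{\sigma_t}\epsilon\,\middle|\,X_t=x_t,Y=y\right],
\qquad
s_t(x_t)=\mathbb{E}\!\left[-\tfrac{1}{\sigma_t}\epsilon\,\middle|\,X_t=x_t\right].
\end{equation*}
Write $U=-\epsilon/\sigma_t$, and let $\mathcal{F}_1=\sigma(X_t)\subset\mathcal{F}_2=\sigma(X_t,Y)$. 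Then $v_t(X_t,Y)=\mathbb{E}[U\mid\mathcal{F}_2]-\mathbb{E}[U\mid\mathcal{F}_1]$. This is a martingale increment, so by orthogonality of increments (Pythagoras in $L^2$)
\begin{equation*}
\mathbb{E}\|v_t\|^2=\mathbb{E}\|\mathbb{E}[U\mid\mathcal{F}_2]\|^2-\mathbb{E}\|\mathbb{E}[U\mid\mathcal{F}_1]\|^2\le\mathbb{E}\|\mathbb{E}[U\mid\mathcal{F}_2]\|^2\le\mathbb{E}\|U\|^2=\frac{d}{\sigma_t^2},
\end{equation*}
where the last inequality is conditional Jensen. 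The main obstacle is the rigorous justification of the conditional-expectation representation when $X_0\mid Y$ is singular. We will handle it by writing $p_\theta(x_t\mid y)=\mathbb{E}[k_t(x_t\mid X_0)\mid Y=y]$ and using dominated convergence with the bound $\|\nabla k_t\|\le C(1+\|x_t\|+\|X_0\|)k_t$. The resulting Bayes formula identifies the ratio with the posterior expectation of $-(x_t-\alpha_tX_0)/\sigma_t^2$, which equals $\mathbb{E}[U\mid\cdot]$. A regular conditional distribution exists because $Y$ takes values in a standard Borel space.

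The gradient moment then follows from the Jacobian bound: $\|G_t\|\le\alpha_t M\|v_t\|$ almost surely, so \Cref{eq:theory-gradient-moment} holds. For the weighted minibatch estimator, the summands $w_{\mathrm{SMI}}(t_i)G_{t_i,i}$ are i.i.d. Conditioning on $t$ and using unbiasedness at each level, together with the assumed interchange of differentiation and time expectation, gives mean $\nabla_\theta\mathcal{L}_{\mathrm{SMI}}$. The mean-squared error is therefore $B^{-1}$ times the variance, which is at most $B^{-1}$ times the second moment. Applying the fixed-level bound inside $\mathbb{E}_t$ yields \Cref{eq:theory-weighted-gradient-mse}, with finiteness of the right-hand side ensuring all quantities are well defined.
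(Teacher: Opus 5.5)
Your proposal is correct and follows essentially the same route as the paper. The paper also writes $v_t(X_t,Y)=-\sigma_t^{-1}\bigl(\mathbb{E}[\epsilon\mid X_t,Y]-\mathbb{E}[\epsilon\mid X_t]\bigr)$ (its posterior-representation lemma) and bounds $\mathbb{E}\|v_t\|^2\le d/\sigma_t^2$ by orthogonality of nested conditional expectations and conditional Jensen; it then applies the Jacobian bound and the i.i.d.\ centred-sum variance identity for the weighted minibatch estimator, exactly as you do.

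The only minor difference is how differentiation under the integral is justified. You use moments of $X_0$, whereas the paper notes that each Gaussian-kernel derivative is a bounded function of its argument, which needs no moment assumption. Both justifications work.
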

\Cref{thm:smi-gradient-moments} demonstrates that the gradient second-moment bound
scales linearly with the signal-to-noise ratio
$\alpha_t^2/\sigma_t^2$, yielding quantitative control of minibatch
sampling error. For estimation across noise levels, the integrability
criterion guides the choice of time distribution and weights,
particularly near zero noise.
The proof is given in \Cref{app:smi-gradient-moments-proof}.

\subsection{Effect of classifier estimation error}
\label{sec:smi-theory-classifier-error}
In practice, differentiating the classifier's log-odds gives an
approximation $\widehat v_t$ to the score difference $v_t$.
The following result quantifies how its error
$\delta_t$ propagates to gradient bias and minibatch sampling error.

\begin{proposition}[Effect of classifier estimation error]
\label{prop:smi-classifier-error}
Under the assumptions of \Cref{thm:smi-gradient-moments}, fix a
differentiable classifier $c_\phi$ taking values in $(0,1)$ and define
\begin{align}
    \widehat v_t(x_t,y)
    &=\nabla_{x_t}\log
        \frac{c_\phi(x_t,y,t)}{1-c_\phi(x_t,y,t)},\\
    \delta_t^2
    &=\mathbb{E}\|\widehat v_t(X_t,Y)-v_t(X_t,Y)\|^2<\infty.
    \label{eq:theory-classifier-error}
\end{align}
Let $\widehat G_t=\alpha_t
        \left(\frac{\partial g_\theta(Y,Z)}{\partial\theta}\right)^{\!\top}
        \widehat v_t(X_t,Y)$, then
\begin{equation}
    \|\mathbb{E}\widehat G_t-\nabla_\theta I_\theta(X_t;Y)\|
    \le M\alpha_t\delta_t.
    \label{eq:theory-gradient-bias}
\end{equation}
For $B$ independent samples,
\begin{multline}
    \mathbb{E}\left\|
        \frac1B\sum_{i=1}^B\widehat G_{t,i}
        -\nabla_\theta I_\theta(X_t;Y)
    \right\|^2\\
    \le\frac{2M^2\alpha_t^2}{B}
       \left(\frac d{\sigma_t^2}+\delta_t^2\right)
       +M^2\alpha_t^2\delta_t^2.
    \label{eq:theory-estimated-gradient-mse}
\end{multline}
For a randomly fitted classifier and generator, the statements hold
conditionally on their fitted state, using fresh evaluation samples.
\end{proposition}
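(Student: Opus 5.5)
The plan is to write the estimated single-sample gradient as the exact one plus an error term,
\begin{equation*}
    \widehat G_t = G_t + E_t,\qquad
    E_t := \alpha_t\left(\frac{\partial g_\theta(Y,Z)}{\partial\theta}\right)^{\!\top}\bigl(\widehat v_t(X_t,Y)-v_t(X_t,Y)\bigr),
\end{equation*}
and then invoke \Cref{thm:smi-gradient-moments} for $G_t$. Since $c_\phi$, $\theta$ and the time level are fixed (or conditioned on), $\widehat v_t$ is a deterministic measurable function, so $\widehat G_t$ depends only on the fresh draw $(Y,Z,\epsilon)$. The Jacobian bound \eqref{eq:theory-jacobian-bound} gives
\begin{equation*}
    \|E_t\|\le M\alpha_t\|\widehat v_t-v_t\|
    \qquad\text{almost surely},
\end{equation*}
hence $\mathbb{E}\|E_t\|^2\le M^2\alpha_t^2\delta_t^2<\infty$. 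In particular $E_t$ is integrable, and $\widehat G_t$ is square-integrable by \eqref{eq:theory-gradient-moment}.

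For the bias \eqref{eq:theory-gradient-bias}, use $\mathbb{E}[G_t]=\nabla_\theta I_\theta(X_t;Y)$ from \Cref{thm:smi-gradient-moments}. Then
\begin{equation*}
    \|\mathbb{E}\widehat G_t-\nabla_\theta I_\theta(X_t;Y)\|
    =\|\mathbb{E}E_t\|
    \le\mathbb{E}\|E_t\|
    \le\bigl(\mathbb{E}\|E_t\|^2\bigr)^{1/2}
    \le M\alpha_t\delta_t,
\end{equation*}
by Jensen's inequality (norm convexity) followed by Cauchy--Schwarz.

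For the minibatch bound \eqref{eq:theory-estimated-gradient-mse}, use the bias--variance decomposition for i.i.d.\ vectors. With $\bar G=\frac1B\sum_i\widehat G_{t,i}$ and $b=\mathbb{E}\widehat G_t-\nabla_\theta I_\theta(X_t;Y)$, independence kills the cross terms, so
\begin{equation*}
    \mathbb{E}\|\bar G-\nabla_\theta I_\theta\|^2
    =\tfrac1B\operatorname{tr}\operatorname{Cov}(\widehat G_t)+\|b\|^2
    \le\tfrac1B\mathbb{E}\|\widehat G_t\|^2+\|b\|^2.
\end{equation*}
Next apply $\|a+c\|^2\le2\|a\|^2+2\|c\|^2$, together with \eqref{eq:theory-gradient-moment} and the bound on $E_t$:
\begin{equation*}
    \mathbb{E}\|\widehat G_t\|^2
    \le 2\mathbb{E}\|G_t\|^2+2\mathbb{E}\|E_t\|^2
    \le 2M^2\alpha_t^2\left(\frac{d}{\sigma_t^2}+\delta_t^2\right).
\end{equation*}
Inserting the squared bias bound $\|b\|^2\le M^2\alpha_t^2\delta_t^2$ yields exactly the stated inequality.

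The main subtlety is not algebraic. It is justifying that the fixed classifier is independent of the evaluation samples, so that the i.i.d.\ decomposition and the expectations are legitimate. I would handle this by conditioning on the $\sigma$-algebra generated by the fitted $(\theta,\phi)$ and applying all the above steps conditionally, as the final sentence of the statement indicates. The remaining point is that \eqref{eq:theory-score-moment} only requires the exact scores, which \Cref{thm:smi-gradient-moments} already supplies.
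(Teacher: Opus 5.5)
Your proposal is correct and follows the paper's proof essentially step for step. The paper uses the same error term $\widehat G_t-G_t=\alpha_tJ_\theta^\top\Delta_t$, bounds the bias by $\alpha_tM\,\mathbb{E}\|\Delta_t\|\le\alpha_tM\delta_t$, bounds $\mathbb{E}\|\widehat G_t\|^2$ via $\|u+v\|^2\le2\|u\|^2+2\|v\|^2$ together with the exact-score moment bound, and finishes with the bias--variance decomposition for the minibatch mean (variance bounded by the second moment), all conditionally on the fitted classifier and generator as you propose.
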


The bound separates $O(B^{-1})$ sampling error from a squared-bias
term controlled by $\delta_t^2$. Gaussian spreading increases overlap
between the joint and product-of-marginals distributions, which can
ease density-ratio estimation by mitigating near-separation
\citep{gutmann_noisecontrastive_2010,rhodes_telescoping_2020}.
Together with derivative control
(\Cref{prop:smi-risk-counterexample}), such as Sobolev regularization
\citep{xie_sobolev_2026}, this provides a rationale for more reliable
score-difference estimation.
The proof and weighted extension are in
\Cref{app:smi-classifier-error-proof,cor:smi-weighted-classifier-error}.

\definecolor{smirow}{HTML}{ECF5EF}

\section{Experiments}
\label{sec:experiments}

We evaluate SMI on synthetic and real-world tasks spanning dependence minimisation and maximisation.
We compare against MI-based methods and, where appropriate, task-specific baselines.
Within each task, MI-based methods share the same generator or representation architecture.
See \cref{app:experiments} for implementation details and additional results.

\subsection{Synthetic Experiments}
\label{sec:exp-synthetic}

\begin{figure*}[!t]
    \centering
    \includegraphics[width=\textwidth]{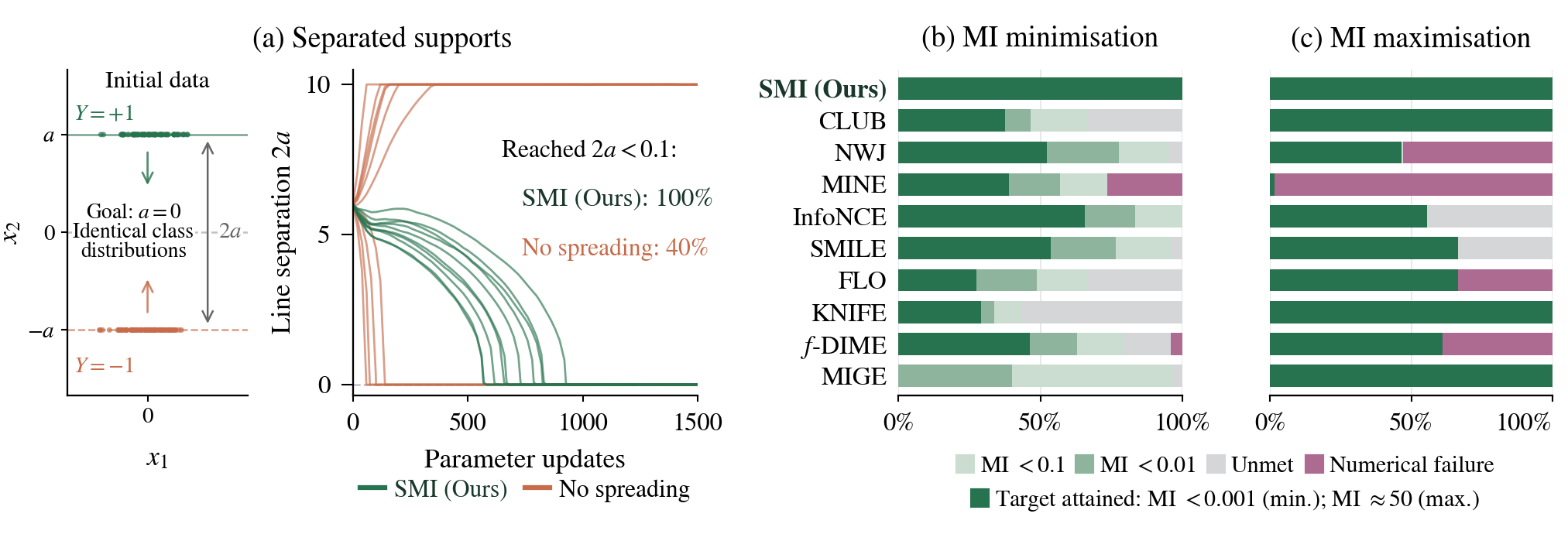}
    \caption{\textbf{Synthetic experiments.}
    (a) Conditional supports $x_2=\pm a$ and optimisation of their separation $2a$.
    (b,c) MI minimisation and maximisation results, pooled over Gaussian, cubic, asinh, signed-power and spiral data; $d\in\{20,100,200\}$; a shared correlation coefficient or one per latent coordinate pair; and ten seeds.}
    \label{fig:synthetic-summary}
\end{figure*}

\paragraph{Support geometry.}
Let $X_a=(Z,aY)$, with independent $Z\sim\mathcal N(0,1)$ and $Y\sim\operatorname{Unif}\{-1,+1\}$.
The conditional supports are parallel lines separated by $2a$ (\cref{fig:synthetic-summary}a).
Independence occurs at $a=0$, but $I(X_a;Y)=\log 2$ for every $a>0$, so its gradient cannot guide their contraction.
Gaussian spreading makes the objective sensitive to this separation.
SMI reaches $2a<0.1$ in all ten runs, versus four of ten without spreading, although those four reach the target sooner.

\paragraph{Gradient accuracy.}
Following \citet{chen_diffratio_2025}, we compare SMI with subtracting conditional and marginal scores fitted separately by denoising score matching (DSM) \citep{vincent_connection_2011}, using fixed Gaussian-smoothed discrete distributions.
With errors pooled over noise levels, SMI has lower mean relative errors for both score differences and parameter gradients (\cref{tab:synthetic-gradient}).

\paragraph{MI optimisation.}
We optimise correlations between $d$-dimensional Gaussian vectors under invertible transformations, using up to $5{,}000$ generator updates (\cref{fig:synthetic-summary}b,c).
SMI samples $\sigma\sim\operatorname{LogUniform}(0.01,1)$; evaluation uses analytic MI before spreading.
Starting from $10$ nats, SMI keeps MI below $10^{-3}$ throughout the final $1{,}000$ updates in all minimisation runs.
At $d=100$ and $200$, the best baseline attains this criterion in only $50\%$ and $48\%$ of runs, respectively.
Starting from $0.5$ nats, SMI, CLUB \citep{cheng_club_2020}, KNIFE \citep{pichler_differential_2022} and MIGE \citep{wen_mutual_2020} reach the $50$-nat maximisation target in every run without numerical failure.

\begin{table}[!t]
    \centering\small
    \setlength{\tabcolsep}{4pt}
    \caption{\textbf{Gradient accuracy.} Relative errors ($\downarrow$), mean $\pm$ SD over ten seeds.}
    \label{tab:synthetic-gradient}
    \begin{tabularx}{\columnwidth}{l*{2}{>{\centering\arraybackslash}X}}
        \toprule
        Estimator & Score difference & Parameter gradient \\
        \midrule
        \rowcolor{smirow}
        \textbf{SMI} & $\mathbf{0.408\pm0.030}$ & $\mathbf{0.275\pm0.031}$ \\
        DSM scores & $2.608\pm0.376$ & $6.225\pm1.834$ \\
        \bottomrule
    \end{tabularx}
\end{table}

\paragraph{Computational costs.}
SMI uses a single density-ratio network and has time and memory costs that scale linearly with minibatch size for a fixed architecture.
Empirically, its per-iteration cost is comparable to other classifier-based MI-control methods, while using substantially less memory than MIGE.
See \cref{app:synthetic-cost} for detailed results and complexity analysis.

\subsection{Missing Data Imputation}
\label{sec:real-world}

\begin{table*}[t]
\centering\small
\caption{\textbf{Missing data imputation.} Ranks across all methods ($\downarrow$), averaged over ten datasets at each missingness rate; mean $\pm$ SD over three seeds. Failed runs rank last.}
\label{tab:tabular-imputation}
\begin{tabularx}{\textwidth}{l*{6}{>{\centering\arraybackslash}X}}
\toprule
 & \multicolumn{2}{c}{(a) 30\% MCAR} & \multicolumn{2}{c}{(b) 60\% MCAR} & \multicolumn{2}{c}{(c) 80\% MCAR} \\
\cmidrule(lr){2-3}\cmidrule(lr){4-5}\cmidrule(lr){6-7}
Method & MMD$^2$ rank & RMSE rank & MMD$^2$ rank & RMSE rank & MMD$^2$ rank & RMSE rank \\
\midrule
\multicolumn{7}{l}{\textit{MI-based methods}} \\
\rowcolor{smirow}
\textbf{SMI (Ours)} & $\mathbf{5.43\pm0.31}$ & $\mathbf{7.63\pm0.15}$ & $\mathbf{4.10\pm0.10}$ & $7.63\pm0.06$ & $\mathbf{2.87\pm0.21}$ & $\underline{7.33\pm0.35}$ \\
CLUB & $12.30\pm0.66$ & $9.93\pm0.21$ & $11.97\pm0.21$ & $\mathbf{7.37\pm0.12}$ & $11.00\pm0.20$ & $\mathbf{5.33\pm0.12}$ \\
NWJ & $11.90\pm0.20$ & $13.63\pm0.23$ & $8.43\pm0.40$ & $11.80\pm0.20$ & $6.47\pm0.83$ & $10.53\pm0.95$ \\
MINE & $10.33\pm0.83$ & $12.90\pm0.40$ & $7.43\pm0.32$ & $11.27\pm0.15$ & $7.40\pm0.66$ & $10.40\pm0.53$ \\
InfoNCE & $11.67\pm0.12$ & $13.50\pm0.17$ & $9.20\pm0.20$ & $11.20\pm0.17$ & $7.53\pm0.29$ & $9.20\pm0.30$ \\
SMILE & $10.23\pm0.40$ & $12.03\pm0.06$ & $7.80\pm0.36$ & $10.50\pm0.17$ & $6.83\pm0.21$ & $8.13\pm0.12$ \\
FLO & $11.73\pm0.25$ & $12.97\pm0.25$ & $8.93\pm0.55$ & $13.30\pm0.40$ & $7.33\pm0.21$ & $12.50\pm0.17$ \\
KNIFE & $\underline{6.10\pm0.20}$ & $\underline{8.17\pm0.06}$ & $\underline{5.47\pm1.07}$ & $\underline{7.60\pm0.30}$ & $\underline{4.40\pm0.87}$ & $7.63\pm0.35$ \\
$f$-DIME & $9.13\pm0.68$ & $11.40\pm0.46$ & $10.43\pm1.14$ & $12.93\pm0.71$ & $11.67\pm0.97$ & $13.30\pm1.05$ \\
MIGE & $7.23\pm0.64$ & $9.87\pm0.06$ & $10.80\pm0.36$ & $15.10\pm0.10$ & $9.67\pm0.06$ & $14.33\pm0.21$ \\
\midrule
\multicolumn{7}{l}{\textit{Established imputers}} \\
GAIN & $12.83\pm1.65$ & $4.40\pm0.44$ & $15.40\pm0.10$ & $3.83\pm0.25$ & $15.27\pm0.21$ & $7.97\pm1.88$ \\
MIWAE & $6.17\pm0.51$ & $4.53\pm0.12$ & $9.63\pm0.45$ & $4.20\pm0.17$ & $10.73\pm0.21$ & $3.27\pm0.57$ \\
MIRI & $\underline{3.30\pm0.30}$ & $6.90\pm0.52$ & $\mathbf{3.20\pm0.61}$ & $9.70\pm0.95$ & $\mathbf{6.27\pm0.40}$ & $11.77\pm0.51$ \\
WGF & $\mathbf{2.33\pm0.49}$ & $5.03\pm0.06$ & $\underline{3.90\pm0.10}$ & $6.07\pm0.57$ & $\underline{7.77\pm1.21}$ & $8.73\pm1.22$ \\
HyperImpute & $3.77\pm0.06$ & $\mathbf{1.17\pm0.06}$ & $7.43\pm0.64$ & $\underline{1.87\pm0.06}$ & $9.13\pm0.87$ & $\underline{2.97\pm0.55}$ \\
MissForest & $11.53\pm0.59$ & $\underline{1.93\pm0.06}$ & $11.87\pm0.06$ & $\mathbf{1.63\pm0.23}$ & $11.73\pm0.21$ & $\mathbf{2.67\pm0.59}$ \\
\bottomrule
\end{tabularx}
\end{table*}

Following \citet{liu_minimizing_2024,yu_missing_2025}, we reduce dependence between the completed data and the missingness mask while keeping observed entries fixed:
\begin{equation}
\label{eq:imputation-mi}
\begin{aligned}
\min_\theta\; & I(X_\theta;M),\qquad\text{where}\\
X_\theta={}&M\odot X+(1-M)\odot g_\theta(M\odot X,M,Z).
\end{aligned}
\end{equation}
Here, $M_j=1$ denotes an observed entry and $Z$ provides imputation randomness.
SMI instead minimises $\mathbb E_\sigma I(X_\theta+\sigma\epsilon;M)$, with $\sigma\sim\operatorname{LogUniform}(0.01,1)$.

We impute numerical features from ten TabArena datasets \citep{erickson_tabarena_2025}, with entries missing completely at random (MCAR) \citep{rubin_inference_1976} at rates of $30\%$, $60\%$, and $80\%$.
We repeat masking and training over three seeds, sharing masks across methods and fitting preprocessing only to observed entries.
MI-control methods share a residual MLP imputer without reconstruction penalties.
Established imputers, including the MI-based WGF \citep{liu_minimizing_2024} and MIRI \citep{yu_missing_2025}, retain their task-specific procedures.

\Cref{tab:tabular-imputation} ranks methods by masked-entry RMSE and squared maximum mean discrepancy (MMD$^2$) between completed and reference rows \citep{gretton_kernel_2012}.
SMI achieves the lowest mean MMD$^2$ rank among matched MI methods at every missingness rate and among all methods at $80\%$, with no numerical failures.
HyperImpute \citep{jarrett_hyperimpute_2022} and MissForest
\citep{stekhoven_missforest_2012} achieve better mean RMSE ranks,
consistent with their iterative column-wise prediction objectives, which are more directly aligned with pointwise reconstruction.

\subsection{Nuisance Suppression on Waterbirds}
\label{sec:waterbirds}

\begin{figure}[!t]
\centering
\includegraphics[width=\columnwidth]{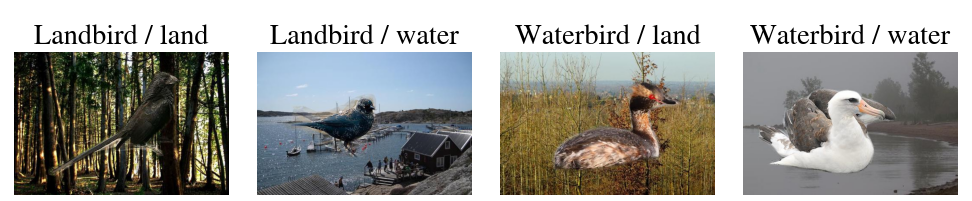}
\caption{\textbf{Waterbirds.} Training examples from the four bird--background groups.}
\vspace{-0.5em}
\label{fig:waterbirds-demo}
\end{figure}

\begin{table}[t]
\centering\small
\setlength{\tabcolsep}{1pt}
\caption{\textbf{Waterbirds.} Test results; mean $\pm$ SD over five training seeds.}
\label{tab:waterbirds-main}
\begin{tabularx}{\columnwidth}{lc*{2}{>{\centering\arraybackslash}X}}
\toprule
 & Worst-group & \multicolumn{2}{c}{Leakage (points, $\downarrow$)} \\
\cmidrule(lr){3-4}
Method & accuracy (\%, $\uparrow$) & Linear & MLP \\
\midrule
\multicolumn{4}{l}{\textit{MI-based methods}} \\
\rowcolor{smirow}
\textbf{SMI (Ours)} & $\mathbf{71.68\pm1.33}$ & $\mathbf{18.19\pm5.35}$ & $\mathbf{9.20\pm5.49}$ \\
MINE & $70.87\pm1.16$ & $26.21\pm2.42$ & $22.23\pm4.00$ \\
SMILE & $71.43\pm1.83$ & $25.34\pm3.15$ & $17.69\pm1.76$ \\
InfoNCE & $71.43\pm1.40$ & $25.59\pm3.14$ & $18.12\pm4.09$ \\
CLUB & $70.62\pm1.32$ & $24.82\pm1.78$ & $20.95\pm2.62$ \\
NWJ & $70.34\pm1.09$ & $27.18\pm4.59$ & $18.82\pm2.32$ \\
FLO & $\underline{71.59\pm0.95}$ & $\underline{20.94\pm3.08}$ & $\underline{14.39\pm2.36}$ \\
KNIFE & $65.79\pm1.91$ & $25.70\pm5.60$ & $21.67\pm4.78$ \\
$f$-DIME & $71.37\pm1.01$ & $33.69\pm4.24$ & $34.50\pm3.65$ \\
MIGE & $67.16\pm1.50$ & $30.55\pm2.21$ & $30.34\pm2.55$ \\
\midrule
\multicolumn{4}{l}{\textit{Other methods}} \\
Adversarial & $\underline{72.31\pm0.80}$ & $\underline{20.50\pm6.49}$ & $\mathbf{12.22\pm3.65}$ \\
HSIC & $\mathbf{73.05\pm0.87}$ & $\mathbf{16.12\pm1.77}$ & $15.41\pm2.15$ \\
ERM & $68.57\pm1.41$ & $36.40\pm8.88$ & $37.40\pm7.63$ \\
GroupDRO & $65.95\pm1.55$ & $21.15\pm6.29$ & $\underline{13.12\pm8.63}$ \\
DFR & $68.27\pm4.21$ & $32.91\pm6.04$ & $30.23\pm3.37$ \\
\bottomrule
\end{tabularx}
\vspace{-0.5em}
\end{table}

On Waterbirds \citep{sagawa_distributionally_2020}, $95\%$ of training examples pair landbirds with land or waterbirds with water (\cref{fig:waterbirds-demo}).
We learn a 64-dimensional representation $R_\theta$ from frozen ResNet-50 features \citep{he_deep_2016}, combining bird-class cross-entropy with class-conditional background suppression (\Cref{app:nuisance}).

Worst-group accuracy is the minimum test accuracy across four bird--background groups.
Background leakage is measured using separately fitted linear and MLP probes, as the class-averaged absolute deviation of ROC AUC from $0.5$, in percentage points (\cref{tab:waterbirds-main}).
Probe scores measure predictability rather than MI.
SMI attains the lowest mean leakage among MI-based methods under both probes, the lowest mean MLP leakage overall, and mean worst-group accuracy comparable to FLO \citep{guo_tight_2022}.
Among alternative dependence-control methods, HSIC
\citep{gretton_measuring_2005} attains higher mean worst-group accuracy and lower linear leakage, but higher MLP leakage than SMI; conditional adversarial training \citep{edwards_censoring_2016,louppe_learning_2017} attains higher worst-group accuracy but higher leakage under both probes.

\subsection{Edges-to-Photos Translation}
\label{sec:edges-control}

\begin{figure}[!t]
\centering
{\scriptsize
\makebox[0.25\columnwidth]{Input edge}%
\makebox[0.25\columnwidth]{Paired photo}%
\makebox[0.25\columnwidth]{Output 1}%
\makebox[0.25\columnwidth]{Output 2}\par}
\includegraphics[width=\columnwidth,viewport=0 0 1024 256,clip]
{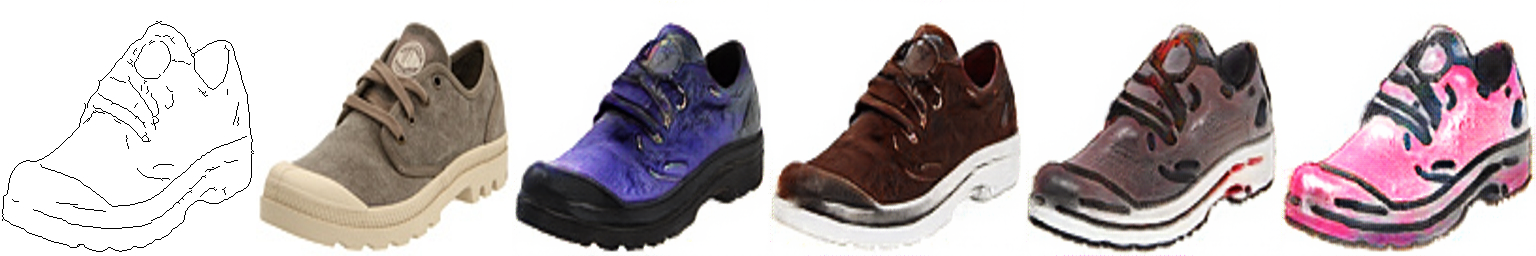}
\caption{\textbf{Edges-to-photos translation.} Input, paired photo and two SMI outputs (different latents).}
\vspace{-1em}
\label{fig:edges-demo}
\end{figure}

\begin{table}[t]
\centering\small
\setlength{\tabcolsep}{1pt}
\medmuskip=2mu
\caption{\textbf{Edges2Shoes.} Latent-code prediction $R^2$ ($\uparrow$); mean $\pm$ SD over three training seeds.}
\label{tab:edges-main}
\begin{tabularx}{\columnwidth}{l*{3}{>{\centering\arraybackslash}X}}
\toprule
 & \multicolumn{3}{c}{Latent-code prediction $R^2$ ($\uparrow$)} \\
\cmidrule(lr){2-4}
Method & Unperturbed & JPEG (75) & Resize \\
\midrule
\multicolumn{4}{l}{\textit{MI-based methods}} \\
\rowcolor{smirow}
\textbf{SMI (Ours)} & $\mathbf{0.798\pm0.008}$ & $\mathbf{0.623\pm0.045}$ & $\mathbf{0.726\pm0.032}$ \\
InfoNCE & $0.757\pm0.004$ & $0.588\pm0.032$ & $0.623\pm0.071$ \\
NWJ & $\underline{0.788\pm0.008}$ & $\underline{0.596\pm0.047}$ & $\underline{0.682\pm0.033}$ \\
FLO & $0.725\pm0.001$ & $0.503\pm0.028$ & $0.619\pm0.025$ \\
KNIFE & $0.765\pm0.007$ & $0.589\pm0.065$ & $0.678\pm0.022$ \\
MIGE & $0.742\pm0.001$ & $0.526\pm0.035$ & $0.632\pm0.031$ \\
\midrule
\multicolumn{4}{l}{\textit{Other methods and reference}} \\
Task only & $0.725\pm0.008$ & $0.495\pm0.035$ & $0.626\pm0.015$ \\
Latent $\ell_1$ & $\mathbf{0.798\pm0.001}$ & $\underline{0.584\pm0.044}$ & $\mathbf{0.686\pm0.043}$ \\
DSGAN & $\underline{0.735\pm0.005}$ & $\mathbf{0.594\pm0.026}$ & $\underline{0.682\pm0.014}$ \\
Pretrained & $0.732$ & $0.431$ & $0.655$ \\
\bottomrule
\end{tabularx}
\vspace{-0.5em}
\end{table}

We encourage a conditional generator to retain information about its latent code:
\begin{equation}
\label{eq:edges-objective}
\min_\theta\;\mathcal L_{\mathrm{task}}(\theta)+\beta I(H_\theta;Z),
\  H_\theta=\Phi(G_\theta(C,Z)),
\end{equation}
where $\beta<0$, $C$ is an edge map, $Z\sim\mathcal N(0,I_8)$ is independent of $C$, and $\Phi$ extracts frozen AlexNet features \citep{krizhevsky_imagenet_2012}, followed by PCA whitening to 256 dimensions.
The MI is computed marginally over $C$.
SMI replaces it with $\mathbb E_\sigma I(H_\theta+\sigma\epsilon;Z)$, where $\epsilon\sim\mathcal N(0,I)$ and $\sigma\sim\operatorname{LogUniform}(0.01,1)$.

On Edges2Shoes \citep{yu_finegrained_2014,isola_imagetoimage_2017}, all trained methods fine-tune the same pretrained BicycleGAN generator and encoder \citep{zhu_multimodal_2017} for $20{,}000$ generator updates, sharing the adversarial, image-reconstruction, and KL losses.
Additional baselines augment this shared objective with either latent $\ell_1$ reconstruction or DSGAN's diversity-sensitive regularisation \citep{yang_diversitysensitive_2019}.

We fit MLP regressors to predict $Z$ from frozen DINOv2 features
\citep{oquab_dinov2_2024} of generated images; \cref{tab:edges-main} reports $R^2$ averaged over latent coordinates.
Regressors trained on unperturbed outputs are reused after JPEG compression (quality 75) and resizing to half resolution and back.
SMI achieves the highest mean $R^2$ among MI-based methods in all settings and among all methods under both perturbations.
Its higher kernel Inception distance (KID) \citep{binkowski_demystifying_2018} indicates a trade-off with photo-distribution matching. See \cref{app:edges-additional} for additional results.
\section{Conclusion}
\label{sec:discussion}

We introduced SMI to mitigate singularity and poor overlap in MI optimisation.
Our analysis establishes regularity and gradient-error bounds, while experiments show strong synthetic optimisation and competitive real-world performance.
Under injective spreading, SMI vanishes exactly at independence, although small SMI need not imply small MI.
Moreover, accurate classification need not yield accurate gradients, and improved dependence control need not translate into better task performance.
Future work includes principled noise selection, derivative-aware classifier training, and task-specific trade-offs.

\clearpage

\subsection*{AI use statement}

In this work, we used generative AI tools for assist in the writing of proofs, formulate mathematical claims, provide critical ingredients for proving mathematical claims, design or provide feedback on research methodology or experiments, implement methods, and interpret results.
We have not used generative AI tools for help develop theoretical models or conceptual frameworks, propose or refine hypotheses, and clean and reformat dataset, and generate synthetic data sets, assist with translation, and support qualitative and thematic data analysis are not applicable to this work.
Additionally, we used generative AI tools for sourcing/searching for information.
We have reviewed all AI-assisted work. We checked all LLM-generated proofs, verified and tested LLM-generated code for correctness.
We take responsibility for the final content of this work,
including text, claims or artifacts produced with the aid of generative AI.

\subsubsection*{Acknowledgements}
JMHL acknowledges funding from the AI Hub in Generative Models under grant EP/Y028805/1.
RKOY acknowledges funding from the UK Engineering and Physical Sciences Research Council (EPSRC) under grant EP/L016516/1 for the University of Cambridge Centre for Doctoral Training, the Cambridge Centre for Analysis.


\bibliography{refs}

\clearpage
\appendix
\thispagestyle{empty}

\onecolumn
\etocdepthtag.toc{appendix}
\aistatstitle{Controlling Dependence in Implicit Generative Models via\\Spread Mutual Information: Supplementary Materials}

\begingroup
  \etocsettagdepth{main}{none}
  \etocsettagdepth{appendix}{subsection}
  \etocsetnexttocdepth{subsection}
  \tableofcontents
\endgroup

\section{Additional Theoretical Analysis}
\label{app:additional-theory}

We use $X_t=\alpha_tX_0+\sigma_t\epsilon$, where
$\epsilon\sim\mathcal N(0,I_d)$ is independent of $(X_0,Y)$ and
$\alpha_t,\sigma_t>0$ are fixed with respect to the generator parameters.
Write $\tau_t=\sigma_t^2/\alpha_t^2$ for the effective noise variance and
$v_t(x,y)=\nabla_x\log[p_\theta(x\mid y)/p_\theta(x)]$
for the exact score difference of the spread distributions.

\subsection{A comparison with standard MI}
\label{sec:smi-theory-comparison}
A direct comparison is possible in a binary Gaussian model, using
the posterior mean of a binary input to a Gaussian channel
\citep{guo_mutual_2005}.

\begin{proposition}[Relative gradient error in a binary Gaussian model]
\label{prop:smi-binary-comparison}
Let $Y$ be uniform on $\{-1,+1\}$ and let $Z,\epsilon$ be
independent standard Gaussian variables.
Consider $X_0=aY+bZ$, with parameter $\theta=a>0$ and fixed $b>0$.
Define $\tau_t=\sigma_t^2/\alpha_t^2$ and let
\begin{equation}
m_t=\mathbb{E}\bigl[(Y-\mathbb{E}[Y\mid X_t])^2\bigr]
=\mathbb{E}\bigl[\operatorname{Var}(Y\mid X_t)\bigr]
\end{equation}
measures the uncertainty about $Y$ remaining after
observing $X_t$.
The exact estimator in \Cref{eq:theory-single-gradient} satisfies
\begin{equation}
    \mathbb{E}G_t=\frac{a}{b^2+\tau_t}m_t,
    \qquad
    \frac{\operatorname{Var}(G_t)}{(\mathbb{E}G_t)^2}
    =\frac1{m_t}-1.
    \label{eq:theory-binary-relative-variance}
\end{equation}
Its relative variance is nonincreasing in $\tau_t$ and is no larger
than that of the standard-MI estimator, obtained at $\tau_t=0$.
The relative mean-square error of the average of $B$ independent
copies is $(1/m_t-1)/B$.
\end{proposition}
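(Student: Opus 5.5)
The plan is to reduce everything to a scalar Gaussian channel with binary input, compute the exact score difference in closed form, and read off the first two moments of $G_t$ using $Y^2=1$ and the orthogonality property of the posterior mean.

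\textbf{Reduction and score difference.} Write $U=X_t/\alpha_t=aY+bZ+\sqrt{\tau_t}\,\epsilon$. Since $bZ+\sqrt{\tau_t}\epsilon\sim\mathcal N(0,s^2)$ with $s^2:=b^2+\tau_t$, we have $U\mid Y=y\sim\mathcal N(ay,s^2)$. The marginal density of $U$ is the equal mixture of these two Gaussians. Differentiating the log of this mixture gives
\[
\nabla_u\log p(u)=-\bigl(u-a\,\mathbb E[Y\mid U=u]\bigr)/s^2,
\]
with $\mathbb E[Y\mid U=u]=\tanh(au/s^2)$. Subtracting this from the conditional score $-(u-ay)/s^2$ gives the score difference in $u$:
\[
a\bigl(y-\widehat Y\bigr)/s^2,\qquad \widehat Y:=\mathbb E[Y\mid X_t].
\]
The change of variables $x_t=\alpha_t u$ then yields
\[
v_t(x_t,y)=\frac{a\,(y-\widehat Y)}{\alpha_t s^2}.
\]
Both spread densities are smooth and positive, so all quantities are well defined. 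Because $b>0$, this also holds at $\tau_t=0$, so the standard-MI estimator is the case $\tau_t=0$ of the same formula.

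\textbf{Moments of $G_t$.} Here $\partial g_\theta/\partial a=Y$, so \Cref{eq:theory-single-gradient} gives
\[
G_t=\alpha_t\,Y\,v_t=\frac{a\,(1-Y\widehat Y)}{s^2},
\]
using $Y^2=1$. By the tower property, $\mathbb E[Y\widehat Y]=\mathbb E[\widehat Y^2]$. Hence $m_t=\mathbb E[(Y-\widehat Y)^2]=1-\mathbb E[\widehat Y^2]$, and therefore $\mathbb EG_t=a\,m_t/s^2$. For the second moment, $Y^2\widehat Y^2=\widehat Y^2$ gives
\[
\mathbb E(1-Y\widehat Y)^2=1-2\,\mathbb E\widehat Y^2+\mathbb E\widehat Y^2=m_t .
\]
So $\operatorname{Var}(G_t)=a^2(m_t-m_t^2)/s^4$, and the relative variance is $1/m_t-1$. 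As a consistency check, $\mathbb EG_t$ should equal $\partial_a I(X_t;Y)$ by \Cref{prop:smi-gradient}. This also follows from the I-MMSE relation of \citet{guo_mutual_2005}, since $I$ depends on $a$ only through the SNR $a^2/s^2$. For $B$ i.i.d.\ copies, the average is unbiased with variance $\operatorname{Var}(G_t)/B$, so the relative mean-square error is $(1/m_t-1)/B$.

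\textbf{Monotonicity (main step).} It remains to show that $m_t$ is nondecreasing in $\tau_t$; since $1/m-1$ is decreasing in $m$, the relative variance is then nonincreasing. The comparison with standard MI is the case $\tau_t=0$. I would use a degradation argument. For $\tau'>\tau$, the observation at level $\tau'$ has the same joint law with $Y$ as $U_\tau+\sqrt{\tau'-\tau}\,\epsilon'$, with $\epsilon'$ independent of $(Y,U_\tau)$. So $Y\to U_\tau\to U_{\tau'}$ is a Markov chain. The conditional expectation given the coarser $\sigma$-algebra has larger MSE, hence $m_{\tau'}\ge m_\tau$.

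I expect this to be the only conceptual step. Everything else is direct computation, with the one point needing care being the $1/\alpha_t$ factor from the change of variables, which cancels against the $\alpha_t$ in $G_t$. It is also worth noting that $m_t\in(0,1)$, since $\widehat Y=\tanh(aU/s^2)$ is not almost surely $\pm1$ and not almost surely constant. This keeps the ratio finite and positive.
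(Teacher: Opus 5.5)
Your proposal is correct and follows the paper's proof essentially step for step. It uses the same closed-form $\tanh$ posterior mean and score difference, the identities $\mathbb{E}[Y\widehat Y]=\mathbb{E}[\widehat Y^2]$ and $\mathbb{E}(1-Y\widehat Y)=\mathbb{E}(1-Y\widehat Y)^2=m_t$, and the same degradation coupling $U_{\tau'}=U_\tau+\sqrt{\tau'-\tau}\,\epsilon'$ for the monotonicity of $m_t$. The differences are cosmetic: you rescale to $U=X_t/\alpha_t$ before computing scores, whereas the paper works directly with $s_t^2=\alpha_t^2b^2+\sigma_t^2$. Your I--MMSE remark also gives an independent check of unbiasedness that the paper does not spell out.
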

As the effective noise $\tau_t$ increases, predicting $Y$ from $X_t$
becomes harder, so the minimum mean-square prediction error $m_t$
increases. Proposition~4.2 shows that this increase reduces the
relative gradient variance $1/m_t-1$ and, for a minibatch of $B$
independent samples, the relative mean-square error $(1/m_t-1)/B$.
Thus, at a fixed minibatch size, spreading improves the relative
precision of gradient estimates in this model.
This improvement accompanies a change from standard MI to the
smoothed objective $I(X_t;Y)$, whose gradient is estimated without
bias when using exact scores. The explicit estimator and proof are
in \Cref{app:smi-binary-comparison-proof}.

\subsection{Information across noise levels}
\label{sec:smi-theory-dissipation}
\Cref{prop:smi-properties} establishes that injective spreading
preserves identifiability: zero SMI is equivalent to independence.
For Gaussian spreading, we can further characterize how information
changes with the noise level. Assume $\mathbb{E}\|X_0\|^2<\infty$,
fix the generator, and let $\alpha_t,\sigma_t>0$ be continuously
differentiable. The Gaussian-channel I--MMSE identity
\citep{guo_mutual_2005} yields
\begin{equation}
    \frac{\mathrm{d}}{\mathrm{d}t}I_\theta(X_t;Y)
    =-\frac{\alpha_t^2}{2}
        \frac{\mathrm{d}}{\mathrm{d}t}
        \left(\frac{\sigma_t^2}{\alpha_t^2}\right)
        \mathbb{E}\|v_t(X_t,Y)\|^2.
    \label{eq:theory-information-dissipation}
\end{equation}
Thus, SMI is nonincreasing as the effective noise variance
$\tau_t=\sigma_t^2/\alpha_t^2$ increases, and the same score
difference used in the generator update determines the rate of
information dissipation. As $\tau_t\to0$, SMI approaches
$I_\theta(X_0;Y)$, including when this limit is infinite;
as $\tau_t\to\infty$, SMI approaches zero.
The complete statement, proof, and integrated identity are in
\Cref{thm:smi-information-dissipation}.

This scale sensitivity provides a useful optimisation signal.
For $Y$ uniform on $\{-1,+1\}$ and $X_0=aY$, standard MI is
constant for $a>0$, whereas SMI decreases smoothly as $a\to0$
at any fixed positive noise level. SMI therefore provides a
gradient for reducing the separation between conditional outputs,
with the noise schedule determining the scale at which they
are distinguished.

See \cref{app:information-across-noise} for empirical results.


\section{Proofs}
\label{sec:smi-theory-proofs}
We first prove the method's two propositions and justify the classifier
construction. We then establish Gaussian regularity, prove the three
main theoretical results, and give additional results on classifier
error and information dissipation. Unless stated otherwise, the
Gaussian results use the standing assumptions of
\Cref{sec:smi-theoretical-analysis}: $\mathbb{E}\|X_0\|^2<\infty$,
$\alpha_t,\sigma_t>0$, and independent standard Gaussian noise.
Results involving parameter gradients additionally require the
regularity assumptions of \Cref{prop:smi-gradient}.
For brevity in the proofs, write
$J_\theta=\partial g_\theta(Y,Z)/\partial\theta$.

\subsection{Proof of the effective noise variance}\label{app:effective-noise-variance-proof}
\begin{proof}
Fix $t$ with $\alpha_t>0$ and $\sigma_t\geq 0$, and define
$\tau_t=\sigma_t^2/\alpha_t^2$.
Assume $Z\sim\mathcal{N}(0,I)$ is independent of $(X_0,Y)$.
Setting $\epsilon=Z$, we have
\[
U_t:=\frac{X_t}{\alpha_t}
=X_0+\frac{\sigma_t}{\alpha_t}Z
=X_0+\sqrt{\tau_t}\epsilon.
\]
Since $U_t$ is a deterministic function of $X_t$ and, conversely,
$X_t=\alpha_t U_t$, the data-processing inequality gives
\[
I_\theta(U_t;Y)
\leq I_\theta(X_t;Y)
\leq I_\theta(U_t;Y).
\]
Therefore,
\[
I_\theta(X_t;Y)
=I_\theta(X_0+\sqrt{\tau_t}\epsilon;Y).
\]
\end{proof}

\subsection{Proof of the data processing and identifiability}
\label{app:smi-properties-proof}
\begin{proof}[Proof of \Cref{prop:smi-properties}]
Nonnegativity follows from that of mutual information. Since the
channel acts only on $X$, the variables form the Markov chain
$Y\to X\to\tilde X$. The data-processing inequality gives
\begin{equation}
    0\le I_k(X;Y)=I(\tilde X;Y)\le I(X;Y),
\end{equation}
with the upper bound understood in the extended real sense.
If $X\perp Y$, this inequality implies $I_k(X;Y)=0$.

Conversely, suppose $I_k(X;Y)=0$. Then $\tilde X\perp Y$, so
\begin{equation}
    P_{X\mid Y=y}*k=P_X*k
    \qquad\text{for }p_Y\text{-almost every }y.
\end{equation}
Here the conditional distributions exist because the variables take
values in standard Borel spaces. Injectivity of the spreading
operation yields $P_{X\mid Y=y}=P_X$ for $p_Y$-almost every $y$,
which is equivalent to $X\perp Y$.
\end{proof}

\subsection{Proof of the pathwise SMI gradient}
\begin{proposition}[Gradient of Spread Mutual Information]
\label{prop:smi-gradient-formal}
Let $\theta\in\Theta\subseteq\mathbb{R}^m$, where $\Theta$
is open, and let $Y\sim p_Y$, $Z\sim q_Z$, and
$\epsilon\sim q_\epsilon$ be mutually independent with
$\theta$-independent distributions. Define
\begin{align}
    X=g_\theta(Y,Z),
    \qquad
    \tilde X=h(X,\epsilon),
\end{align}
where $h$ is independent of $\theta$ and
$h(x,\epsilon)\sim k(\cdot\mid x)$ for a fixed transition
kernel $k$.

Assume that:
\begin{enumerate}
    \item $g_\theta(y,z)$ is continuously differentiable in
    $\theta$ and $h(x,\epsilon)$ is continuously
    differentiable in $x$, for almost every realization
    of the base variables.

    \item The induced conditional and marginal densities
    $\tilde p_\theta(\tilde x\mid y)$ and
    $\tilde p_\theta(\tilde x)$ are positive and continuously
    differentiable in $(\theta,\tilde x)$ on a common open,
    $\theta$-independent support, for $p_Y$-almost every $y$.

    \item Writing
    \begin{align}
        \ell_\theta(\tilde x,y)
        =
        \log\frac{\tilde p_\theta(\tilde x\mid y)}
                  {\tilde p_\theta(\tilde x)},
    \end{align}
    the quantities
    \begin{align}
        \ell_\theta(\tilde X,Y),\qquad
        (\nabla_\theta\ell_\theta)(\tilde X,Y),\qquad
        \left(\frac{\partial\tilde X}{\partial\theta}\right)^{\!\top}
        \nabla_{\tilde x}\ell_\theta(\tilde X,Y)
    \end{align}
    are integrable, where $\nabla_\theta\ell_\theta$ holds
    $(\tilde x,y)$ fixed. Differentiation may be interchanged
    with the expectation defining SMI and with the integrals
    normalizing the conditional and marginal densities.
\end{enumerate}
Then
\begin{equation}
    \nabla_\theta I_\theta(\tilde X;Y)
    =
    \mathbb{E}\!\left[
        \left(\frac{\partial\tilde X}{\partial\theta}\right)^{\!\top}
        \left.
        \nabla_{\tilde x}
        \log\frac{\tilde p_\theta(\tilde x\mid Y)}
                  {\tilde p_\theta(\tilde x)}
        \right|_{\tilde x=\tilde X}
    \right],
\end{equation}
where the expectation is over
$(Y,Z,\epsilon)\sim p_Yq_Zq_\epsilon$.
\end{proposition}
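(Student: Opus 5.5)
The plan is the standard total-derivative argument. Write the spread MI as an expectation over the $\theta$-independent base variables,
\begin{equation*}
    I_\theta(\tilde X;Y)=\mathbb{E}_{p_Yq_Zq_\epsilon}\bigl[\ell_\theta(\tilde X_\theta,Y)\bigr],
    \qquad \tilde X_\theta=h(g_\theta(Y,Z),\epsilon).
\end{equation*}
This expression is valid because $(\tilde X_\theta,Y)$ has joint law $\tilde p_\theta(\tilde x\mid y)p_Y(y)$, and on the common support the KL density expression applies. Assumption 3 lets us differentiate under the expectation. By the chain rule (assumptions 1 and 2), the integrand's derivative then splits into two terms:
\begin{equation*}
    \nabla_\theta I_\theta
    =\mathbb{E}\bigl[(\nabla_\theta\ell_\theta)(\tilde X,Y)\bigr]
    +\mathbb{E}\Bigl[\Bigl(\tfrac{\partial\tilde X}{\partial\theta}\Bigr)^{\!\top}\nabla_{\tilde x}\ell_\theta(\tilde X,Y)\Bigr].
\end{equation*}
Here $\partial\tilde X/\partial\theta=(\partial h/\partial x)(\partial g_\theta/\partial\theta)$. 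Integrability of both pieces is exactly assumption 3.

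The remaining task is to show that the first, ``explicit-parameter'' term vanishes. I would write it against the joint density as
\begin{equation*}
    \mathbb{E}_Y\!\int \tilde p_\theta(\tilde x\mid Y)\bigl[\nabla_\theta\log\tilde p_\theta(\tilde x\mid Y)-\nabla_\theta\log\tilde p_\theta(\tilde x)\bigr]\,d\tilde x
\end{equation*}
and treat the two parts separately.

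\textbf{Conditional part.} This equals $\mathbb{E}_Y\int\nabla_\theta\tilde p_\theta(\tilde x\mid Y)\,d\tilde x=\mathbb{E}_Y\nabla_\theta 1=0$. The step uses the assumed interchange of differentiation with the normalizing integral.

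\textbf{Marginal part.} First apply Fubini to integrate $Y$ inside, using $\mathbb{E}_Y\tilde p_\theta(\tilde x\mid Y)=\tilde p_\theta(\tilde x)$. This gives $\int\tilde p_\theta(\tilde x)\nabla_\theta\log\tilde p_\theta(\tilde x)\,d\tilde x=\nabla_\theta\int\tilde p_\theta=0$. Both densities are positive on the common $\theta$-independent support, so the logarithms are defined, and the support not moving with $\theta$ means no boundary terms arise.

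The main obstacle is justifying the Fubini swap and the separate integrability of the two score pieces $\nabla_\theta\log\tilde p_\theta(\cdot\mid y)$ and $\nabla_\theta\log\tilde p_\theta$. Assumption 3 only gives integrability of their difference $\nabla_\theta\ell_\theta$. I would first try to read the stipulated interchange with the normalizing integrals as supplying integrability of $\nabla_\theta\tilde p_\theta(\tilde x\mid y)$ and $\nabla_\theta\tilde p_\theta(\tilde x)$ in $\tilde x$. If that reading is insufficient, I would handle the difference directly, writing $\mathbb{E}_Y\int\tilde p(\tilde x\mid Y)\nabla_\theta\ell_\theta\,d\tilde x$ as $\nabla_\theta$ of the constant $\mathbb{E}_Y\int\tilde p_\theta(\tilde x\mid Y)\,d\tilde x=1$ minus $\nabla_\theta$ of $\int\tilde p_\theta(\tilde x)\,d\tilde x=1$. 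In either case the explicit term vanishes and only the pathwise term remains, which is the claimed formula.
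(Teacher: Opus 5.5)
Your proposal is correct and follows the paper's proof essentially step for step. You differentiate $\mathbb{E}[\ell_\theta(\tilde X,Y)]$ under the $\theta$-free base law, split it by the chain rule into the pathwise term and the explicit-parameter term, and make the latter vanish because the conditional-score and marginal-score pieces each integrate to $\nabla_\theta 1=0$; your Fubini step is the paper's appeal to the marginal law of $\tilde X$. On the integrability concern, which the paper passes over silently, your first reading suffices: if $\nabla_\theta\tilde p_\theta$ is integrable in $\tilde x$, then since $\tilde X$ has density $\tilde p_\theta$ we get $\mathbb{E}\|\nabla_\theta\log\tilde p_\theta(\tilde X)\|=\int\|\nabla_\theta\tilde p_\theta(\tilde x)\|\,d\tilde x<\infty$, so the conditional piece is integrable as the difference of this term and $\nabla_\theta\ell_\theta$, and the split is legitimate.
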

\begin{proof}[Proof of \Cref{prop:smi-gradient}]\label{app:smi-gradient-proof}
Write
\begin{equation}
    \ell_\theta(u,y)
    =\log\frac{\tilde p_\theta(u\mid y)}{\tilde p_\theta(u)},
    \qquad
    \tilde X=h(g_\theta(Y,Z),\epsilon).
\end{equation}
The distribution of $(Y,Z,\epsilon)$ does not depend on $\theta$.
The assumed interchange of differentiation and expectation therefore
gives the chain rule
\begin{equation}
\begin{split}
    \nabla_\theta I_\theta(\tilde X;Y)
    =\mathbb{E}\!\left[
        \left(\frac{\partial\tilde X}{\partial\theta}\right)^{\!\top}
        \nabla_u\ell_\theta(\tilde X,Y)
    \right]
    +\mathbb{E}\!\left[
        \left.\partial_\theta\ell_\theta(u,Y)\right|_{u=\tilde X}
    \right].
    \label{eq:app-smi-gradient-chain-rule}
\end{split}
\end{equation}
The partial derivative in the second expectation holds the density
argument $u$ fixed. Its two terms vanish separately. Indeed, by the
common, parameter-independent support and normalization,
\begin{align}
    \mathbb{E}\!\left[
        \left.\partial_\theta\log\tilde p_\theta(u\mid Y)
        \right|_{u=\tilde X}\right]
    =\mathbb{E}_Y\!\left[
        \int\partial_\theta\tilde p_\theta(u\mid Y)\,\mathrm{d}u
    \right]=0,
\end{align}
and, using the marginal law of $\tilde X$,
\begin{equation}
    \mathbb{E}\!\left[
        \left.\partial_\theta\log\tilde p_\theta(u)
        \right|_{u=\tilde X}\right]
    =\int\partial_\theta\tilde p_\theta(u)\,\mathrm{d}u=0.
\end{equation}
Substituting these identities into
\Cref{eq:app-smi-gradient-chain-rule} proves
\Cref{eq:smi-gradient}. For Gaussian spreading,
$\partial X_t/\partial\theta=\alpha_tJ_\theta$; interchanging the
time expectation and differentiation gives
\Cref{eq:smi-weighted-gradient} under the method's stated assumptions.
\end{proof}

\subsection{Classifier optimum and product-pair sampling}
\label{app:smi-classifier-optimum}
Fix $\theta$ and a spreading kernel for which the conditional and
marginal densities are positive. Relative to the common measure
$\mathrm{d}u\,p_Y(\mathrm{d}y)$, the joint and product densities are
$a(u,y)=\tilde p_\theta(u\mid y)$ and $b(u,y)=\tilde p_\theta(u)$.
The balanced population cross-entropy is the integral of
\begin{equation}
    -a(u,y)\log c(u,y)-b(u,y)\log(1-c(u,y)).
\end{equation}
For each $(u,y)$ with $a,b>0$, this expression is strictly convex
in $c\in(0,1)$. Setting its derivative to zero gives
\begin{equation}
    -\frac ac+\frac b{1-c}=0,
    \qquad
    c^*_\theta(u,y)=\frac a{a+b}.
\end{equation}
Thus the optimal classifier is unique up to null sets and
$\log[c^*_\theta/(1-c^*_\theta)]=\log(a/b)$. This proves the
population identity used in \Cref{eq:smi-classifier-objective}.
Its smooth representative has the required input gradient.
The argument also applies at each time $t$; multiplying both class
losses by the same positive time weight preserves the unrestricted
pointwise optimum. It does not assert attainability within a finite
neural-network class.

If $\bar Y\sim p_Y$ is independent of $(\tilde X,Y)$, the pair
$(\tilde X,\bar Y)$ has the product law
$P_{\tilde X}\otimes P_Y$, whereas $(\tilde X,Y)$ has the joint
law. Reusing the same $\tilde X$ in these two pairs does not change
either marginal expectation in the classifier loss. At sampled
noise levels, the same argument holds conditionally on $T=t$.
For generator updates, the fitted classifier is held fixed and
its input gradient replaces the exact field in
\Cref{eq:smi-weighted-gradient}.

\subsection{Smooth densities and finite SMI}
\label{app:smi-smooth-finite}
\begin{theorem}[Smooth densities and finite SMI]
\label{thm:smi-smooth-finite}
For every $t$ with $\sigma_t>0$, the density $p_\theta(x_t)$ is
strictly positive and infinitely differentiable in $x_t$. The same
holds for $p_\theta(x_t\mid y)$ for $p_Y$-almost every $y$.
Moreover,
\begin{equation}
    0\le I_\theta(X_t;Y)
    \le\frac d2\log\!\left(
        1+\frac{\alpha_t^2
        \mathbb{E}\|X_0-\mathbb{E}X_0\|^2}{d\sigma_t^2}
    \right)<\infty.
    \label{eq:theory-finite-smi}
\end{equation}
If $\alpha_t\ne0$, Gaussian spreading is injective and
$I_\theta(X_t;Y)=0$ if and only if $X_0\perp Y$.
These conclusions do not require a density for $X_0$.
At $\alpha_t=0$, $X_t$ is pure noise and SMI is zero for every generator.
\end{theorem}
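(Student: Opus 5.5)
The plan is to treat the four claims (regularity, the finiteness bound, identifiability, and the degenerate case) separately, working directly with the Gaussian convolution representation
\[
p_\theta(x_t\mid y)=\mathbb{E}\bigl[\varphi_{\sigma_t}(x_t-\alpha_tX_0)\,\big|\,Y=y\bigr],\qquad
p_\theta(x_t)=\mathbb{E}\bigl[\varphi_{\sigma_t}(x_t-\alpha_tX_0)\bigr],
\]
where $\varphi_{\sigma}$ is the $\mathcal N(0,\sigma^2I_d)$ density. The conditional laws $P_{X_0\mid Y=y}$ exist because $Y$ takes values in a standard Borel space.

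\textbf{Positivity and smoothness.} Positivity is immediate: $\varphi_{\sigma_t}>0$ everywhere, and we integrate it against a probability measure. For smoothness, every partial derivative of $\varphi_{\sigma_t}$ has the form of a polynomial times a Gaussian. Each such function is bounded uniformly in $x$ by a constant depending only on the order and on $\sigma_t$. Dominated convergence, with a constant dominating function (integrable against any probability measure), then justifies differentiating under the expectation to all orders. This gives $C^\infty$ densities without any density assumption on $X_0$, and the same argument applies to each conditional law $P_{X_0\mid Y=y}$.

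\textbf{The finiteness bound.} I will use the identity $I(X_t;Y)=h(X_t)-h(X_t\mid Y)$. This decomposition is legitimate because both differential entropies are finite. On one side, $h(X_t\mid Y)\ge h(X_t\mid X_0,Y)=h(\sigma_t\epsilon)=\tfrac d2\log(2\pi e\sigma_t^2)$, since conditioning reduces entropy and $\epsilon$ is independent of $(X_0,Y)$. On the other side, $X_t$ has finite second moment with total variance $\alpha_t^2\mathbb{E}\|X_0-\mathbb{E}X_0\|^2+d\sigma_t^2$. The Gaussian maximum-entropy bound under a trace constraint, applied with an isotropic covariance, gives $h(X_t)\le\tfrac d2\log\bigl(2\pi e(\sigma_t^2+\alpha_t^2\mathbb{E}\|X_0-\mathbb{E}X_0\|^2/d)\bigr)$. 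Subtracting the two bounds yields \Cref{eq:theory-finite-smi}, and nonnegativity follows from \Cref{prop:smi-properties}.

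The main technical care is needed here, in justifying the entropy decomposition. The upper bound shows $h(X_t)<\infty$. For the lower side, the density of $X_t$ is bounded by $(2\pi\sigma_t^2)^{-d/2}$, so $h(X_t)>-\infty$; the same bound holds conditionally on $Y$. Hence all terms are finite and the subtraction is valid. Alternatively, one could write $I=\mathbb{E}_Y\,\mathbb{D}_{\mathrm{KL}}(p(\cdot\mid Y)\|\mathcal N)-\mathbb{D}_{\mathrm{KL}}(p\|\mathcal N)$ against a reference Gaussian to avoid entropies altogether.

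\textbf{Identifiability.} Injectivity of $r\mapsto r*k_t$ for $\alpha_t\neq0$ follows from characteristic functions. The characteristic function of $r*k_t$ is $\hat r(\alpha_t\omega)\,e^{-\sigma_t^2\|\omega\|^2/2}$, and the Gaussian factor never vanishes. Equality of the spread laws therefore forces $\hat r=\hat s$ on all of $\mathbb{R}^d$ (using $\alpha_t\ne0$), and hence $r=s$ by uniqueness of characteristic functions. \Cref{prop:smi-properties} then gives $I_\theta(X_t;Y)=0\iff X_0\perp Y$.

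\textbf{The degenerate case.} When $\alpha_t=0$, we have $X_t=\sigma_t\epsilon$, which is independent of $Y$, so the SMI vanishes for every generator.
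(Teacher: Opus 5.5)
Your argument is correct. The positivity, smoothness, characteristic-function injectivity and $\alpha_t=0$ parts match the paper's proof essentially verbatim, but you prove the finiteness bound by a different route.

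You work with differential entropies of the spread variable, using $I(X_t;Y)=h(X_t)-h(X_t\mid Y)$. You then apply $h(X_t\mid Y)\ge h(\sigma_t\epsilon)$ and the isotropic maximum-entropy bound under the trace constraint, which is the classical Gaussian-channel-capacity argument.

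The paper never uses entropies. It fixes the reference Gaussian $Q_t=\mathcal N(\alpha_t\mu,\sigma_t^2I_d+\alpha_t^2\Sigma)$ and computes $\mathbb{E}\,D_{\mathrm{KL}}(\mathcal N(\alpha_tX_0,\sigma_t^2I_d)\,\|\,Q_t)=\tfrac12\log\det(I_d+\alpha_t^2\Sigma/\sigma_t^2)$ in closed form. By the KL chain rule, this left side equals $I(X_0;X_t)+D_{\mathrm{KL}}(P_{X_t}\|Q_t)$. The paper concludes with data processing, $I(X_t;Y)\le I(X_t;X_0)$, and concavity of $\log$ to pass from the log-determinant to the trace form.

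Structurally the two proofs coincide, since your $h(X_t)-h(\sigma_t\epsilon)$ is exactly $I(X_t;X_0)$. The paper's version buys freedom from entropy bookkeeping, because every quantity is a KL divergence in $[0,\infty]$. It also records the sharper intermediate log-determinant bound on $I(X_t;X_0)$. Yours is the more familiar textbook argument and goes straight to the trace form.

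One small tightening: your finiteness paragraph only shows $h(X_t\mid Y)>-\infty$, not $h(X_t\mid Y)<\infty$. You do not need the latter. Since $\log p_\theta(X_t\mid Y)$ is bounded above and $\log p_\theta(X_t)$ is integrable, the expectation of the log-ratio splits in the extended sense, and the inequality follows. Your closing alternative, comparing against a reference Gaussian, is essentially the paper's method.
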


\begin{proof}
The marginal density is the mixture
\begin{equation}
    p_\theta(u)=\mathbb{E}\!\left[
        (2\pi\sigma_t^2)^{-d/2}
        \exp\!\left(-\frac{\|u-\alpha_tX_0\|^2}{2\sigma_t^2}\right)
    \right].
\end{equation}
It integrates to one by Tonelli's theorem and is strictly positive
for every $u$. Each spatial derivative of the Gaussian kernel is
a polynomial times that kernel and is bounded as a function of its
argument. Differentiation of every order can therefore be passed
through the expectation by dominated convergence. The same argument
applies to the regular conditional law of $X_0$ given $Y=y$.
This proves the density and smoothness claims.

Let $\mu=\mathbb{E}X_0$, $\Sigma=\operatorname{Cov}(X_0)$, and
$Q_t=\mathcal{N}(\alpha_t\mu,\sigma_t^2I_d+\alpha_t^2\Sigma)$.
The expected conditional Gaussian KL is finite and equals
\begin{equation}
    \mathbb{E}\!\left[
       D_{\mathrm{KL}}\bigl(
           \mathcal{N}(\alpha_tX_0,\sigma_t^2I_d)\,\|\,Q_t
       \bigr)\right]
    =\frac12\log\det\!\left(I_d+\frac{\alpha_t^2}{\sigma_t^2}\Sigma\right).
\end{equation}
The KL chain rule identifies the left-hand side with
$I(X_0;X_t)+D_{\mathrm{KL}}(P_{X_t}\|Q_t)$.
Nonnegativity, data processing for $Y\to X_0\to X_t$, and
concavity of the logarithm therefore give
\begin{align}
    I(X_t;Y)
    &\le I(X_t;X_0)\nonumber\\
    &\le\frac12\log\det\!\left(I_d+
                    \frac{\alpha_t^2}{\sigma_t^2}\Sigma\right)\nonumber\\
    &\le\frac d2\log\!\left(1+
                    \frac{\alpha_t^2\operatorname{tr}\Sigma}
                         {d\sigma_t^2}\right).
\end{align}
This proves \Cref{eq:theory-finite-smi} without requiring differential
entropy or a density for the input.

For any input law, its characteristic function after spreading is
\begin{equation}
    \varphi_{X_t}(u)=\varphi_{X_0}(\alpha_tu)
        \exp(-\sigma_t^2\|u\|^2/2).
\end{equation}
The Gaussian factor never vanishes. If $\alpha_t\ne0$, equality of
two output characteristic functions therefore implies equality of
the input characteristic functions, and hence of the input laws.
The independence equivalence follows from \Cref{prop:smi-properties}.
The pure-noise case is immediate from the channel definition.
\end{proof}

\subsection{Posterior identities and spatial regularity}
\label{app:smi-field-regularity}
The next identities provide a common basis for the spatial bounds,
gradient moment bounds, and information-dissipation formula.
They follow from differentiation of Gaussian mixtures; related
posterior-covariance identities appear in
\cite{wibisono2018convexitymutualinformationheat}.

\begin{lemma}[Posterior representations of the score difference]
\label{lem:smi-posterior-identities}
Define
\begin{align}
    \mu_t(x,y)&=\mathbb{E}[X_0\mid X_t=x,Y=y],
    &\bar\mu_t(x)&=\mathbb{E}[X_0\mid X_t=x],\\
    C_t(x,y)&=\operatorname{Cov}(X_0\mid X_t=x,Y=y),
    &\bar C_t(x)&=\operatorname{Cov}(X_0\mid X_t=x).
\end{align}
For every $x$ and $p_Y$-almost every $y$, using the Gaussian-mixture
versions of these conditional moments,
\begin{align}
    v_t(x,y)
    &=\frac{\alpha_t}{\sigma_t^2}
        \bigl(\mu_t(x,y)-\bar\mu_t(x)\bigr),
    \label{eq:app-smi-posterior-mean}\\
    \nabla_xv_t(x,y)
    &=\frac{\alpha_t^2}{\sigma_t^4}
        \bigl(C_t(x,y)-\bar C_t(x)\bigr).
    \label{eq:app-smi-posterior-covariance}
\end{align}
Furthermore,
\begin{equation}
    v_t(X_t,Y)=-\frac1{\sigma_t}
    \bigl(\mathbb{E}[\epsilon\mid X_t,Y]
          -\mathbb{E}[\epsilon\mid X_t]\bigr).
    \label{eq:app-smi-noise-projection}
\end{equation}
\end{lemma}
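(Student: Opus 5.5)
We will write the spread densities explicitly as Gaussian mixtures and differentiate under the expectation, which \Cref{thm:smi-smooth-finite} justifies. Let $\phi_t(x\mid x_0)=(2\pi\sigma_t^2)^{-d/2}\exp(-\|x-\alpha_tx_0\|^2/(2\sigma_t^2))$. Then
\[
p_\theta(x\mid y)=\mathbb{E}[\phi_t(x\mid X_0)\mid Y=y],\qquad p_\theta(x)=\mathbb{E}[\phi_t(x\mid X_0)].
\]
The kernel satisfies $\nabla_x\phi_t(x\mid x_0)=-\sigma_t^{-2}(x-\alpha_tx_0)\phi_t(x\mid x_0)$. Dividing by the mixture density and applying Bayes' rule gives Tweedie-type formulas:
\[
\nabla_x\log p_\theta(x\mid y)=\sigma_t^{-2}\bigl(\alpha_t\mu_t(x,y)-x\bigr),\qquad \nabla_x\log p_\theta(x)=\sigma_t^{-2}\bigl(\alpha_t\bar\mu_t(x)-x\bigr).
\]
Here the posterior $X_0\mid X_t=x,Y=y$ is defined by reweighting $P_{X_0\mid Y=y}$ with $\phi_t(x\mid\cdot)$. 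Subtracting the two formulas cancels the $-x/\sigma_t^2$ terms and yields \Cref{eq:app-smi-posterior-mean}. Finiteness of the posterior means at every $x$ follows because $\|x_0\|\phi_t(x\mid x_0)$ is bounded in $x_0$, so no moment assumption is needed pointwise. The phrase ``for every $x$'' refers to these Gaussian-mixture versions, which are defined everywhere because the denominators are strictly positive.

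For \Cref{eq:app-smi-posterior-covariance}, we differentiate the Hessian of each log density once more. The second derivative of $\phi_t$ is $\bigl(\sigma_t^{-4}(x-\alpha_tx_0)(x-\alpha_tx_0)^\top-\sigma_t^{-2}I\bigr)\phi_t$, and the Hessian of the log is $\nabla^2 p/p-(\nabla p/p)(\nabla p/p)^\top$. Together these give
\[
\nabla_x^2\log p_\theta(x\mid y)=\sigma_t^{-4}\operatorname{Cov}\bigl(x-\alpha_tX_0\mid X_t=x,Y=y\bigr)-\sigma_t^{-2}I=\sigma_t^{-4}\alpha_t^2C_t(x,y)-\sigma_t^{-2}I.
\]
The marginal case is identical with $\bar C_t$. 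Subtracting cancels the identity terms. Passing second derivatives under the expectation is again dominated, since $\|x_0\|^2\phi_t$ is bounded in $x_0$ locally uniformly in $x$.

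For \Cref{eq:app-smi-noise-projection}, we substitute $\epsilon=(X_t-\alpha_tX_0)/\sigma_t$. Conditioning on $(X_t,Y)$ gives $\mathbb{E}[\epsilon\mid X_t,Y]=(X_t-\alpha_t\mu_t(X_t,Y))/\sigma_t$, and conditioning on $X_t$ alone gives $\mathbb{E}[\epsilon\mid X_t]=(X_t-\alpha_t\bar\mu_t(X_t))/\sigma_t$. Their difference is $-(\alpha_t/\sigma_t)(\mu_t-\bar\mu_t)$. Multiplying by $-1/\sigma_t$ and comparing with \Cref{eq:app-smi-posterior-mean} gives the claim. These conditional expectations are defined since $\mathbb{E}\|X_0\|^2<\infty$, and they agree almost surely with the mixture versions.

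The main obstacle is the measure-theoretic bookkeeping rather than the algebra. We must check that the Gaussian-mixture posterior built from a regular conditional law $P_{X_0\mid Y=y}$ is a version of $\mathbb{E}[X_0\mid X_t,Y]$, i.e.\ verify the defining integral identity via Fubini using the joint density $\phi_t(x\mid x_0)P_{X_0\mid Y=y}(dx_0)p_Y(dy)$. We must also confirm that the marginal posterior coincides with averaging over $Y$. The ``almost every $y$'' qualifier comes only from the choice of regular conditional law.
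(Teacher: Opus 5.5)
Your proposal is correct and follows the paper's proof. It uses Tweedie-type score formulas obtained by differentiating the Gaussian mixture, and second derivatives that produce the posterior covariance. You compute the log-density Hessian directly, where the paper differentiates $\mu_t$, but this is the same computation because $\nabla_x\log p_\theta(x\mid y)=\sigma_t^{-2}(\alpha_t\mu_t(x,y)-x)$. The noise-projection identity then follows from the substitution $\epsilon=(X_t-\alpha_tX_0)/\sigma_t$, as in the paper. Your explicit check that the mixture posteriors are versions of the conditional expectations is a detail the paper leaves implicit.
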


\begin{proof}
Differentiating the Gaussian-mixture density and dividing by its
positive value gives
\begin{equation}
    \nabla_x\log p_\theta(x\mid y)
    =-\frac{x-\alpha_t\mu_t(x,y)}{\sigma_t^2}.
\end{equation}
The corresponding marginal identity replaces $\mu_t$ by $\bar\mu_t$.
Their difference proves \Cref{eq:app-smi-posterior-mean}.
Differentiating the normalized posterior mean componentwise yields
\begin{equation}
    \partial_{x_j}(\mu_t(x,y))_i
    =\frac{\alpha_t}{\sigma_t^2}(C_t(x,y))_{ij}.
\end{equation}
The analogous marginal identity proves
\Cref{eq:app-smi-posterior-covariance}. These differentiations are
valid for the canonical Gaussian-mixture versions: the denominators
are positive, and on compact sets of $x$ the polynomial factors in
the numerators are dominated by Gaussian tails. In particular,
\begin{equation}
    \nabla_x^2\log p_\theta(x\mid y)
    =-\frac1{\sigma_t^2}I_d
       +\frac{\alpha_t^2}{\sigma_t^4}C_t(x,y),
    \label{eq:app-smi-log-hessian}
\end{equation}
with the corresponding marginal formula.
Finally, $\epsilon=(X_t-\alpha_tX_0)/\sigma_t$.
Taking conditional expectations and subtracting proves
\Cref{eq:app-smi-noise-projection}.
\end{proof}

\begin{theorem}[Regularity of the score difference]
\label{thm:smi-field-regularity}
Suppose additionally that $\|X_0\|\le R$ almost surely.
For every $x_t\in\mathbb{R}^d$ and $p_Y$-almost every $y$,
\begin{align}
    \|v_t(x_t,y)\|\le\frac{2\alpha_tR}{\sigma_t^2},
    \qquad\|\nabla_{x_t}v_t(x_t,y)\|_{\mathrm{op}}
        \le\frac{\alpha_t^2R^2}{\sigma_t^4}.
    \label{eq:theory-field-lipschitz}
\end{align}
If $\sigma_t^2>\alpha_t^2R^2$, the conditional and marginal spread
densities are strongly log-concave, with parameter at least
\begin{equation}
    \frac{\sigma_t^2-\alpha_t^2R^2}{\sigma_t^4}.
    \label{eq:theory-log-concavity}
\end{equation}
\end{theorem}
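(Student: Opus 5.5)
The plan is to read all three claims of \Cref{thm:smi-field-regularity} directly off the posterior representations in \Cref{lem:smi-posterior-identities}, using only that the relevant posterior laws are supported in the ball of radius $R$.

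\textbf{Support of the posteriors.} Since $\|X_0\|\le R$ almost surely, the Gaussian-mixture versions of the conditional laws of $X_0$ given $X_t=x$ (and given $X_t=x,Y=y$) are reweightings of the prior laws of $X_0$ by positive Gaussian likelihoods. They are therefore supported in the closed ball of radius $R$ for every $x$ and $p_Y$-almost every $y$, which gives
\[
\|\mu_t\|\le R,\qquad \|\bar\mu_t\|\le R.
\]

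\textbf{Pointwise bound on $v_t$.} By \Cref{eq:app-smi-posterior-mean}, the triangle inequality gives
\[
\|v_t\|\le \frac{\alpha_t}{\sigma_t^2}\bigl(\|\mu_t\|+\|\bar\mu_t\|\bigr)\le \frac{2\alpha_t R}{\sigma_t^2}.
\]

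\textbf{Bound on $\nabla v_t$.} By \Cref{eq:app-smi-posterior-covariance}, $\nabla v_t=\frac{\alpha_t^2}{\sigma_t^4}(C_t-\bar C_t)$. Both $C_t$ and $\bar C_t$ are positive semidefinite. For any unit vector $u$,
\[
0\le u^\top C_t u=\operatorname{Var}(u^\top X_0\mid\cdot)\le \mathbb{E}[(u^\top X_0)^2\mid\cdot]\le R^2,
\]
and the same holds for $\bar C_t$. Hence $0\preceq C_t,\bar C_t\preceq R^2 I$, so their difference has eigenvalues in $[-R^2,R^2]$. The difference is symmetric, so its operator norm is at most $R^2$. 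This is the main point where care is needed: the naive bound $\|C_t\|+\|\bar C_t\|$ would give $2R^2$, and the sharper constant comes from the difference of two matrices sandwiched between $0$ and $R^2I$.

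\textbf{Strong log-concavity.} Use the Hessian formula \Cref{eq:app-smi-log-hessian} and its marginal analogue, $-\nabla^2\log p=\sigma_t^{-2}I-\alpha_t^2\sigma_t^{-4}C$. Since $C\preceq R^2I$, this gives
\[
-\nabla^2\log p\succeq \frac{\sigma_t^2-\alpha_t^2R^2}{\sigma_t^4}\,I,
\]
which is positive exactly when $\sigma_t^2>\alpha_t^2R^2$.

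Finally, I would note that the exceptional $y$-null set is the one inherited from the regular conditional law, so that the bounds hold for every $x_t$ and $p_Y$-almost every $y$, as stated.
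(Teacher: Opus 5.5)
Your proposal is correct and follows essentially the same route as the paper. Both arguments rest on the same ingredients: every posterior law is supported in the ball of radius $R$; the posterior-mean and posterior-covariance identities of \Cref{lem:smi-posterior-identities} apply; the sandwich $0\preceq C_t,\bar C_t\preceq R^2 I_d$ gives $\|C_t-\bar C_t\|_{\mathrm{op}}\le R^2$; and the Hessian formula \Cref{eq:app-smi-log-hessian} yields strong log-concavity. Your explicit remarks on why the constant is $R^2$ rather than $2R^2$, and on where the $p_Y$-null set comes from, are accurate elaborations of that same argument.
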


\begin{proof}
Every posterior distribution is supported in the ball of radius $R$.
Its mean has norm at most $R$, so
\Cref{eq:app-smi-posterior-mean} gives the magnitude bound.
For any unit vector $u$ and any such posterior law,
$\operatorname{Var}(u^\top X_0)\le\mathbb{E}(u^\top X_0)^2\le R^2$.
Thus $0\preceq C_t(x,y),\bar C_t(x)\preceq R^2I_d$ and
\begin{equation}
    -R^2I_d\preceq C_t(x,y)-\bar C_t(x)\preceq R^2I_d.
\end{equation}
The operator-norm bound follows from
\Cref{eq:app-smi-posterior-covariance}.
Finally, \Cref{eq:app-smi-log-hessian} gives
\begin{equation}
    -\nabla_x^2\log p_\theta(x\mid y)
    \succeq\frac{\sigma_t^2-\alpha_t^2R^2}{\sigma_t^4}I_d,
\end{equation}
and the same inequality holds for the marginal density.
This is the claimed strong log-concavity bound.
\end{proof}

Strong log-concavity gives a unique mode for each spread density.
These spatial statements require bounded outputs and do not imply
convexity of the training objective in $\theta$.

\subsection{Proof of the gradient moment bounds}
\label{app:smi-gradient-moments-proof}
\begin{proof}[Proof of \Cref{thm:smi-gradient-moments}]
Set $A_t=\mathbb{E}[\epsilon\mid X_t,Y]$ and
$B_t=\mathbb{E}[\epsilon\mid X_t]$. The tower property gives
$\mathbb{E}[A_t\mid X_t]=B_t$. Orthogonality of conditional
expectations and conditional Jensen's inequality imply
\begin{equation}
    \mathbb{E}\|A_t-B_t\|^2
    =\mathbb{E}\|A_t\|^2-\mathbb{E}\|B_t\|^2
    \le\mathbb{E}\|\epsilon\|^2=d.
\end{equation}
Equation~\eqref{eq:app-smi-noise-projection} therefore yields
$\mathbb{E}\|v_t(X_t,Y)\|^2\le d/\sigma_t^2$.
Since $\|J_\theta\|_{\mathrm{op}}\le M$ almost surely,
\begin{equation}
    \mathbb{E}\|G_t\|^2
    \le\alpha_t^2M^2\mathbb{E}\|v_t(X_t,Y)\|^2
    \le\frac{M^2d\alpha_t^2}{\sigma_t^2}.
\end{equation}
Unbiasedness follows from \Cref{prop:smi-gradient}.

For independent copies, the cross terms of the centered sum vanish.
Consequently, the fixed-noise minibatch bound is
\begin{equation}
\begin{split}
    \mathbb{E}\left\|
       \frac1B\sum_{i=1}^BG_{t,i}-\nabla_\theta I_\theta(X_t;Y)
    \right\|^2
    =\frac1B\mathbb{E}\|G_t-\mathbb{E}G_t\|^2
    \le\frac{M^2d\alpha_t^2}{B\sigma_t^2}.
    \label{eq:theory-gradient-mse}
\end{split}
\end{equation}
For random noise levels, let $H=w_{\mathrm{SMI}}(T)G_T$.
The assumed interchange of the time expectation and differentiation
gives $\mathbb{E}H=\nabla_\theta\mathcal{L}_{\mathrm{SMI}}$, and
\begin{equation}
    \mathbb{E}\|H\|^2
    \le M^2d\,\mathbb{E}_T\!\left[
        w_{\mathrm{SMI}}(T)^2\frac{\alpha_T^2}{\sigma_T^2}
    \right]<\infty.
\end{equation}
Applying the same centered-sum identity to independent copies of
$H$ proves \Cref{eq:theory-weighted-gradient-mse}.
\end{proof}

\subsection{Proof of the binary Gaussian comparison}
\label{app:smi-binary-comparison-proof}
\begin{proof}[Proof of \Cref{prop:smi-binary-comparison}]
Let $s_t^2=\alpha_t^2b^2+\sigma_t^2$. Conditional on $Y=y$,
$X_t\sim\mathcal{N}(\alpha_tay,s_t^2)$.
Bayes' rule with equal class probabilities gives
\begin{equation}
    u_t(x):=\mathbb{E}[Y\mid X_t=x]
    =\tanh\!\left(\frac{\alpha_tax}{s_t^2}\right).
\end{equation}
The conditional and marginal scores are
\begin{equation}
    \nabla_x\log p_\theta(x\mid y)
       =\frac{-x+\alpha_tay}{s_t^2},
    \qquad
    \nabla_x\log p_\theta(x)
       =\frac{-x+\alpha_ta u_t(x)}{s_t^2}.
\end{equation}
Since $\partial g_a(Y,Z)/\partial a=Y$, the pathwise estimator is
\begin{equation}
    G_t=\frac{\alpha_t^2a}{s_t^2}
        \left[1-Y\tanh\!\left(\frac{\alpha_taX_t}{s_t^2}\right)\right].
    \label{eq:app-smi-binary-estimator}
\end{equation}
The Gaussian mixture has smooth parameter-dependent densities for
$b>0$, and differentiation under the expectation is justified locally
in $a$ by Gaussian-tail bounds. Thus the pathwise identity applies
also at the unspread endpoint $\alpha_0=1$, $\sigma_0=0$.

Write $U=u_t(X_t)$ and $A=1-YU$. The tower property implies
$\mathbb{E}[YU]=\mathbb{E}[U^2]$. Since $Y^2=1$,
\begin{align}
    m_t&=\mathbb{E}(Y-U)^2=1-\mathbb{E}U^2,\\
    \mathbb{E}A&=m_t,
    \qquad
    \mathbb{E}A^2=1-2\mathbb{E}[YU]+\mathbb{E}U^2=m_t.
\end{align}
Both class-conditional Gaussian densities are positive, so $m_t>0$.
With $\kappa_t=\alpha_t^2a/s_t^2=a/(b^2+\tau_t)$,
\begin{equation}
    \mathbb{E}G_t=\kappa_tm_t,
    \qquad
    \operatorname{Var}(G_t)=\kappa_t^2m_t(1-m_t).
\end{equation}
These equalities prove \Cref{eq:theory-binary-relative-variance}.

To prove monotonicity, divide the observation by $\alpha_t$.
Its law is that of $aY+\sqrt{b^2+\tau_t}\,N$, with $N$ an
independent standard Gaussian. For $\tau_2>\tau_1$, choose a
coupling $W_2=W_1+\sqrt{\tau_2-\tau_1}\,N'$, where
$W_1=aY+\sqrt{b^2+\tau_1}\,N$ and $N'$ is independent.
Then $Y\to W_1\to W_2$ and
\begin{equation}
    \mathbb{E}\bigl[\mathbb{E}[Y\mid W_2]^2\bigr]
    \le\mathbb{E}\bigl[\mathbb{E}[Y\mid W_1]^2\bigr]
\end{equation}
by the tower property and conditional Jensen's inequality.
Hence $m_t$ is nondecreasing in $\tau_t$, and $1/m_t-1$ is
nonincreasing. Comparing with $\tau=0$ gives the standard-MI
comparison. Independent averaging divides the variance by $B$,
proving the relative MSE formula. Finally,
$0<\mathbb{E}G_t\le a/(b^2+\tau_t)\to0$ as $\tau_t\to\infty$.
\end{proof}

\subsection{Proofs for classifier estimation error}
\label{app:smi-classifier-error-proof}
\begin{proof}[Proof of \Cref{prop:smi-classifier-error}]
Condition on the fitted classifier and current generator, and let
$\Delta_t=\widehat v_t(X_t,Y)-v_t(X_t,Y)$.
The update error is $\widehat G_t-G_t=\alpha_tJ_\theta^\top\Delta_t$.
Consequently,
\begin{align}
    \|\mathbb{E}\widehat G_t-\nabla_\theta I_\theta(X_t;Y)\|
    \le\alpha_tM\mathbb{E}\|\Delta_t\| 
    \le\alpha_tM\delta_t.
\end{align}
Using $\|u+v\|^2\le2\|u\|^2+2\|v\|^2$ and
\Cref{eq:theory-score-moment},
\begin{equation}
    \mathbb{E}\|\widehat G_t\|^2
    \le2M^2\alpha_t^2
        \left(\frac d{\sigma_t^2}+\delta_t^2\right).
\end{equation}
For fresh independent copies, the bias--variance decomposition is
\begin{align}
    \mathbb{E}\left\|
        \frac1B\sum_{i=1}^B\widehat G_{t,i}
        -\nabla_\theta I_\theta(X_t;Y)\right\|^2
    =\frac1B\mathbb{E}\|
        \widehat G_t-\mathbb{E}\widehat G_t\|^2
       +\|\mathbb{E}\widehat G_t-\nabla_\theta I_\theta(X_t;Y)\|^2.
\end{align}
Bounding the variance by the second moment and substituting the
bias bound proves \Cref{eq:theory-estimated-gradient-mse}.
\end{proof}

\begin{corollary}[Classifier error for the weighted objective]
\label{cor:smi-weighted-classifier-error}
Under the assumptions of \Cref{prop:smi-classifier-error} at
$p_T$-almost every noise level, let $w=w_{\mathrm{SMI}}$ and assume
\begin{equation}
    \mathbb{E}_T\!\left[
        w(T)^2\alpha_T^2
        \left(\frac d{\sigma_T^2}+\delta_T^2\right)
    \right]<\infty.
\end{equation}
Suppose the weighted objective is finite and differentiation can
be interchanged with the time expectation. For $B$ fresh,
independent draws of $(T,Y,Z,\epsilon)$, define
\begin{equation}
    \widehat G_B=\frac1B\sum_{i=1}^Bw(T_i)\widehat G_{T_i,i}.
\end{equation}
Conditionally on the fitted state,
\begin{align}
    \|\mathbb{E}\widehat G_B-\nabla_\theta\mathcal{L}_{\mathrm{SMI}}\|
    &\le M\mathbb{E}_T[w(T)\alpha_T\delta_T],
    \label{eq:app-smi-weighted-bias}\\
    \mathbb{E}\|\widehat G_B-\nabla_\theta\mathcal{L}_{\mathrm{SMI}}\|^2
    &\le M^2\bigl(\mathbb{E}_T[w(T)\alpha_T\delta_T]\bigr)^2 +\frac{2M^2}{B}\mathbb{E}_T\!\left[
        w(T)^2\alpha_T^2
        \left(\frac d{\sigma_T^2}+\delta_T^2\right)\right].
    \label{eq:app-smi-weighted-classifier-mse}
\end{align}
\end{corollary}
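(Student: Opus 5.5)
The plan is to treat the weighted estimator as an average of $B$ i.i.d. copies of one random vector. Define
\[
H=w(T)\widehat G_T=w(T)\alpha_T J_\theta^\top \widehat v_T(X_T,Y),
\qquad
G^{\star}=w(T)G_T=w(T)\alpha_T J_\theta^\top v_T(X_T,Y),
\]
with $(T,Y,Z,\epsilon)$ drawn fresh and the fitted classifier and generator held fixed. The assumed interchange of differentiation with the time expectation, combined with \Cref{thm:smi-gradient-moments}, gives $\mathbb{E}G^{\star}=\nabla_\theta\mathcal{L}_{\mathrm{SMI}}$. Both results then follow by decomposing $H-\nabla_\theta\mathcal{L}_{\mathrm{SMI}}$ into a bias term and a centered fluctuation term.

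\emph{Bias.} Condition on $T=t$. The bias inequality \Cref{eq:theory-gradient-bias} of \Cref{prop:smi-classifier-error} gives
\[
\bigl\|\mathbb{E}[\widehat G_T\mid T=t]-\nabla_\theta I_\theta(X_t;Y)\bigr\|\le M\alpha_t\delta_t
\]
for $p_T$-almost every $t$. Multiply by $w(t)\ge0$, integrate over $T$, and use the norm inequality $\|\mathbb{E}_T[\cdot]\|\le\mathbb{E}_T\|\cdot\|$. This yields $\|\mathbb{E}H-\nabla_\theta\mathcal{L}_{\mathrm{SMI}}\|\le M\,\mathbb{E}_T[w(T)\alpha_T\delta_T]$. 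Since $\mathbb{E}\widehat G_B=\mathbb{E}H$, this is \Cref{eq:app-smi-weighted-bias}.

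\emph{Second moment and MSE.} Condition on $T$ again and use $\|u+v\|^2\le2\|u\|^2+2\|v\|^2$ together with \Cref{eq:theory-score-moment}, as in the proof of \Cref{prop:smi-classifier-error}:
\[
\mathbb{E}\bigl[\|\widehat G_T\|^2\mid T=t\bigr]\le 2M^2\alpha_t^2\bigl(d/\sigma_t^2+\delta_t^2\bigr).
\]
Multiply by $w(t)^2$ and integrate; the assumed integrability makes the result finite. For fresh independent copies, the cross terms vanish in the bias--variance identity
\[
\mathbb{E}\|\widehat G_B-\nabla_\theta\mathcal{L}_{\mathrm{SMI}}\|^2=\tfrac1B\mathbb{E}\|H-\mathbb{E}H\|^2+\|\mathbb{E}H-\nabla_\theta\mathcal{L}_{\mathrm{SMI}}\|^2 .
\]
Bound the variance by the second moment of $H$ and substitute the bias bound; this gives \Cref{eq:app-smi-weighted-classifier-mse}.

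The main obstacle is measure-theoretic bookkeeping rather than any estimate. Three points need checking. First, $\delta_t$ and the conditional bounds must hold for almost every $t$ and be measurable in $t$. Second, $\mathbb{E}H$ must exist. Integrability of $w\alpha\delta$ follows from the assumed second-moment condition via Cauchy--Schwarz: $\mathbb{E}_T[w\alpha_T\delta_T]\le(\mathbb{E}_T[w^2\alpha_T^2\delta_T^2])^{1/2}<\infty$. Third, the identity $\mathbb{E}G^{\star}=\nabla_\theta\mathcal{L}_{\mathrm{SMI}}$ requires Fubini for the iterated expectation over $T$ and $(Y,Z,\epsilon)$. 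That is justified by the square integrability of $G^{\star}$ established in \Cref{thm:smi-gradient-moments}.
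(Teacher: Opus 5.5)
Your proposal is correct and follows essentially the same argument as the paper. You condition on $T$ to import the fixed-level bias and second-moment bounds of \Cref{prop:smi-classifier-error}, integrate them against $w$ and $w^2$, and apply the bias--variance decomposition for the mean of $B$ i.i.d.\ copies; your bookkeeping (Cauchy--Schwarz for finiteness of $\mathbb{E}_T[w(T)\alpha_T\delta_T]$, the interchange giving $\mathbb{E}[w(T)G_T]=\nabla_\theta\mathcal{L}_{\mathrm{SMI}}$) matches the paper's remarks, only stated more explicitly.
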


\begin{proof}
Let $\widehat H=w(T)\widehat G_T$ and $H=w(T)G_T$.
The assumptions imply finite second moments and
$\mathbb{E}H=\nabla_\theta\mathcal{L}_{\mathrm{SMI}}$.
Conditioning on $T$ and applying the fixed-level bias bound gives
\begin{equation}
    \|\mathbb{E}(\widehat H-H)\|
    \le M\mathbb{E}_T[w(T)\alpha_T\delta_T].
\end{equation}
The latter expectation is finite by Cauchy--Schwarz and the stated
second-moment condition. Likewise,
\begin{equation}
    \mathbb{E}\|\widehat H\|^2
    \le2M^2\mathbb{E}_T\!\left[
        w(T)^2\alpha_T^2
        \left(\frac d{\sigma_T^2}+\delta_T^2\right)\right].
\end{equation}
The bias--variance decomposition for the mean of $B$ independent
copies of $\widehat H$ proves the result.
\end{proof}

\begin{proposition}[Classification risk does not control derivative error]
\label{prop:smi-risk-counterexample}
There exist smooth classifiers whose excess population
cross-entropy risk tends to zero while their mean-square error in
the input gradient of the log odds diverges, even when the joint
and product distributions coincide and have smooth positive
densities in the observation variable.
\end{proposition}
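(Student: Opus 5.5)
We prove \Cref{prop:smi-risk-counterexample} with an explicit construction in which the true log-odds vanish identically, so that the classifier is perturbed by a small-amplitude, high-frequency oscillation. Take $d=1$, $Y$ uniform on $\{-1,+1\}$ and $X_t\sim\mathcal N(0,1)$ independent of $Y$; this arises, for instance, from a constant generator with Gaussian spreading. The joint and product laws then coincide with the smooth positive density $\varphi(u)\,p_Y(y)$, the optimal classifier is $c^*\equiv1/2$, and the true score difference satisfies $v_t\equiv0$. We define the family
\begin{equation}
    c_n(u,y)=\operatorname{sigmoid}\bigl(f_n(u)\bigr),
    \qquad f_n(u)=\frac1n\sin(n^2u),
\end{equation}
which is smooth and takes values in $(0,1)$. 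Its log odds are $f_n$, so $\widehat v_n(u)=n\cos(n^2u)$.

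The first step is to compute the excess risk. Since both classes have the same density $q(u,y)=\varphi(u)p_Y(y)$, the balanced cross-entropy of a classifier with log odds $f$ is
\begin{equation}
    \mathbb E_q\bigl[\log(1+e^{-f})+\log(1+e^{f})\bigr].
\end{equation}
At $f=0$ this equals $2\log2$. The function $\psi(f)=\log(1+e^{-f})+\log(1+e^{f})-2\log 2$ is even, nonnegative and satisfies $\psi(f)\le f^2/4$, because $\psi''\le1/2$ and $\psi(0)=\psi'(0)=0$. The excess risk is therefore at most $\mathbb E[f_n(X_t)^2]/4\le 1/(4n^2)\to0$.

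The second step is to show that the derivative error diverges. We have
\begin{equation}
    \mathbb E\|\widehat v_n-v_t\|^2=n^2\,\mathbb E[\cos^2(n^2X_t)].
\end{equation}
Write $\cos^2=\tfrac12(1+\cos(2n^2u))$. The Gaussian characteristic function gives $\mathbb E\cos(2n^2X_t)=e^{-2n^4}\to0$, so the expectation tends to $1/2$ and the mean-square error behaves like $n^2/2\to\infty$.

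The only subtle point is making sure the scaling works: the oscillation amplitude ($1/n$) must vanish while the derivative amplitude ($n$) grows, and this needs frequency $n^2$ together with the quadratic upper bound on $\psi$. Smoothness and positivity of the densities are immediate for Gaussian spreading, and the construction extends to any $d$ by applying the same perturbation to the first coordinate.
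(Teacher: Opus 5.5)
Your proposal is correct and follows the paper's proof essentially verbatim. You use the same independent Gaussian setup with $c^*\equiv 1/2$ and $v_t\equiv 0$, the same classifier with log odds $n^{-1}\sin(n^2u)$, the same $1/(4n^2)$ excess-risk bound (your $\psi(f)=2\log\cosh(f/2)$ with $\psi''\le 1/2$ is the paper's $\log\cosh u\le u^2/2$ in disguise), and the same characteristic-function computation giving derivative error $\tfrac{n^2}{2}(1+e^{-2n^4})\to\infty$; your constant-generator realisation of $X_t\sim\mathcal N(0,1)$ is an equally valid alternative to the paper's choice $X_0\sim\mathcal N(0,1/2)$, $\sigma_t^2=1/2$.
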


\begin{proof}
Let $X_t\sim\mathcal{N}(0,1)$ be independent of $Y$. This is
compatible with Gaussian spreading, for example by taking
$X_0\sim\mathcal{N}(0,1/2)$ independently of $Y$,
$\alpha_t=1$, and $\sigma_t^2=1/2$.
The optimal classifier is $c^*=1/2$ and the exact field is zero.
For positive integers $n$, define
\begin{equation}
    c_n(x,y)=\left(1+
       \exp\!\left[-n^{-1}\sin(n^2x)\right]\right)^{-1}.
\end{equation}
For the sum of the two class expectations used in the method,
the excess risk is
\begin{equation}
    2\mathbb{E}\log\cosh\!\left(
         \frac{\sin(n^2X_t)}{2n}\right)
    \le\frac1{4n^2}\longrightarrow0,
\end{equation}
where $\log\cosh u\le u^2/2$.
In contrast,
\begin{equation}
    \partial_x\log\frac{c_n(x,y)}{1-c_n(x,y)}
    =n\cos(n^2x).
\end{equation}
The Gaussian characteristic function therefore gives
\begin{equation}
    \mathbb{E}\left|
       \partial_x\log\frac{c_n(X_t,Y)}{1-c_n(X_t,Y)}
    \right|^2
    =\frac{n^2}{2}\left(1+e^{-2n^4}\right)\longrightarrow\infty.
\end{equation}
\end{proof}

\subsection{Information dissipation and limiting behavior}
\label{app:smi-information-dissipation}
The following result gives the full statement behind
\Cref{eq:theory-information-dissipation}. The proof applies the
Gaussian-channel I--MMSE identity
\citep{guo_mutual_2005} to both conditional
and marginal channels. Integrated score-difference representations
are also used for MI estimation in
\cite{franzese2024mindemutualinformationneural}.

\begin{theorem}[Information dissipation and limiting behavior]
\label{thm:smi-information-dissipation}
Fix $\theta$ and suppose $\alpha_t,\sigma_t$ are positive and
continuously differentiable on an interval. Then
\Cref{eq:theory-information-dissipation} holds. In particular,
$I_\theta(X_t;Y)$ is nonincreasing whenever
$\tau_t=\sigma_t^2/\alpha_t^2$ is nondecreasing.
If $X_0$ and $Y$ are dependent, the derivative is strictly negative
wherever $\tau_t'>0$.

For any positive-noise levels along which the corresponding ratio
has the stated limit,
\begin{align}
    I_\theta(X_t;Y)&\longrightarrow I_\theta(X_0;Y)
       &&\text{as }\tau_t\longrightarrow0,
    \label{eq:theory-zero-noise-limit}\\
    I_\theta(X_t;Y)&\longrightarrow0
       &&\text{as }\tau_t\longrightarrow\infty.
    \label{eq:theory-high-noise-limit}
\end{align}
The first limit holds in the extended real sense, including when
$I_\theta(X_0;Y)=\infty$. If $I_\theta(X_0;Y)<\infty$, then
\begin{equation}
    I_\theta(X_0;Y)-I_\theta(X_t;Y)
    =I_\theta(X_0;Y\mid X_t).
    \label{eq:theory-information-gap}
\end{equation}
For $t_1<t_2$ in the interval,
\begin{equation}
    I_\theta(X_{t_1};Y)-I_\theta(X_{t_2};Y)
    =\frac12\int_{t_1}^{t_2}\alpha_t^2
        \frac{\mathrm{d}}{\mathrm{d}t}
        \left(\frac{\sigma_t^2}{\alpha_t^2}\right)\cdot\mathbb{E}\|v_t(X_t,Y)\|^2\,\mathrm{d}t.
    \label{eq:theory-integrated-dissipation}
\end{equation}
The two limits and the information-gap identity do not require
differentiability of the schedule.
\end{theorem}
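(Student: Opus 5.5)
The plan is to reduce everything to the effective-noise parametrisation $U_\tau=X_0+\sqrt{\tau}\,\epsilon$. By \Cref{app:effective-noise-variance-proof}, $I_\theta(X_t;Y)=I_\theta(U_{\tau_t};Y)$, so it suffices to study $F(\tau):=I(U_\tau;Y)$ as a function of $\tau>0$ and then apply the chain rule in $t$.

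\textbf{Derivative.} Write $F(\tau)=I(U_\tau;X_0,Y)-I(U_\tau;X_0\mid Y)$. This uses the chain rule for MI and the Markov chain $Y\to X_0\to U_\tau$, which gives $I(U_\tau;X_0,Y)=I(U_\tau;X_0)$. Both terms are finite by the Gaussian-channel bound in \Cref{thm:smi-smooth-finite}. I will apply the I--MMSE identity of \citet{guo_mutual_2005} in its de Bruijn form, using the unconditional input law for the first term and the regular conditional law given $Y=y$ for the second. With $\mathrm{snr}=1/\tau$,
\begin{equation}
\frac{d}{d\tau}I(U_\tau;X_0)=-\frac1{2}\,\mathbb{E}\bigl\|\nabla\log p_{U_\tau}(U_\tau)-\text{(pure-noise score)}\bigr\|^2,
\end{equation}
or more cleanly $\frac{d}{d\tau}I=-\frac{1}{2\tau^2}\mathrm{mmse}(\tau)$. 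Subtracting the two terms, $F'(\tau)=-\frac1{2\tau^2}\mathbb{E}\|\mathbb{E}[X_0\mid U_\tau,Y]-\mathbb{E}[X_0\mid U_\tau]\|^2$, by orthogonality of nested conditional expectations. Converting to the $X_t$ scale with \Cref{lem:smi-posterior-identities}, we have $\mu_t-\bar\mu_t=(\sigma_t^2/\alpha_t)v_t$, and $\tau^2=\sigma_t^4/\alpha_t^4$. This gives $F'(\tau_t)=-\tfrac{\alpha_t^2}{2}\mathbb{E}\|v_t\|^2$. Multiplying by $\tau_t'$ yields \Cref{eq:theory-information-dissipation}.

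The main obstacle is justifying the conditional I--MMSE step uniformly in $y$ and then integrating over $y$. The mmse of the conditional channel is bounded by $\mathrm{tr}\,\mathrm{Cov}(X_0\mid Y=y)$, which is integrable because $\mathbb{E}\|X_0\|^2<\infty$. I will therefore work with the integrated form on $[\tau_1,\tau_2]$ and use Fubini, which sidesteps pointwise differentiation under $\mathbb{E}_Y$. Continuity of the integrand in $\tau$ then upgrades the integrated form to the derivative. Integrating the same identity also gives \Cref{eq:theory-integrated-dissipation} directly.

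\textbf{Remaining claims.} Monotonicity follows from the sign of the derivative, or directly from data processing via the coupling $U_{\tau_2}=U_{\tau_1}+\sqrt{\tau_2-\tau_1}\,N'$. For strict negativity: if $\mathbb{E}\|v_t\|^2=0$, then $v_t\equiv0$ a.s. Since the densities are smooth and positive, this forces $p(x\mid y)/p(x)$ to be constant in $x$, hence equal to $1$. That gives $X_t\perp Y$, and by injectivity in \Cref{thm:smi-smooth-finite}, $X_0\perp Y$.

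For the information-gap identity, use the MI chain rule $I(X_0,X_t;Y)=I(X_t;Y)+I(X_0;Y\mid X_t)=I(X_0;Y)+I(X_t;Y\mid X_0)$, where the last term is zero by the Markov property.

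For $\tau\to\infty$, apply the bound in \Cref{thm:smi-smooth-finite}: $\frac d2\log(1+\mathrm{tr}\Sigma/(d\tau))\to0$. For $\tau\to0$, the upper bound $I(U_\tau;Y)\le I(X_0;Y)$ holds by data processing. For the lower bound, $(U_\tau,Y)\to(X_0,Y)$ in law, and the product law converges to the product of the limits. Lower semicontinuity of KL under weak convergence (Donsker--Varadhan) then gives $\liminf_{\tau\to0} F(\tau)\ge I(X_0;Y)$. This covers the infinite case as well, and none of these limit arguments uses differentiability of the schedule.
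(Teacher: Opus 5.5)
Your proposal is correct and follows essentially the same route as the paper. The shared steps are: the decomposition $I(\cdot\,;X_0)-I(\cdot\,;X_0\mid Y)$ with I--MMSE applied to the unconditional and conditional channels, orthogonality of nested posterior means converted to $v_t$ via \Cref{lem:smi-posterior-identities}, the zero-score-difference plus injectivity argument for strictness, the Gaussian bound and lower semicontinuity of relative entropy for the two limits, and the chain rule for the information gap. The only differences are technical: you parametrise by $\tau$ rather than by the SNR $\gamma=1/\tau$, and you pass differentiation through $\mathbb{E}_Y$ via the integrated form plus continuity of the MMSE, whereas the paper dominates the difference quotients directly by the uniform bound $\tfrac12\mathbb{E}[\|X_0\|^2\mid Y=y]$ and so does not need that continuity.
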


\begin{proof}
We suppress $\theta$, which is fixed throughout.
For $\gamma>0$, set $V_\gamma=\sqrt\gamma X_0+\epsilon$ and define
\begin{align}
    M(\gamma)
       &=\mathbb{E}\|X_0-\mathbb{E}[X_0\mid V_\gamma]\|^2,\\
    M_Y(\gamma)
       &=\mathbb{E}\|X_0-\mathbb{E}[X_0\mid V_\gamma,Y]\|^2.
\end{align}
The Gaussian-channel information quantities are finite by the
finite second moment. The Markov chain $Y\to X_0\to V_\gamma$
and the chain rule give
\begin{equation}
    I(V_\gamma;Y)
    =I(V_\gamma;X_0)-I(V_\gamma;X_0\mid Y).
\end{equation}
The vector I--MMSE identity applies to the first term and, for
$p_Y$-almost every $y$, to the channel conditional on $Y=y$.
The conditional derivative is bounded by
$\mathbb{E}[\|X_0\|^2\mid Y=y]/2$, an integrable function of $y$.
The same bound controls difference quotients, so differentiation
may be passed through the expectation over $Y$. Thus
\begin{equation}
    \frac{\mathrm{d}}{\mathrm{d}\gamma}I(V_\gamma;Y)
    =\frac12\bigl(M(\gamma)-M_Y(\gamma)\bigr).
    \label{eq:app-smi-conditional-immse}
\end{equation}
Orthogonality of nested conditional expectations yields
\begin{equation}
    M(\gamma)-M_Y(\gamma)
    =\mathbb{E}\left\|
       \mathbb{E}[X_0\mid V_\gamma,Y]
       -\mathbb{E}[X_0\mid V_\gamma]\right\|^2.
\end{equation}
At $\gamma_t=\alpha_t^2/\sigma_t^2$, we have
$X_t=\sigma_tV_{\gamma_t}$. Multiplication by the positive constant
$\sigma_t$ preserves the conditioning information and MI. By
\Cref{eq:app-smi-posterior-mean},
\begin{equation}
    M(\gamma_t)-M_Y(\gamma_t)
    =\frac{\sigma_t^4}{\alpha_t^2}
       \mathbb{E}\|v_t(X_t,Y)\|^2.
\end{equation}
Since $\gamma_t=1/\tau_t$,
\begin{equation}
    \gamma_t'\frac{\sigma_t^4}{2\alpha_t^2}
    =-\frac{\alpha_t^2}{2}\tau_t'.
\end{equation}
Combining this equality with \Cref{eq:app-smi-conditional-immse}
proves \Cref{eq:theory-information-dissipation} and its
monotonicity conclusion. The MMSE bound also makes the
information functions locally Lipschitz in $\gamma$. Their
composition with a continuously differentiable schedule is
absolutely continuous on compact subintervals. Integrating the
derivative proves \Cref{eq:theory-integrated-dissipation}.

To prove strictness, suppose $\mathbb{E}\|v_t(X_t,Y)\|^2=0$ at a
positive-noise level. Then for $p_Y$-almost every $y$,
$v_t(x,y)=0$ for $p_\theta(x\mid y)\,\mathrm{d}x$-almost every $x$.
The conditional density is everywhere positive and $v_t$ is
continuous in $x$, so $v_t(x,y)=0$ for every $x$ for those $y$.
The log density ratio therefore has zero gradient on the connected
space $\mathbb{R}^d$ and is constant in $x$.
Normalization implies $p_\theta(x\mid y)=p_\theta(x)$.
Thus $X_t\perp Y$, and Gaussian injectivity gives $X_0\perp Y$.
Taking the contrapositive proves strict negativity when
$X_0,Y$ are dependent and $\tau_t'>0$.

For the limits, let $U_\tau=X_0+\sqrt\tau\epsilon$.
For every finite positive $\tau_t$,
$I(X_t;Y)=I(U_{\tau_t};Y)$.
The bound in \Cref{eq:theory-finite-smi} becomes
\begin{equation}
    I(U_\tau;Y)
    \le\frac d2\log\!\left(
      1+\frac{\mathbb{E}\|X_0-\mathbb{E}X_0\|^2}{d\tau}
    \right),
\end{equation}
which tends to zero as $\tau\to\infty$.

To handle $\tau\to0$, equip the standard Borel space of $Y$ with
a compatible Polish topology. Under the coupling that uses the
same $(X_0,Y,\epsilon)$ for every $\tau$, $(U_\tau,Y)$ converges
almost surely, and hence weakly, to $(X_0,Y)$.
The product laws $P_{U_\tau}\otimes P_Y$ also converge weakly to
$P_{X_0}\otimes P_Y$.
Joint lower semicontinuity of relative entropy gives
\begin{equation}
    I(X_0;Y)\le\liminf_{\tau\to0}I(U_\tau;Y).
\end{equation}
For completeness, this lower semicontinuity follows from the
variational formula
\begin{equation}
    D_{\mathrm{KL}}(P\|Q)
    =\sup_{\psi\in C_b}
       \left\{\int\psi\,\mathrm{d}P
             -\log\int e^\psi\,\mathrm{d}Q\right\}
\end{equation}
on Polish spaces, as each expression inside the supremum is
continuous under weak convergence of both measures.
Data processing gives $I(U_\tau;Y)\le I(X_0;Y)$.
Together these inequalities establish convergence when the limit
is finite; when it is infinite, the lower-limit inequality alone
forces divergence to infinity.

Finally, the Markov property gives
\begin{align}
    I(Y;(X_0,X_t))
       &=I(Y;X_0)+I(Y;X_t\mid X_0)=I(Y;X_0),\\
    I(Y;(X_0,X_t))
       &=I(Y;X_t)+I(Y;X_0\mid X_t).
\end{align}
When $I(Y;X_0)<\infty$, subtraction is legitimate and proves
\Cref{eq:theory-information-gap}.
\end{proof}

\subsection{Independence and the scale dependence of weighted SMI}
\label{app:smi-weighted-independence}
\begin{proposition}[Zero weighted SMI and its quantitative limitation]
\label{prop:smi-weighted-independence}
Let $w_{\mathrm{SMI}}\ge0$, let $\sigma_t>0$ at $p_T$-almost every
level, and suppose
\begin{equation}
    p_T\bigl(\{t:w_{\mathrm{SMI}}(t)>0,\ \alpha_t\ne0\}\bigr)>0.
\end{equation}
For the weighted objective in \Cref{eq:smi-objective},
\begin{equation}
    \mathcal{L}_{\mathrm{SMI}}(\theta)=0
    \quad\Longleftrightarrow\quad X_0\perp Y.
\end{equation}
However, small SMI does not uniformly imply small standard MI.
For $Y$ uniform on $\{-1,+1\}$ and $X_0=aY$ with $a>0$,
$I(X_0;Y)=\log2$, while at every fixed positive-noise level,
$I(X_t;Y)\to0$ as $a\to0$.
If additionally $\mathbb{E}_T[w_{\mathrm{SMI}}(T)]<\infty$,
the weighted objective also tends to zero for this family.
\end{proposition}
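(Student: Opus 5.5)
The proof has three parts: the equivalence, the example, and the weighted limit.

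\emph{Equivalence.} Since each $I_\theta(X_t;Y)\ge0$ and $w_{\mathrm{SMI}}\ge0$, the integrand of \Cref{eq:smi-objective} is nonnegative.

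For the direction ($\Leftarrow$), suppose $X_0\perp Y$. Then $X_t=\alpha_tX_0+\sigma_t\epsilon$ is a function of $(X_0,\epsilon)$, and $\epsilon$ is independent of $(X_0,Y)$. Hence $X_t\perp Y$ at every level, so $I_\theta(X_t;Y)=0$ for all $t$ and $\mathcal{L}_{\mathrm{SMI}}(\theta)=0$.

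For the direction ($\Rightarrow$), suppose $\mathcal{L}_{\mathrm{SMI}}(\theta)=0$. The integral of a nonnegative function is zero only if the integrand vanishes $p_T$-almost everywhere, so $w_{\mathrm{SMI}}(t)I_\theta(X_t;Y)=0$ for $p_T$-a.e.\ $t$. The set $\{t:w_{\mathrm{SMI}}(t)>0,\ \alpha_t\ne0\}$ has positive $p_T$-mass by assumption. Intersecting it with the full-measure sets where $\sigma_t>0$ and where the integrand vanishes gives at least one level $t$ with $w_{\mathrm{SMI}}(t)>0$, $\alpha_t\ne0$, $\sigma_t>0$ and $I_\theta(X_t;Y)=0$. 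At that level, \Cref{thm:smi-smooth-finite} says Gaussian spreading is injective, and \Cref{prop:smi-properties} then gives $X_0\perp Y$.

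\emph{Example.} Take $Y$ uniform on $\{-1,+1\}$ and $X_0=aY$ with $a>0$. Then $X_0$ determines $Y$, so $I(X_0;Y)=H(Y)=\log2$. For any fixed level with $\sigma_t>0$, \Cref{eq:theory-finite-smi} applies with $\mathbb{E}\|X_0-\mathbb{E}X_0\|^2=a^2$ and $d=1$:
\[
I(X_t;Y)\le \tfrac12\log\!\left(1+\alpha_t^2a^2/\sigma_t^2\right)\longrightarrow0
\quad\text{as } a\to0 .
\]
This holds also when $\alpha_t=0$, where the bound is $0$ directly.

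\emph{Weighted limit.} This is the only step needing care. For $p_T$-a.e.\ $t$, $\sigma_t>0$, and the integrand $w_{\mathrm{SMI}}(t)I_{a}(X_t;Y)$ tends to $0$ pointwise as $a\to0$. To swap limit and expectation, dominate the integrand using the data-processing bound of \Cref{prop:smi-properties}, which does not depend on $a$:
\[
0\le w_{\mathrm{SMI}}(t)\,I_a(X_t;Y)\le w_{\mathrm{SMI}}(t)\,I(X_0;Y)=w_{\mathrm{SMI}}(t)\log2 .
\]
The dominating function is integrable because $\mathbb{E}_T[w_{\mathrm{SMI}}(T)]<\infty$. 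Dominated convergence, applied along any sequence $a_n\to0$, then gives $\mathcal{L}_{\mathrm{SMI}}\to0$. Since $I(X_0;Y)=\log2$ stays fixed, small weighted SMI does not imply small MI.
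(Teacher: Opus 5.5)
Your proposal is correct and follows the paper's proof essentially step for step: nonnegativity of the integrand, a.e.\ vanishing on the positive-mass set $\{w_{\mathrm{SMI}}>0,\ \alpha_t\ne0\}$, and Gaussian injectivity for the equivalence; the finite-SMI bound with variance $a^2$ for the example; and dominated convergence with dominating function $w_{\mathrm{SMI}}(t)\log 2$ for the weighted limit. The only cosmetic difference is that you obtain the uniform bound from data processing, $I(X_t;Y)\le I(X_0;Y)=\log 2$, whereas the paper uses $I(X_t;Y)\le H(Y)=\log 2$; in this family the two bounds coincide.
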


\begin{proof}
If $X_0\perp Y$, every spread variable is independent of $Y$, so
the objective is zero. Conversely, nonnegativity implies that
a zero objective has $w_{\mathrm{SMI}}(t)I(X_t;Y)=0$ for
$p_T$-almost every $t$. On the positive-measure set of levels with
positive weight and $\alpha_t\ne0$, this gives $I(X_t;Y)=0$.
The independence equivalence of \Cref{thm:smi-smooth-finite} at any
such level yields $X_0\perp Y$.

For the binary family, $Y$ is determined by $X_0=aY$ whenever
$a>0$, so $I(X_0;Y)=H(Y)=\log2$. The finite-SMI bound gives
\begin{equation}
    I(X_t;Y)\le\frac12\log\!\left(
        1+\frac{\alpha_t^2a^2}{\sigma_t^2}\right)\longrightarrow0.
\end{equation}
For the weighted claim, use the additional uniform bound
$I(X_t;Y)\le H(Y)=\log2$. The integrable function
$w_{\mathrm{SMI}}(t)\log2$ then permits dominated convergence.
\end{proof}

\section{Experimental Details}
\label{app:experiments}

Unless stated otherwise, we report means and standard deviations (SDs) over independent training runs.
Within each task, matched MI methods share the same generator or representation architecture and update budget, differing only in their estimator heads and gradient construction.
Estimator updates use detached model outputs; model updates freeze estimator parameters while retaining gradients through their inputs.
MIGE recomputes spectral scores at each update.

\paragraph{Implementation sources.}
We adapt CLUB from \url{https://github.com/Linear95/CLUB}; NWJ, MINE, InfoNCE and SMILE from \url{https://github.com/ermongroup/smile-mi-estimator}; FLO from \url{https://github.com/qingguo666/FLO}; KNIFE from \url{https://github.com/g-pichler/knife}; $f$-DIME from \url{https://github.com/nicolaNovello/fDIME}; and MIGE from \url{https://github.com/zhouyiji/MIGE}.
GAIN, MIWAE, HyperImpute and MissForest use HyperImpute 0.1.17, \url{https://github.com/vanderschaarlab/hyperimpute}; MIRI and WGF are adapted from \url{https://github.com/yujhml/MIRI-Imputation} and \url{https://github.com/anewgithubname/gradest2}, respectively.

For Waterbirds, the GroupDRO and DFR adaptations follow \url{https://github.com/kohpangwei/group_DRO} and \url{https://github.com/PolinaKirichenko/deep_feature_reweighting}.
The image-translation model and latent $\ell_1$ baseline are adapted from BicycleGAN, \url{https://github.com/junyanz/BicycleGAN}, and the diversity regulariser from DSGAN, \url{https://github.com/maga33/DSGAN}.
DSM, ERM, HSIC and class-conditional adversarial training are implemented locally from their stated objectives.
Task-specific architectures and training schedules are specified below.

\paragraph{Hardware.}
Experiments use NVIDIA GeForce RTX 2080 Ti, 3080 Ti, 3090, 4090 and 5080 GPUs.
All computational-cost measurements use an isolated RTX 3090 exclusively (\cref{app:synthetic-cost}).

\paragraph{Data access and use.}
\label{app:data-assets}
Synthetic data are generated as specified below; external dataset links and licences or usage terms are provided in \cref{tab:imputation-datasets,app:nuisance,app:edges-additional}.
We reuse existing benchmark releases and collect no new participant data.
Blood and Maternal contain health-related attributes, and the housing datasets retain geographic coordinates; we use these data solely for methodological evaluation.
The linked source records do not establish participant-level consent for all reused datasets.

\subsection{Synthetic Experiments}
\label{app:synthetic-experiments}

\subsubsection{Support Geometry}
\label{app:synthetic-support}

\paragraph{Data and objective.}
Let $X_a=(Z,aY)$, where $Z\sim\mathcal N(0,1)$ and
$Y\sim\operatorname{Unif}\{-1,+1\}$ are independent.
The conditional supports are parallel lines separated by $2a$.
Independence occurs at $a=0$, whereas $I(X_a;Y)=\log 2$ for every $a>0$.
We minimise dependence by optimising $a$, comparing SMI with an otherwise matched no-spreading ablation.

\paragraph{Methods and training.}
Both methods fit joint-versus-product log density ratios using classifiers
with three width-128 SiLU hidden layers.
We use Adam with learning rate $10^{-3}$, batch size $4{,}096$, and five
auxiliary updates per parameter update.
SMI samples $\sigma\sim\operatorname{LogUniform}(0.05,3)$ and uses the
stop-gradient surrogate, while the ablation leaves $X_a$ unperturbed.
Across ten runs, we initialise $a=3$ and perform $2{,}000$ projected
gradient updates on $[0,5]$ with learning rate $5\times10^{-2}$.

\paragraph{Evaluation.}
We use $2a<0.1$ as the success criterion, measuring support separation.
\Cref{fig:synthetic-summary}a shows the first $1{,}500$ updates.

\subsubsection{Gradient Accuracy}
\label{app:synthetic-gradient}

\paragraph{Data and objective.}
Following \citet{chen_diffratio_2025}, the unperturbed distribution has
40 support points $\mu_m$ on a jittered $5\times8$ grid in $[-4,4]^2$.
For $Y\sim\operatorname{Unif}\{-1,+1\}$, the conditional probability
assigned to support point $\mu_m$ is proportional to $\exp(2Yb_m)$,
where $b_m$ standardises
$\sin(2\mu_{m,1})+\cos(2\mu_{m,2})$.
After Gaussian spreading, the analytic score difference is
\begin{equation}
\nabla_x\log\frac{p_\sigma(x\mid y)}{p_\sigma(x)}
=
\sigma^{-2}\sum_m
\big[\gamma_{m\mid y}(x)-\gamma_m(x)\big]\mu_m,
\end{equation}
where $\gamma_{m\mid y}$ and $\gamma_m$ denote the conditional and
marginal posterior probabilities associated with support point $\mu_m$.
Parameter gradients are obtained by averaging the corresponding
contributions with respect to the support-point locations.

\paragraph{Methods and training.}
SMI differentiates a single scale-conditioned log density ratio.
The comparator subtracts conditional and marginal scores fitted separately
by denoising score matching (DSM) \citep{vincent_connection_2011}, using
direct score outputs and a $\sigma^2$-weighted squared-error loss.
All networks have four width-256 SiLU hidden layers.
We use Adam with learning rate $10^{-3}$, batch size $2{,}048$, and
$20{,}000$ fitting updates, with
$\sigma\sim\operatorname{LogUniform}(0.05,2)$.
Across ten runs, each estimator is trained on independently sampled batches.

\paragraph{Evaluation.}
Each run evaluates both estimators on the same $16{,}384$ held-out noisy
samples.
Relative score errors use pooled root-mean-square normalisation, while
parameter-gradient errors use Euclidean normalisation against Monte Carlo
averages of the analytic field.
\Cref{tab:synthetic-gradient} reports errors pooled over noise levels.

\subsubsection{MI Optimisation}
\label{app:synthetic-control}

\paragraph{Data and objective.}
For independent $U,V\sim\mathcal N(0,I_d)$, define
\begin{equation}
\begin{aligned}
X=Q_X\!\left(\rho\odot U+\sqrt{1-\rho^2}\odot V\right),
\quad Y=Q_YU,
\quad I(X;Y)=-\frac12\sum_{j=1}^{r}\log(1-\rho_j^2).
\end{aligned}
\end{equation}
The orthogonal matrices $Q_X,Q_Y$ are fixed within each run.
The first $r$ latent coordinate pairs have trainable correlations, while the remainder have zero correlation.
We optimise either a shared correlation parameter or one parameter per coordinate pair.
The coordinatewise parameterisation is initialised with unequal MI contributions and alternating correlation signs.
We use $r=d\in\{20,100,200\}$.

We either leave $Y$ Gaussian or apply the coordinatewise transforms
$y^3$, $\operatorname{asinh}(y)$, or
$\operatorname{sign}(y)|y|^{3/2}$.
For the spiral transformation, the first two coordinates are rotated by angle $\|y\|^2/d$.
All transformations are invertible and therefore preserve MI.
Transformed $Y$ is divided by its root-mean-square magnitude, estimated from $16{,}384$ calibration samples and fixed during training.

\paragraph{MI-based methods.}
SMI spreads only $X$, sampling
$\sigma\sim\operatorname{LogUniform}(0.01,1)$ with
$\alpha_t=w_{\mathrm{SMI}}(t)=1$.
Binary cross-entropy fits the scale-conditioned log density ratio, and the generator uses its stopped input gradient.
Joint and product examples share simulator residuals and noise draws in this experiment.
All nine baselines use unperturbed inputs.
CLUB \citep{cheng_club_2020} uses a sampled diagonal-Gaussian conditional contrast;
NWJ \citep{nguyen_estimating_2010}, MINE \citep{belghazi_mutual_2018},
InfoNCE \citep{oord_representation_2018}, and SMILE
\citep{song_understanding_2020} use all-pairs scores.
MINE uses moving-normaliser momentum $0.9$, while SMILE uses its Jensen--Shannon backward rule without clipping.
FLO \citep{guo_tight_2022} learns a temperature initialised at one.
KNIFE \citep{pichler_differential_2022} uses eight-component full-covariance marginal and conditional mixtures.
The KL variant of $f$-DIME \citep{letizia_mutual_2024} uses $\alpha=1$.
MIGE \citep{wen_mutual_2020} uses joint and marginal spectral scores, median-distance bandwidth, eigen-energy threshold $0.99$, and no Gram regularisation.
\Cref{app:mi-complexity} summarises the estimator costs of these implementations.

\paragraph{Training.}
We use ten runs with fixed hyperparameters across data families and dimensions.
Neural estimators have two width-256 SiLU hidden layers with method-specific heads.
Adam uses default moments, batch size $256$, generator learning rate $5\times10^{-3}$, and estimator learning rate $10^{-3}$, with five auxiliary updates per generator update and no initial estimator fitting.
Estimator gradients are clipped to norm $10$; generator gradients are unclipped.
Training lasts at most $5{,}000$ generator updates.
Correlations use a hyperbolic-tangent parameterisation and are clipped to $[-0.9995,0.9995]$.

\paragraph{Evaluation.}
We evaluate analytic MI before spreading.
Minimisation starts at $10$ nats and uses the full budget; maximisation starts at $0.5$ nats and stops upon first reaching $50$ nats or exhausting the budget.
Both terminate on numerical failure (NaN or infinity).
For minimisation, numerical summaries average the final $1{,}000$ updates, while attainment requires MI to remain below the stated tolerance throughout this window.
For maximisation, summaries report MI at stopping.
Finite runs that miss a target are marked unmet, and failures remain in all outcome denominators.
The main figure equally weights five data families, three dimensions, two parameterisations, and ten runs.

\begin{table*}[!t]
\centering\small
\setlength{\tabcolsep}{1.5pt}
\medmuskip=0mu
\caption{\textbf{Gaussian MI optimisation.} Shared correlation ($r=d$); mean $\pm$ SD in nats over ten runs. n.f.: at least one numerical failure.}
\label{tab:synthetic-control}
\begin{tabular}{@{}lcccccc@{}}
\toprule
& \multicolumn{3}{c}{Minimisation $\downarrow$} & \multicolumn{3}{c}{Maximisation (target: 50)} \\
\cmidrule(lr){2-4}\cmidrule(lr){5-7}
Method & $d=20$ & $d=100$ & $d=200$ & $d=20$ & $d=100$ & $d=200$ \\
\midrule
\rowcolor{smirow}
\textbf{SMI (Ours)} & $\mathbf{(1.36\pm0.366)\times10^{-8}}$ & $\mathbf{(7.83\pm11.6)\times10^{-8}}$ & $\mathbf{(1.00\pm2.74)\times10^{-6}}$ & $50.007\pm0.006$ & $50.205\pm0.111$ & $50.230\pm0.197$ \\
CLUB & $\underline{(1.12\pm0.373)\times10^{-5}}$ & $(1.20\pm0.362)\times10^{-4}$ & $0.973\pm0.025$ & $50.006\pm0.003$ & $50.268\pm0.057$ & $50.095\pm0.058$ \\
NWJ & $(2.44\pm1.56)\times10^{-5}$ & $(6.59\pm4.85)\times10^{-5}$ & $(4.84\pm3.08)\times10^{-4}$ & n.f. & n.f. & $50.430\pm0.316$ \\
MINE & n.f. & $(6.55\pm7.41)\times10^{-5}$ & $(1.00\pm0.827)\times10^{-4}$ & n.f. & n.f. & n.f. \\
InfoNCE & $(1.90\pm0.991)\times10^{-5}$ & $\underline{(6.26\pm5.40)\times10^{-5}}$ & $\underline{(6.90\pm4.34)\times10^{-5}}$ & $21.391\pm0.246$ & $49.526\pm0.556$ & $50.009\pm0.007$ \\
SMILE & $(9.20\pm9.98)\times10^{-3}$ & $(9.76\pm7.50)\times10^{-5}$ & $(2.72\pm2.82)\times10^{-4}$ & $24.540\pm0.303$ & $50.032\pm0.036$ & $50.029\pm0.021$ \\
FLO & $(6.12\pm5.68)\times10^{-5}$ & $(1.79\pm1.07)\times10^{-4}$ & $(1.05\pm1.36)\times10^{-3}$ & n.f. & $50.290\pm0.152$ & $50.316\pm0.335$ \\
KNIFE & $(1.28\pm0.336)\times10^{-5}$ & $0.100\pm0.071$ & $0.027\pm0.018$ & $50.052\pm0.025$ & $50.304\pm0.234$ & $50.850\pm0.312$ \\
$f$-DIME & $(5.82\pm1.74)\times10^{-5}$ & $(2.40\pm1.38)\times10^{-4}$ & $(1.01\pm0.765)\times10^{-3}$ & n.f. & n.f. & $50.298\pm0.143$ \\
MIGE & $(2.66\pm0.385)\times10^{-4}$ & $(1.07\pm0.117)\times10^{-3}$ & $(4.05\pm0.661)\times10^{-3}$ & $50.005\pm0.004$ & $50.242\pm0.143$ & $50.349\pm0.210$ \\
\bottomrule
\end{tabular}
\end{table*}

\paragraph{Additional results.}
\Cref{fig:synthetic-gaussian,fig:synthetic-cubic,fig:synthetic-asinh,fig:synthetic-signed-power,fig:synthetic-spiral} report results by data family and dimension.

\begin{figure*}[!t]
\centering
\begin{subfigure}[t]{0.49\textwidth}
\includegraphics[width=\linewidth]{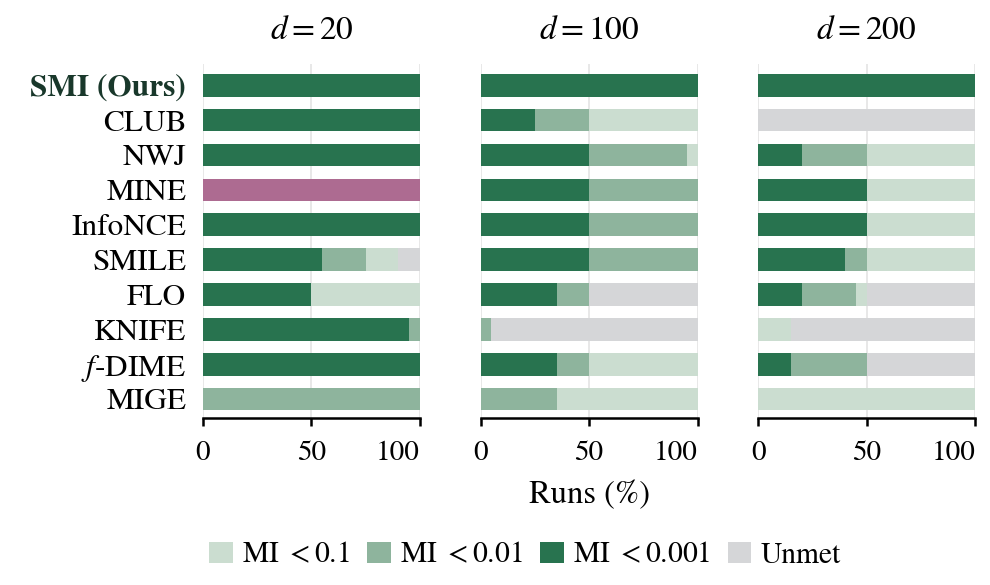}
\caption{MI minimisation.}\label{fig:synthetic-gaussian-min}
\end{subfigure}\hfill
\begin{subfigure}[t]{0.49\textwidth}
\includegraphics[width=\linewidth]{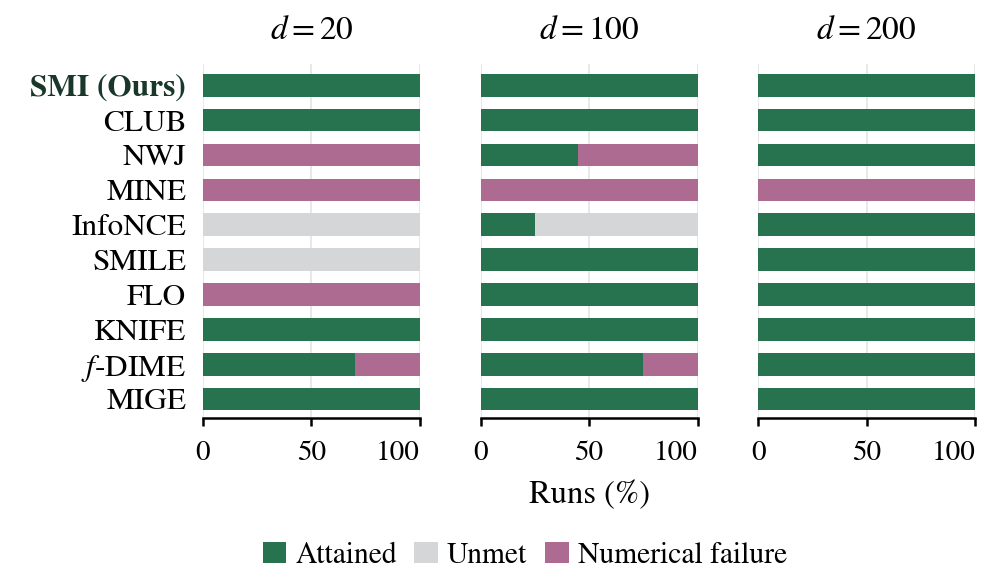}
\caption{MI maximisation.}\label{fig:synthetic-gaussian-max}
\end{subfigure}
\caption{\textbf{Gaussian data.} Outcomes by dimension, pooled over one shared correlation parameter or one per coordinate pair (ten runs each).}
\label{fig:synthetic-gaussian}
\end{figure*}

\begin{figure*}[!t]
\centering
\begin{subfigure}[t]{0.49\textwidth}
\includegraphics[width=\linewidth]{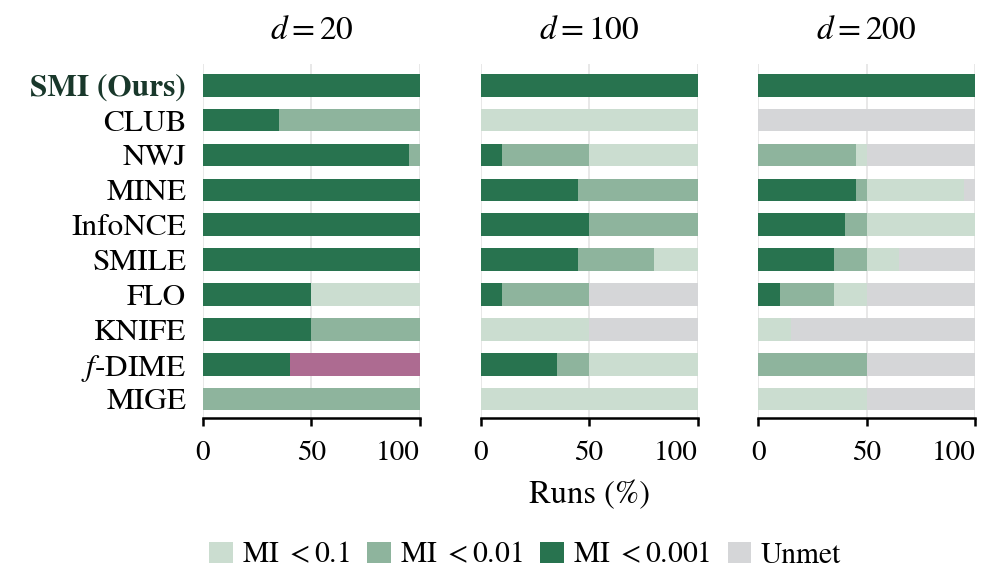}
\caption{MI minimisation.}\label{fig:synthetic-cubic-min}
\end{subfigure}\hfill
\begin{subfigure}[t]{0.49\textwidth}
\includegraphics[width=\linewidth]{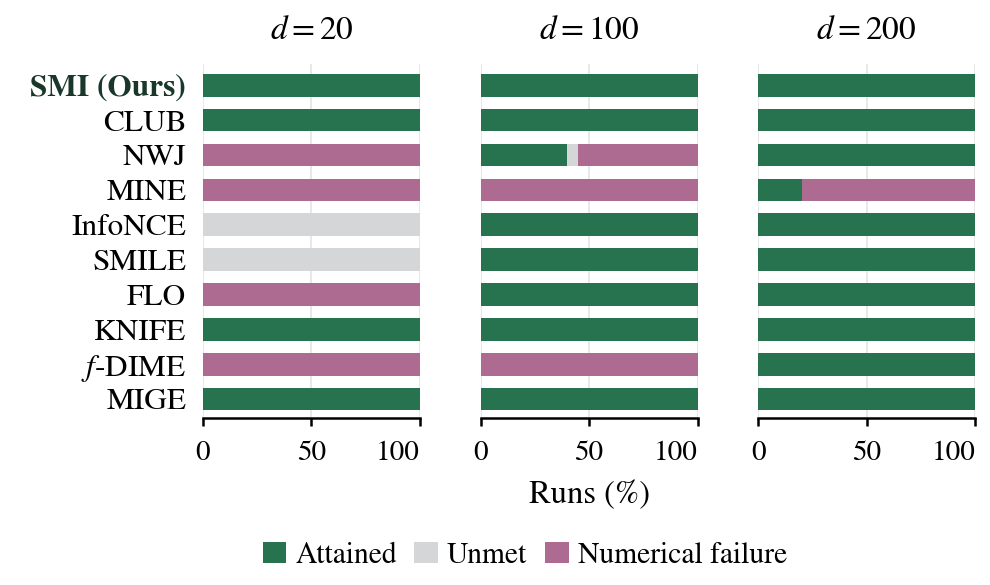}
\caption{MI maximisation.}\label{fig:synthetic-cubic-max}
\end{subfigure}
\caption{\textbf{Cubic data.} Outcomes by dimension, pooled over one shared correlation parameter or one per coordinate pair (ten runs each).}
\label{fig:synthetic-cubic}
\end{figure*}

\begin{figure*}[!t]
\centering
\begin{subfigure}[t]{0.49\textwidth}
\includegraphics[width=\linewidth]{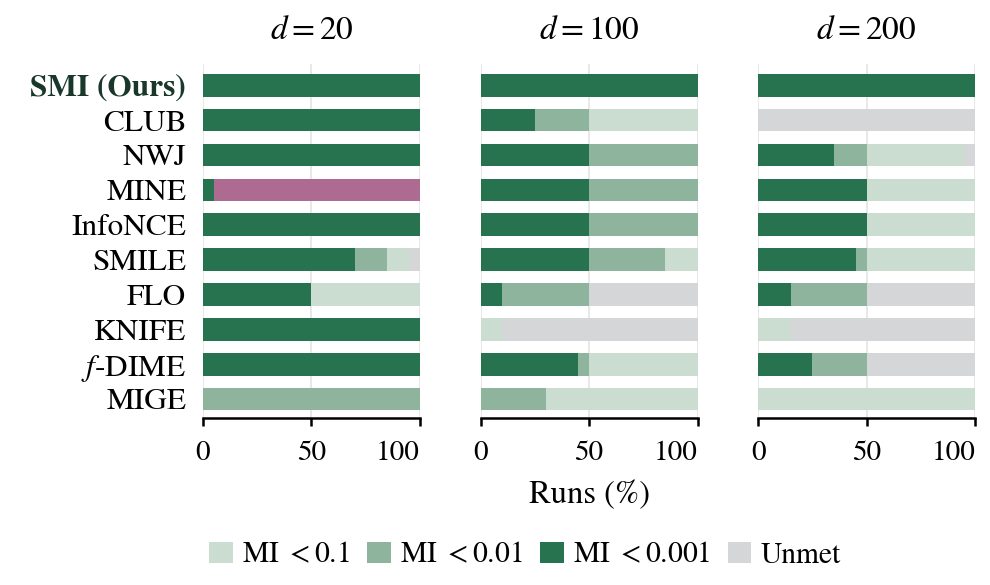}
\caption{MI minimisation.}\label{fig:synthetic-asinh-min}
\end{subfigure}\hfill
\begin{subfigure}[t]{0.49\textwidth}
\includegraphics[width=\linewidth]{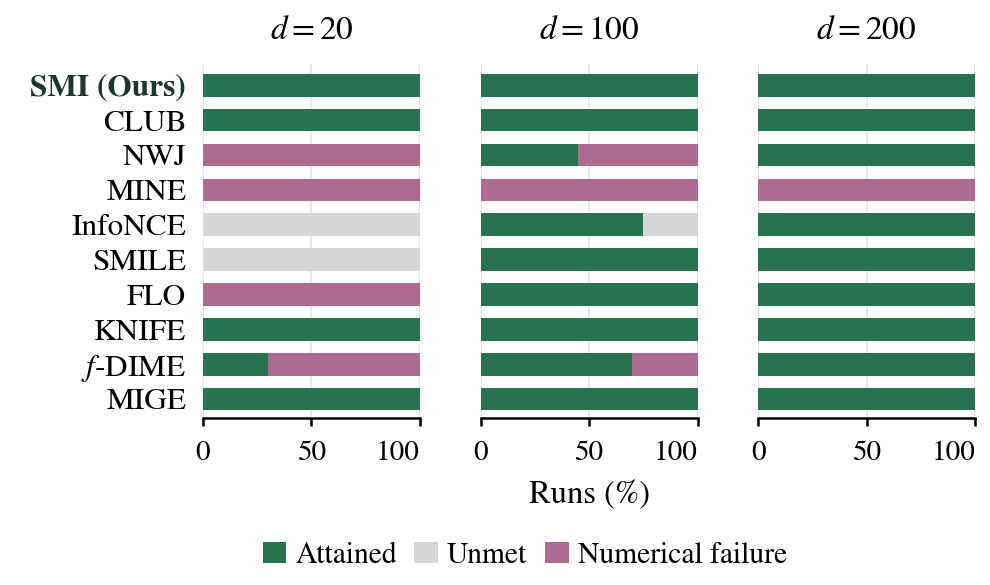}
\caption{MI maximisation.}\label{fig:synthetic-asinh-max}
\end{subfigure}
\caption{\textbf{Asinh data.} Outcomes by dimension, pooled over one shared correlation parameter or one per coordinate pair (ten runs each).}
\label{fig:synthetic-asinh}
\end{figure*}

\begin{figure*}[!t]
\centering
\begin{subfigure}[t]{0.49\textwidth}
\includegraphics[width=\linewidth]{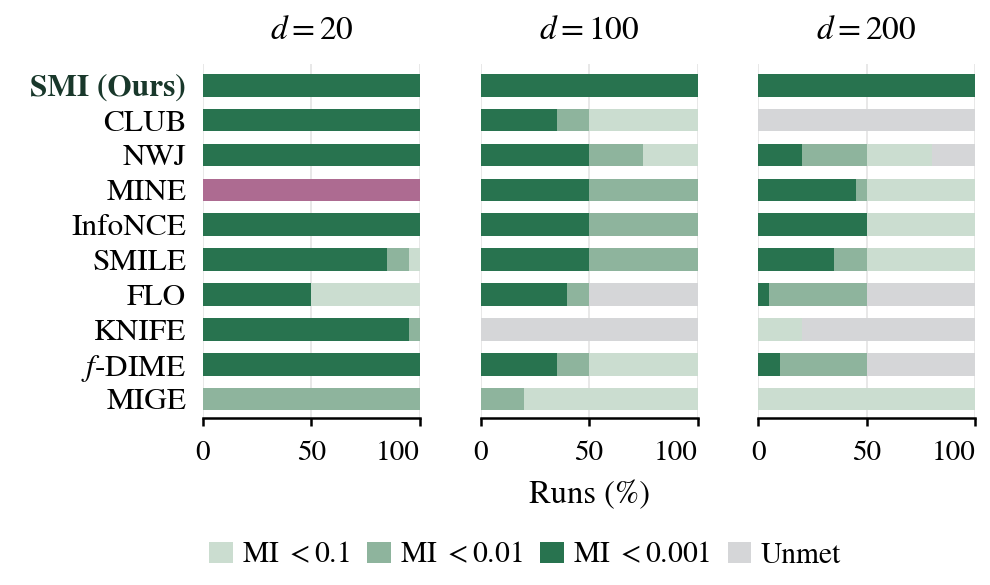}
\caption{MI minimisation.}\label{fig:synthetic-signed-power-min}
\end{subfigure}\hfill
\begin{subfigure}[t]{0.49\textwidth}
\includegraphics[width=\linewidth]{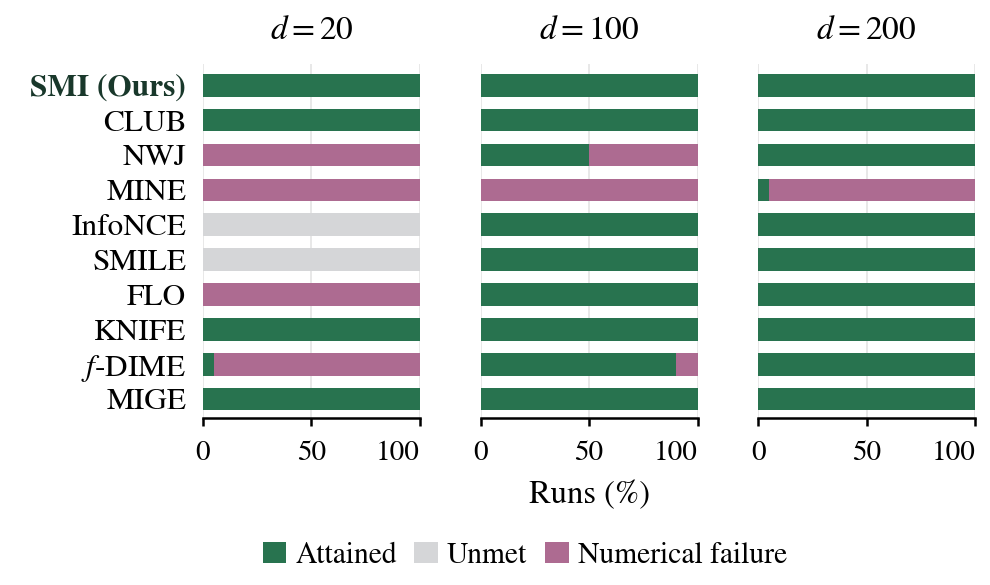}
\caption{MI maximisation.}\label{fig:synthetic-signed-power-max}
\end{subfigure}
\caption{\textbf{Signed-power data.} Outcomes by dimension, pooled over one shared correlation parameter or one per coordinate pair (ten runs each).}
\label{fig:synthetic-signed-power}
\end{figure*}

\begin{figure*}[!t]
\centering
\begin{subfigure}[t]{0.49\textwidth}
\includegraphics[width=\linewidth]{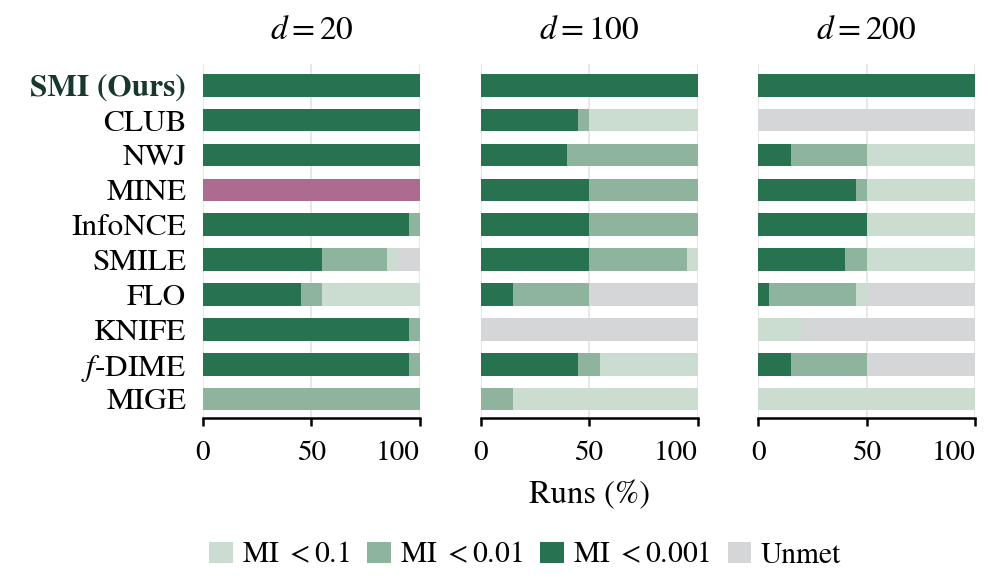}
\caption{MI minimisation.}\label{fig:synthetic-spiral-min}
\end{subfigure}\hfill
\begin{subfigure}[t]{0.49\textwidth}
\includegraphics[width=\linewidth]{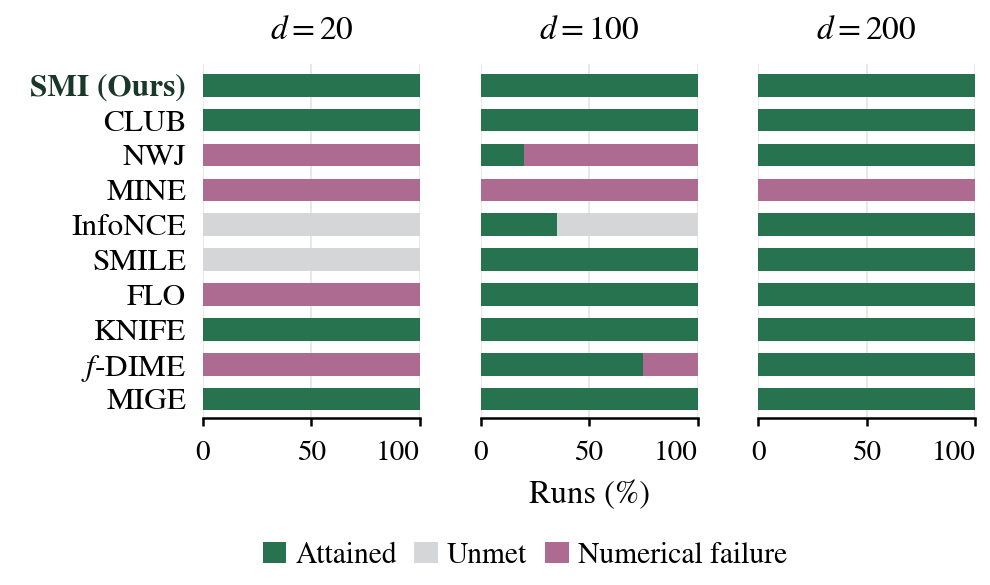}
\caption{MI maximisation.}\label{fig:synthetic-spiral-max}
\end{subfigure}
\caption{\textbf{Spiral data.} Outcomes by dimension, pooled over one shared correlation parameter or one per coordinate pair (ten runs each).}
\label{fig:synthetic-spiral}
\end{figure*}

\subsubsection{Information Across Noise Levels}
\label{app:information-across-noise}

\paragraph{Data and objective.}
We evaluate population quantities underlying \cref{sec:smi-theory-dissipation}.
Let $Z,\epsilon\sim\mathcal N(0,1)$ be mutually independent and independent of $Y$.
Before spreading, we consider
\begin{align}
\text{Noiseless binary:}\qquad
&Y\sim\operatorname{Unif}\{-1,+1\},&
X_0&=Y, \label{eq:noise-noiseless-binary}\\
\text{Binary-input Gaussian:}\qquad
&Y\sim\operatorname{Unif}\{-1,+1\},&
X_0&=Y+0.5Z, \label{eq:noise-binary-gaussian}\\
\text{Correlated Gaussian:}\qquad
&Y\sim\mathcal N(0,1),&
X_0&=0.8Y+0.6Z, \label{eq:noise-correlated-gaussian}\\
\text{Perfectly correlated Gaussian:}\qquad
&Y\sim\mathcal N(0,1),&
X_0&=Y. \label{eq:noise-perfect-gaussian}
\end{align}
The binary-input Gaussian model satisfies
$X_0\mid Y=y\sim\mathcal N(y,0.25)$.
The two Gaussian pairs have correlations $\rho=0.8$ and $\rho=1$, with the latter having a singular joint distribution.
In the order above, the unperturbed MI values are $\log 2$, approximately $0.63272$, $-\log 0.6$, and $+\infty$.
Independent controls use $X_0=Z$ for both choices of $Y$.

\paragraph{Evaluation.}
Write $X_0=aY+bZ$ and spread only $X_0$:
\begin{equation}
U_\tau=X_0+\sqrt{\tau}\epsilon,
\qquad
\tau=\sigma^2>0,
\qquad
q=b^2+\tau.
\end{equation}
Define
$v_\tau(u,y)=\partial_u\log[p_\tau(u\mid y)/p_\tau(u)]$.
For binary $Y$, Gaussian-channel identities \citep{guo_mutual_2005} give
\begin{equation}
I(U_\tau;Y)
=
\log 2-\mathbb E\log\!\left(1+e^{-2aYU_\tau/q}\right),
\qquad
v_\tau(u,y)
=
\frac{a}{q}
\left[
y-\tanh\!\left(\frac{au}{q}\right)
\right].
\end{equation}
For Gaussian $Y$,
\begin{equation}
I(U_\tau;Y)
=
\frac12\log\left(1+\frac{a^2}{q}\right),
\qquad
v_\tau(u,y)
=
-\frac{u-ay}{q}
+\frac{u}{a^2+q}.
\end{equation}
We compute expectations by quadrature and compare five-point finite differences with
$-\partial I/\partial\log\tau
=\tfrac{\tau}{2}\mathbb E[v_\tau^2]$
at 61 log-spaced values of $\sqrt{\tau}\in[10^{-4},10^2]$.
We also verify the integrated identity, the schedule
$\alpha_t=e^{-t/2}$, $\sigma_t^2=1-e^{-t}$, and the generator derivative
$\partial_a I=\mathbb E[Yv_\tau]$.
To examine the zero-noise limit, we fix the generator and use unit weights with
$\sigma\sim\operatorname{LogUniform}(0.01\eta,\eta)$ for
$\eta\in[10^{-6},10^3]$.

\paragraph{Additional results.}
All tested differential and integrated identities agree to absolute error below $10^{-7}$, while the independent controls have zero MI and score energy.
As the noise distribution contracts towards zero, the weighted SMI approaches the finite unperturbed MI when finite and diverges for the perfectly correlated Gaussian case
(\cref{fig:information-across-noise}).

\begin{figure*}[!t]
\centering
\begin{subfigure}[t]{0.49\textwidth}
\includegraphics[width=\linewidth]{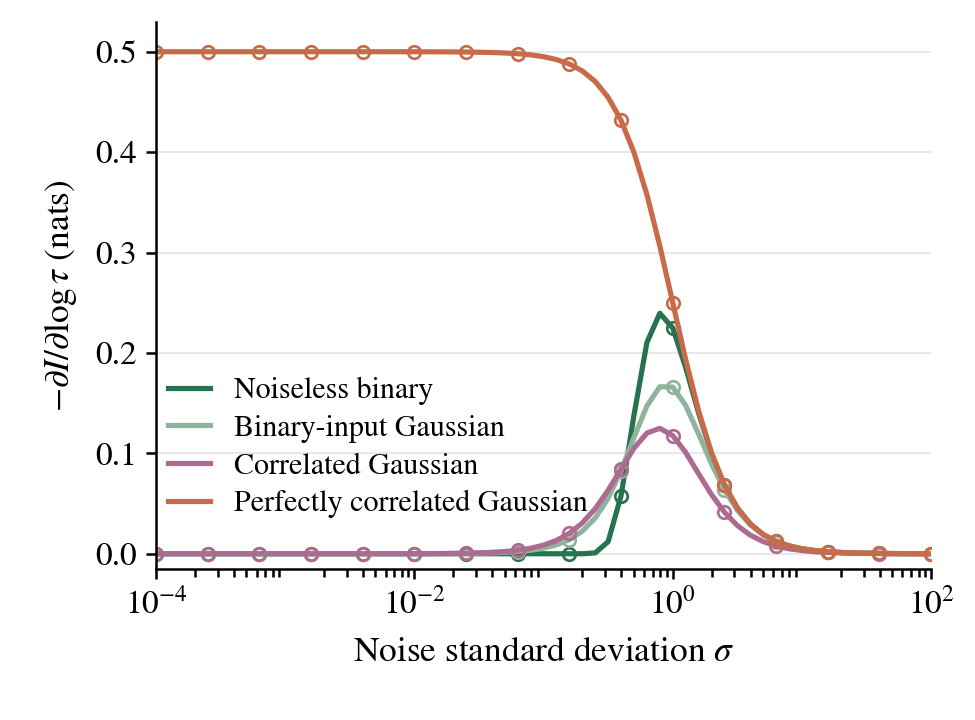}
\caption{Information dissipation.}
\end{subfigure}\hfill
\begin{subfigure}[t]{0.49\textwidth}
\includegraphics[width=\linewidth]{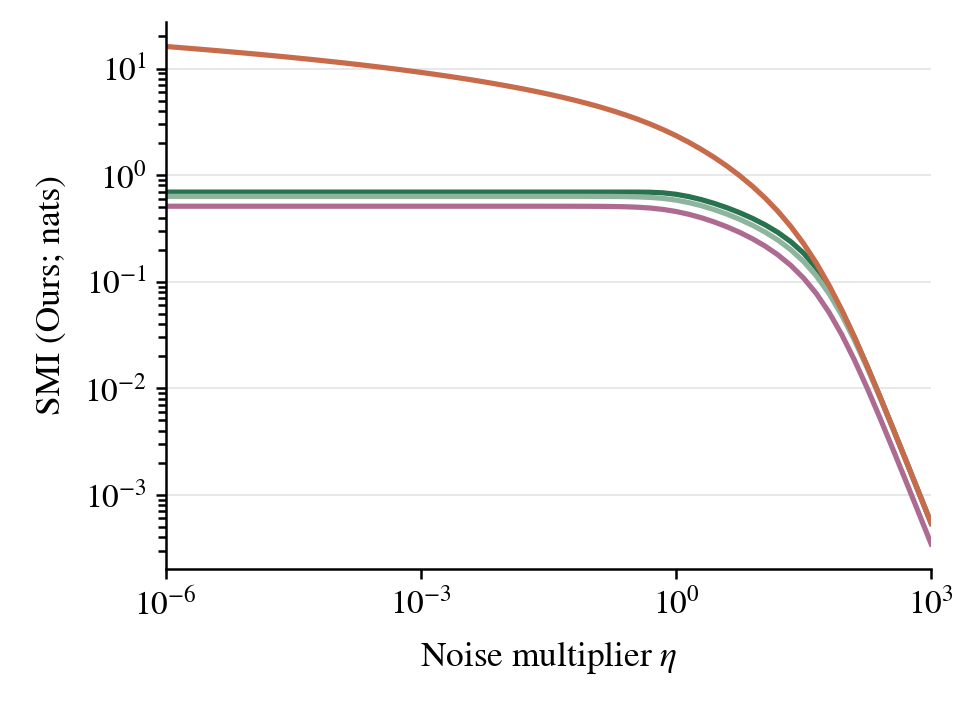}
\caption{Rescaling the noise distribution.}
\end{subfigure}
\caption{\textbf{Information across noise levels.} Exact-density
evaluations. (a) $\tfrac\tau2\mathbb E[v_\tau^2]$ (lines) and
$-\partial I/\partial\log\tau$ (markers), with $\tau=\sigma^2$.
(b) Multiscale SMI for
$\sigma\sim\operatorname{LogUniform}(0.01\eta,\eta)$.}
\label{fig:information-across-noise}
\end{figure*}

\subsection{Missing Data Imputation}
\label{app:tabular-imputation}

\paragraph{Data and objective.}
We use all rows from ten TabArena datasets \citep{erickson_tabarena_2025}, retaining numerical features and excluding targets (\cref{tab:imputation-datasets}).
For Airfoil, angle of attack is represented in degrees; Wine retains both red and white wines but excludes the colour indicator; Miami Housing excludes the categorical feature \texttt{avno60plus}.
We perform transductive imputation without a train/test split, retaining duplicate and entirely masked rows.

Three runs independently repeat masking, initialisation, and fitting.
Within each run, uniform draws for each cell define nested $30\%$, $60\%$, and $80\%$ MCAR masks, which are fixed and shared across methods.
Masking precedes preprocessing: each feature is standardised using its observed mean and population standard deviation, with zero scales replaced by one.
Masked values are used only for evaluation.
The initial seven datasets were selected after reviewing an eight-dataset study; Airfoil, Wine, and Miami Housing were selected before their results were available.
All fits on the ten reported datasets are included.

For observed-entry mask $M$, we minimise dependence between the completed data and the mask:
\begin{equation}
\min_\theta I(X_\theta;M),\qquad
X_\theta
=
M\odot X+(1-M)\odot g_\theta(M\odot X,M,Z).
\end{equation}
This is the MI objective underlying WGF and MIRI, described below, with observed entries held fixed.

\begin{table}[!t]
\centering\small
\setlength{\tabcolsep}{4pt}
\caption{\textbf{TabArena datasets.} All rows; numerical feature counts exclude targets. Licences follow the linked OpenML records.}
\label{tab:imputation-datasets}
\begin{tabular}{@{}llrrl@{}}
\toprule
Dataset & Source & Rows & Features & Licence \\
\midrule
Blood & \url{https://www.openml.org/d/46913} & 748 & 4 & CC BY 4.0 \\
Concrete & \url{https://www.openml.org/d/46917} & 1,030 & 8 & CC BY 4.0 \\
Fish & \url{https://www.openml.org/d/46954} & 907 & 6 & CC BY 4.0 \\
Hazelnut & \url{https://www.openml.org/d/46930} & 2,400 & 30 & CC BY-SA \\
Houses & \url{https://www.openml.org/d/46934} & 20,640 & 8 & Public \\
Maternal & \url{https://www.openml.org/d/46941} & 1,014 & 6 & CC BY 4.0 \\
Protein & \url{https://www.openml.org/d/46949} & 45,730 & 9 & CC BY 4.0 \\
Airfoil & \url{https://www.openml.org/d/46904} & 1,503 & 5 & CC BY 4.0 \\
Wine & \url{https://www.openml.org/d/46964} & 6,497 & 11 & CC BY 4.0 \\
Miami Housing & \url{https://www.openml.org/d/46942} & 13,776 & 14 & CC BY-NC-SA \\
\bottomrule
\end{tabular}
\par\smallskip
\begin{minipage}{\textwidth}\footnotesize
OpenML specifies no licence version for Hazelnut or Miami Housing; ``Public'' for Houses does not identify a standard licence.
\end{minipage}
\end{table}
\paragraph{MI-based methods.}
The ten matched methods share the residual imputer
\begin{align}
g_\theta(M\odot X,M,Z)=Z+h_\theta(M\odot X,M,Z),
\end{align}
where the coordinates of $Z$ are sampled independently from the observed feature marginals.
SMI adds Gaussian noise, and after restoring observed entries, it targets
\begin{equation}
\label{eq:imputation-smi}
\mathbb E_{\sigma\sim\operatorname{LogUniform}(0.01,1)}
\big[I(X_\theta+\sigma\epsilon;M)\big].
\end{equation}
Here $\epsilon\sim\mathcal N(0,I_d)$ is independent across examples and between joint and product draws; masks are left unchanged.
Binary cross-entropy fits the scale-conditioned logit
$f_\phi(T,M,\log\sigma)$, where $T=X_\theta+\sigma\epsilon$.
The imputer minimises the stop-gradient surrogate
\begin{equation}
\mathbb E\!\left[
\operatorname{stopgrad}\!\left(\nabla_T f_\phi(T,M,\log\sigma)\right)^\top T
\right].
\end{equation}

CLUB \citep{cheng_club_2020} uses a sampled diagonal-Gaussian $q(X\mid M)$ with a tanh log-variance head.
KNIFE \citep{pichler_differential_2022} fits eight-component full-covariance marginal and conditional mixtures.
NWJ \citep{nguyen_estimating_2010}, MINE \citep{belghazi_mutual_2018},
InfoNCE \citep{oord_representation_2018}, and SMILE
\citep{song_understanding_2020} retain their released all-pairs objectives,
including MINE's moving-normaliser backward rule and SMILE's
Jensen--Shannon backward gradient with clipping threshold five.
FLO \citep{guo_tight_2022} uses normalised embeddings, fixed inverse temperature one, and an auxiliary variational head.
The KL variant of $f$-DIME \citep{letizia_mutual_2024} uses a Softplus ratio, deranged product pairs, and $\alpha=1$, without a log stabiliser.
MIGE \citep{wen_mutual_2020} uses the joint-minus-marginal spectral score field with Gaussian median-distance bandwidth, eigen-energy threshold $0.99$, and no Gram regularisation; only completion-coordinate gradients are used.

\paragraph{Other methods.}
The six established imputers retain their imputation-specific procedures.

GAIN \citep{yoon_gain_2018} and MIWAE \citep{mattei_miwae_2019} use the HyperImpute 0.1.17 implementations.
GAIN retains sigmoid heads, observed min--max normalisation, hint probability $0.9$, reconstruction weight $10$, uniform $[0,0.01]$ missing-input noise, and separate discriminator and generator minibatches.
This input noise is distinct from SMI's spreading of completed rows.
MIWAE uses a Gaussian proposal, Student-$t$ likelihood, 20 training and ten prediction importance samples, and weighted sampled-value prediction; its latent dimension equals the feature count.

MIRI \citep{yu_missing_2025} uses rectified flow \citep{liu_flow_2023}, missing-coordinate velocity loss, and coupled augmented-state Euler integration.
Its ideal proximal objective satisfies
\begin{equation}
\operatorname{KL}(Q^{X,M}\Vert Q_{t-1}^X\otimes P_M)
=
I_Q(X;M)+\operatorname{KL}(Q^X\Vert Q_{t-1}^X).
\end{equation}
WGF \citep{liu_minimizing_2024} uses the local-linear quadratic dual, a product of Gaussian-data and linear-mask kernels, independent Bernoulli product masks, and unit particle steps on missing entries.
Its bandwidth is the square root of half the lower median of ordered squared distances, with all rows contributing to the kernel calculation.

HyperImpute \citep{jarrett_hyperimpute_2022} and MissForest \citep{stekhoven_missforest_2012} use HyperImpute 0.1.17 plugins.
HyperImpute selects among random forests, linear/logistic models, XGBoost, CatBoost, and neural networks, using simple optimisation, per-column/per-iteration selection, patience five, lazy selection, and at most 40 inner iterations.
MissForest retains its native defaults and stopping rule, with at most 100 iterations.

\paragraph{Training.}
The matched imputer has three width-256 SiLU hidden layers and a zero-initialised output layer.
Adam uses default moments, no weight decay, batch size $256$, and $10{,}000$ imputer updates.
The learning rate is $10^{-5}$ for the first $2{,}000$ updates, then cosine-decayed to $10^{-6}$.
Applicable estimator networks use the same hidden-layer architecture with method-specific heads; they receive $5{,}000$ initial fitting updates and 20 updates per imputer step at learning rate $3\times10^{-4}$.
MIGE has no learned auxiliary optimiser.
Losses retain their native units without normalisation by feature count.
We use no reconstruction or remasking loss, moment penalty, output clipping, external tuning, or metric-based early stopping.

GAIN and MIWAE use three width-256 SiLU hidden layers and $10{,}000$ Adam updates with nominal batch size $256$.
Their generator learning rates are $3\times10^{-4}$ for the first $2{,}000$ updates, then cosine-decayed to $3\times10^{-5}$; GAIN's discriminator remains at $3\times10^{-4}$.
MIRI uses the same hidden-layer architecture, ten fresh-network rounds of $1{,}000$ Adam updates at constant learning rate $3\times10^{-4}$, batch size $256$, and 100 Euler steps.
WGF uses 500 outer iterations, each with 101 RMSprop updates at learning rate $10^{-3}$, $\alpha=0.9$, and $\epsilon=10^{-6}$.

\paragraph{Evaluation.}
For missing cells $\Omega=\{(i,j):M_{ij}=0\}$, pointwise error is
\begin{equation}
\operatorname{RMSE}
=
\sqrt{
|\Omega|^{-1}
\sum_{(i,j)\in\Omega}
(\hat x_{ij}-x_{ij})^2
}.
\end{equation}
Distributional error is the biased empirical squared maximum mean discrepancy
\citep{gretton_kernel_2012} between complete rows:
\begin{equation}
\widehat{\operatorname{MMD}}^2
=
\frac{1}{n^2}\sum_{i,j=1}^{n}
\left[
k_h(x_i,x_j)
+k_h(\hat x_i,\hat x_j)
-2k_h(x_i,\hat x_j)
\right],
\qquad
k_h(u,v)=e^{-\|u-v\|^2/(2h^2)}.
\end{equation}
The bandwidth is
$h=\operatorname{median}_{i,j}\|x_i-\hat x_j\|$,
including diagonal pairs, and is computed separately for each fit.
Both metrics use one unperturbed completion in observed-only standardised coordinates, with observed entries restored exactly.
MMD evaluation uses all rows.

Within each dataset--rate--run block, all 16 methods are ranked jointly.
Finite ties receive average ranks, while failed fits receive rank 16.
Ranks are averaged equally across datasets for each rate and run, and the main table reports the mean and SD across the three run-level averages.
No block is omitted.
There are 43 numerical failures and one reproducible HyperImpute implementation error on Wine at $80\%$ missingness, caused by a singleton batch in upstream BatchNorm.

\paragraph{Additional results.}
\Cref{fig:tabular-imputation-raw} reports every method, dataset, and missingness rate.
Circles denote individual runs; diamonds with SD whiskers are shown only when all three runs succeed.
Incomplete groups retain finite runs without a mean and mark numerical failures as n.f.\ ($k/3$) or implementation errors as err.\ ($k/3$).
Logarithmic axis limits are shared across missingness rates within each dataset and metric.

\begin{figure*}[!t]
\centering
\captionsetup[subfigure]{skip=0pt,font=small}
\begin{subfigure}{\textwidth}
\includegraphics[width=\linewidth]{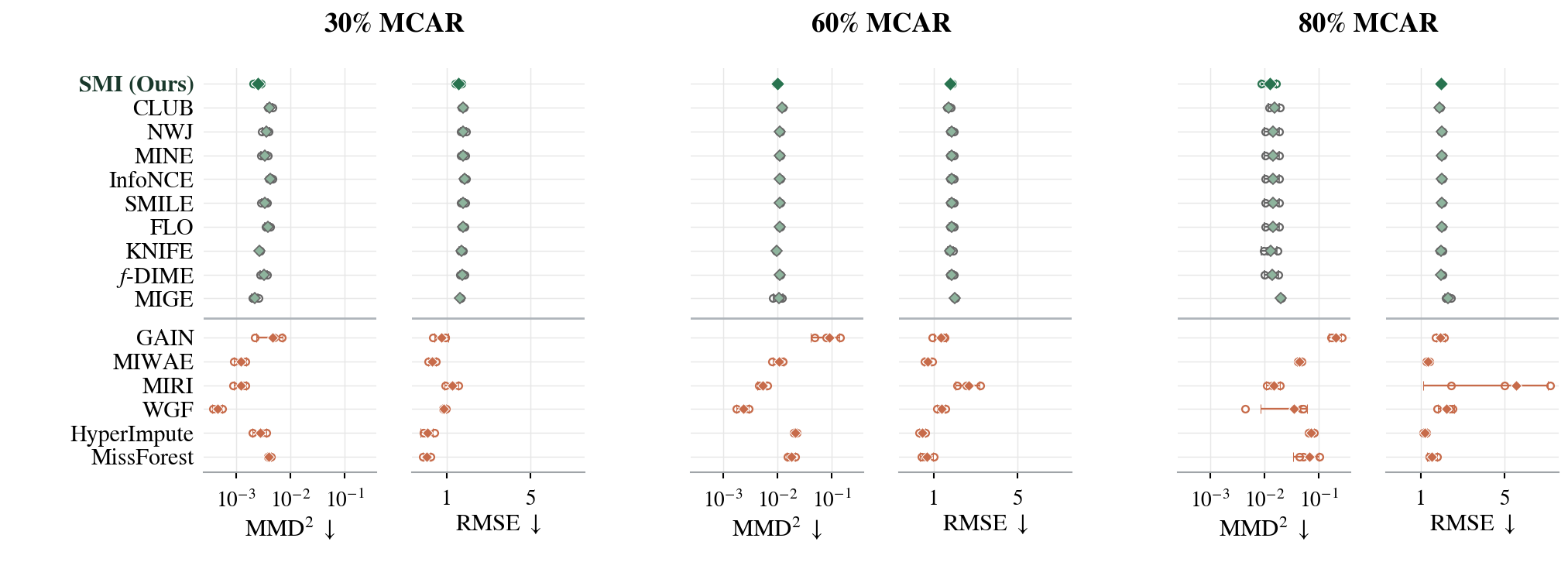}
\caption{Blood.}\label{fig:imputation-raw-blood}
\end{subfigure}\par\vspace{3pt}
\begin{subfigure}{\textwidth}
\includegraphics[width=\linewidth]{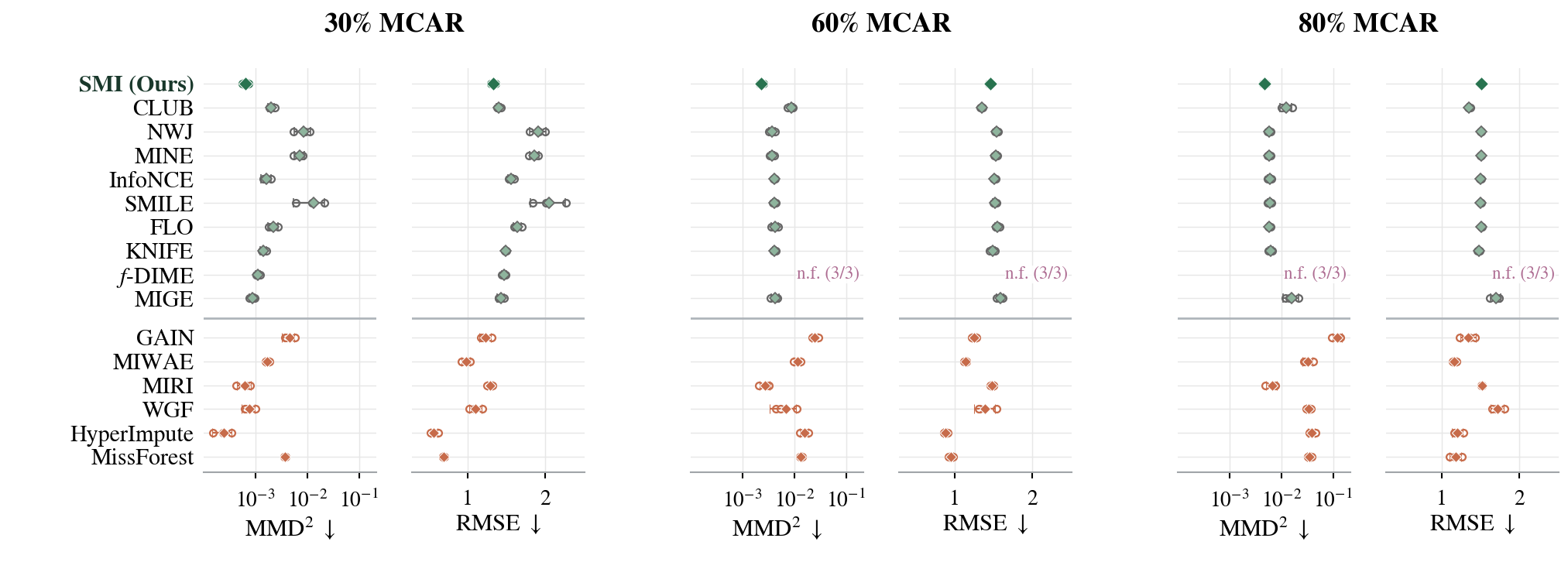}
\caption{Concrete.}\label{fig:imputation-raw-concrete}
\end{subfigure}
\caption{\textbf{Missing data imputation.} MMD$^2$ and RMSE at 30\%, 60\% and 80\% MCAR; individual runs and mean $\pm$ SD for groups with three successful fits.}
\label{fig:tabular-imputation-raw}
\end{figure*}

\begin{figure*}[!t]
\ContinuedFloat
\centering
\captionsetup[subfigure]{skip=0pt,font=small}
\begin{subfigure}{\textwidth}
\includegraphics[width=\linewidth]{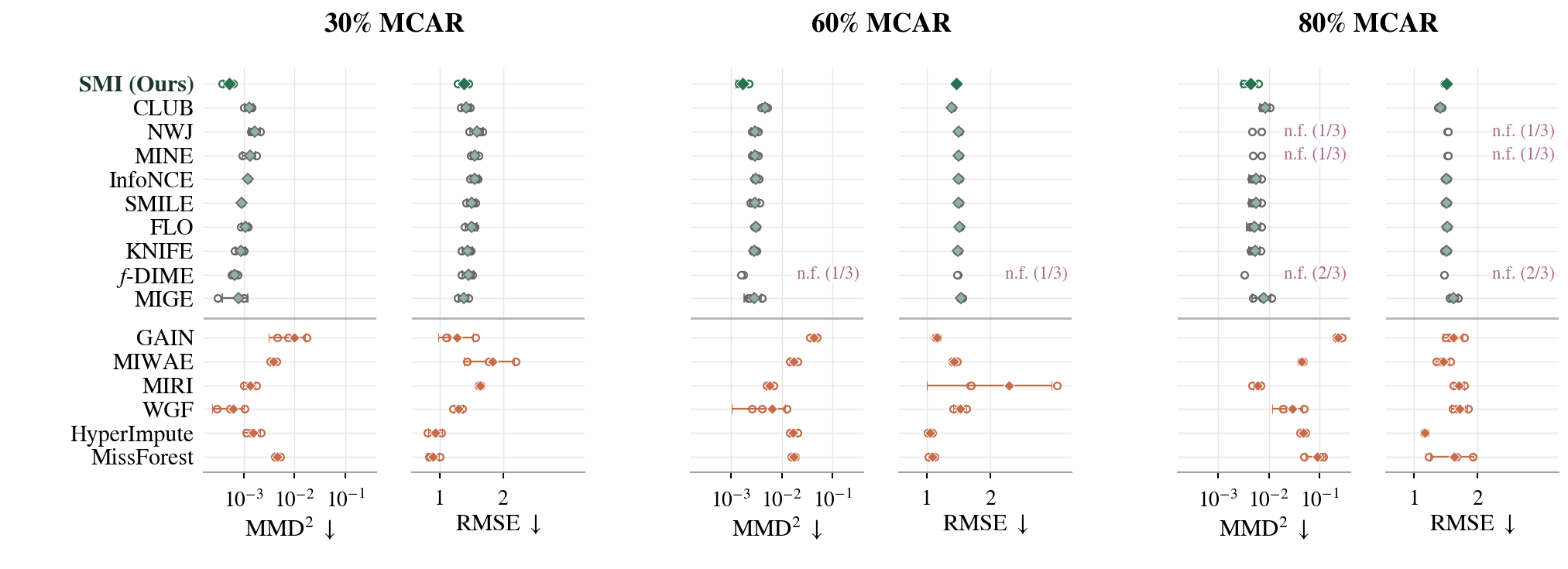}
\caption{Fish.}\label{fig:imputation-raw-fish}
\end{subfigure}\par\vspace{3pt}
\begin{subfigure}{\textwidth}
\includegraphics[width=\linewidth]{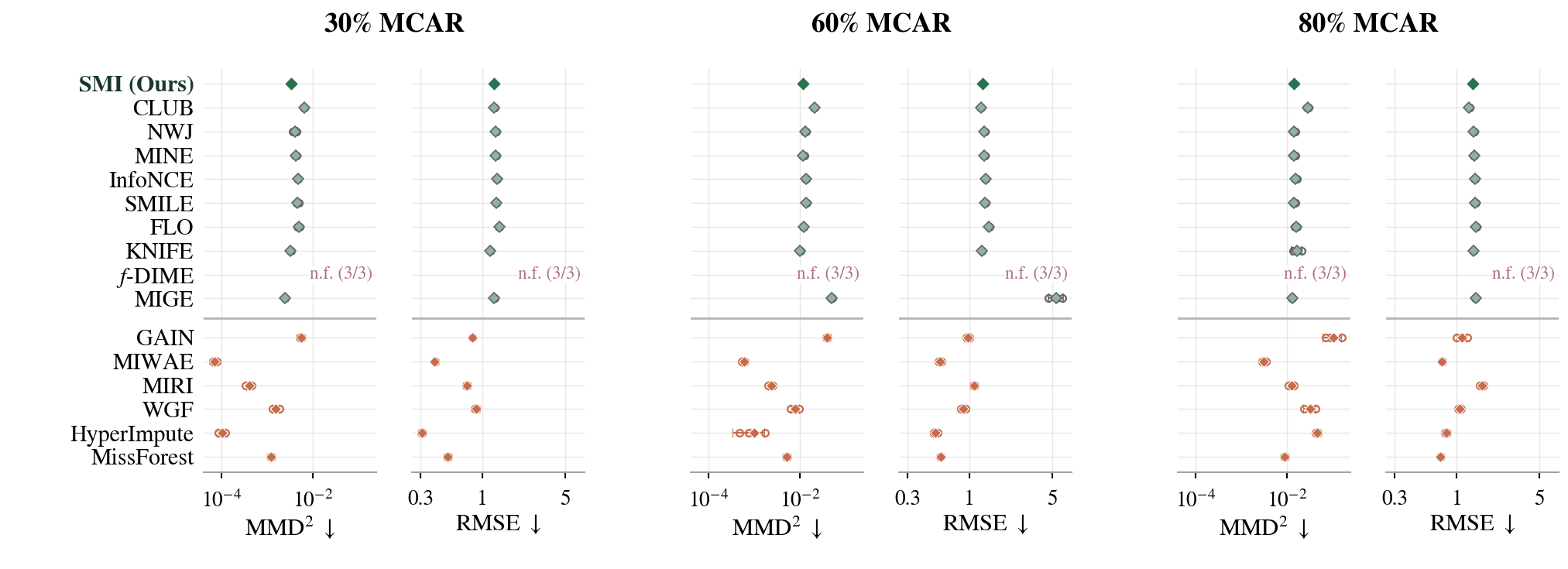}
\caption{Hazelnut.}\label{fig:imputation-raw-hazelnut}
\end{subfigure}\par\vspace{3pt}
\begin{subfigure}{\textwidth}
\includegraphics[width=\linewidth]{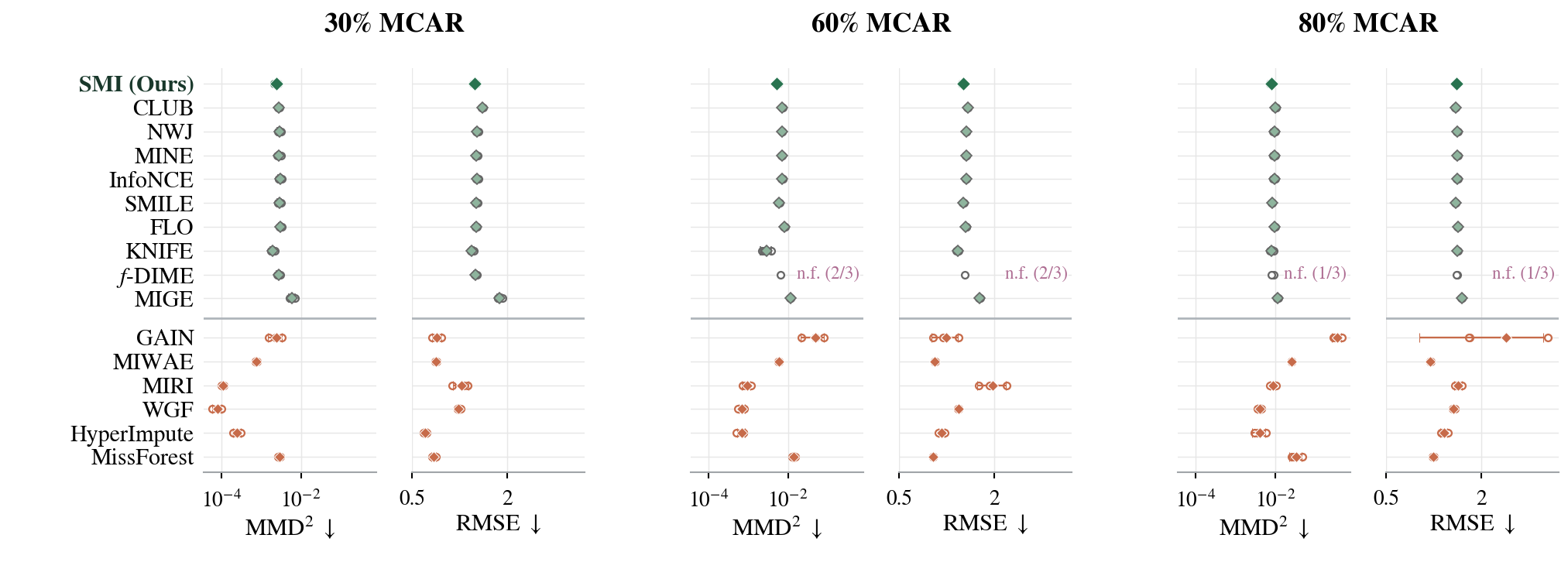}
\caption{Houses.}\label{fig:imputation-raw-houses}
\end{subfigure}
\caption[]{\textbf{Missing data imputation (continued).}}
\end{figure*}

\begin{figure*}[!t]
\ContinuedFloat
\centering
\captionsetup[subfigure]{skip=0pt,font=small}
\begin{subfigure}{\textwidth}
\includegraphics[width=\linewidth]{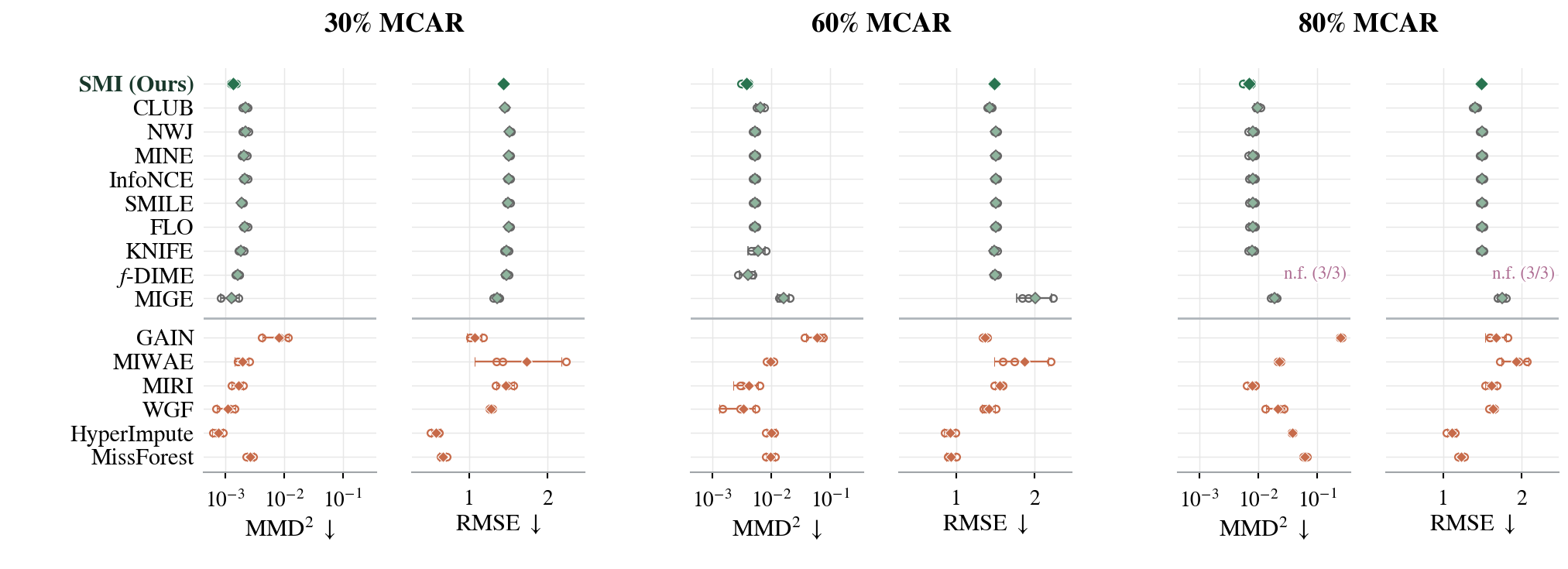}
\caption{Maternal.}\label{fig:imputation-raw-maternal}
\end{subfigure}\par\vspace{3pt}
\begin{subfigure}{\textwidth}
\includegraphics[width=\linewidth]{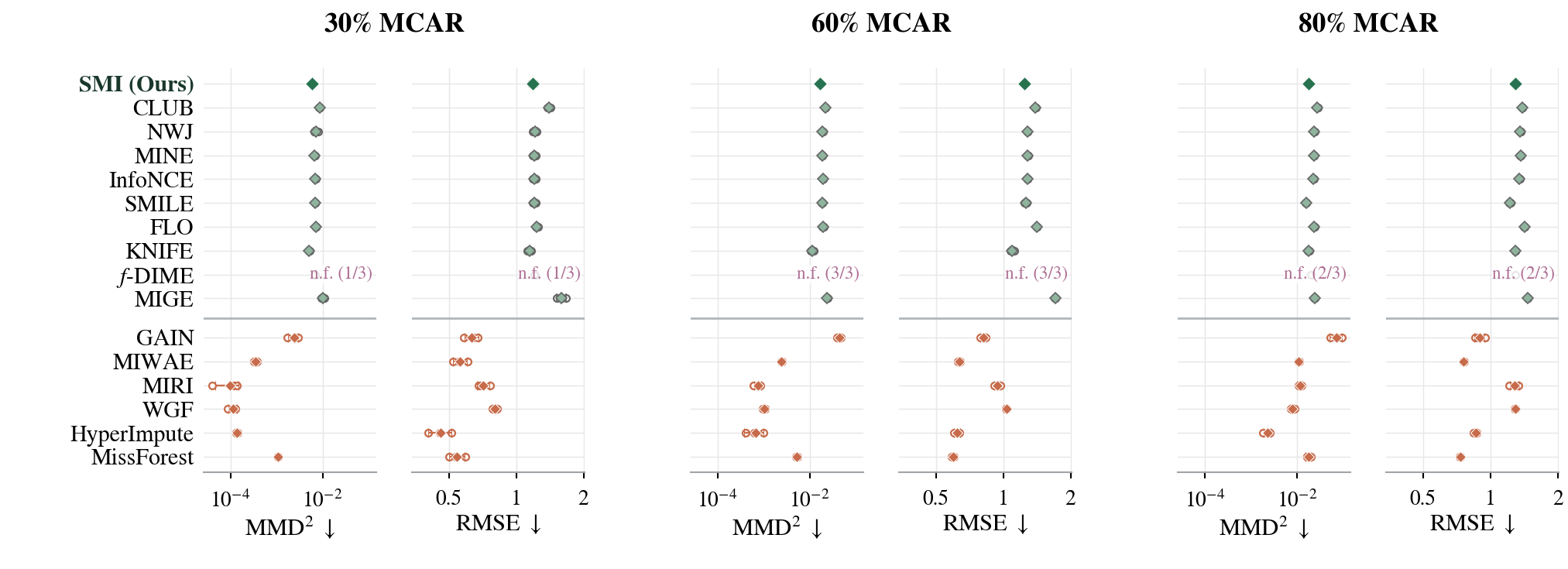}
\caption{Protein.}\label{fig:imputation-raw-protein}
\end{subfigure}\par\vspace{3pt}
\begin{subfigure}{\textwidth}
\includegraphics[width=\linewidth]{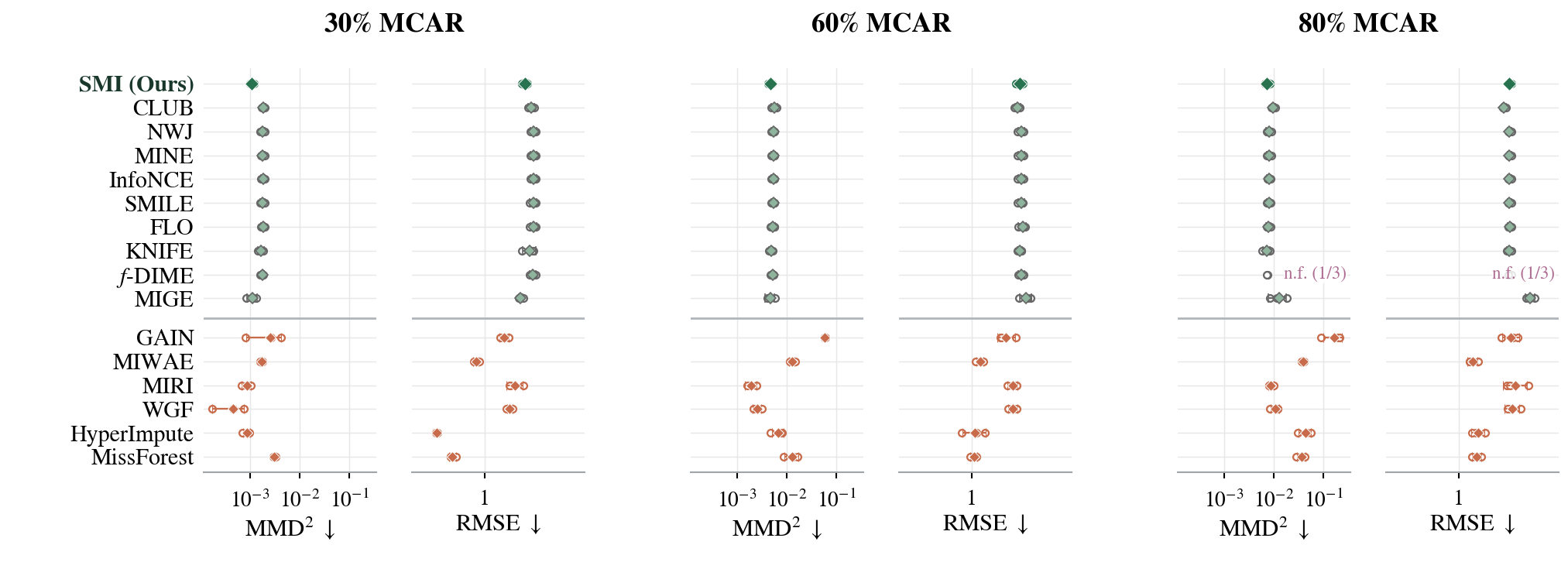}
\caption{Airfoil.}\label{fig:imputation-raw-airfoil}
\end{subfigure}
\caption[]{\textbf{Missing data imputation (continued).}}
\end{figure*}

\begin{figure*}[!t]
\ContinuedFloat
\centering
\captionsetup[subfigure]{skip=0pt,font=small}
\begin{subfigure}{\textwidth}
\includegraphics[width=\linewidth]{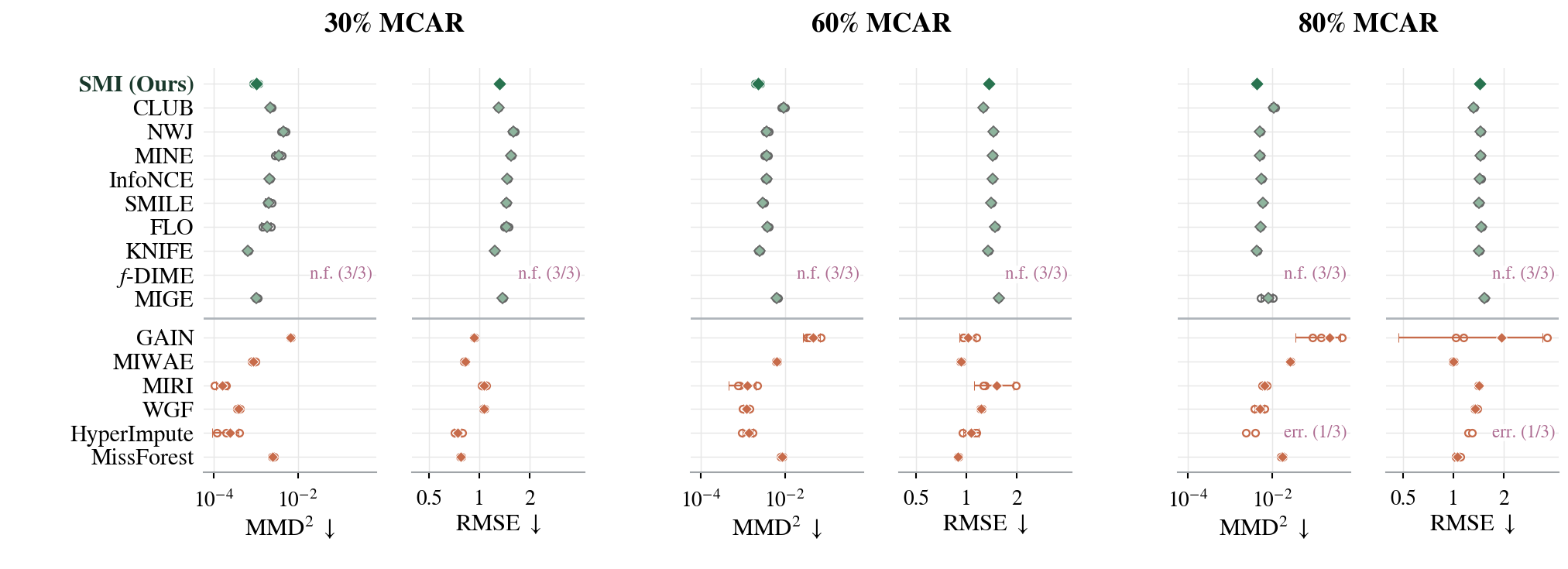}
\caption{Wine.}\label{fig:imputation-raw-wine}
\end{subfigure}\par\vspace{3pt}
\begin{subfigure}{\textwidth}
\includegraphics[width=\linewidth]{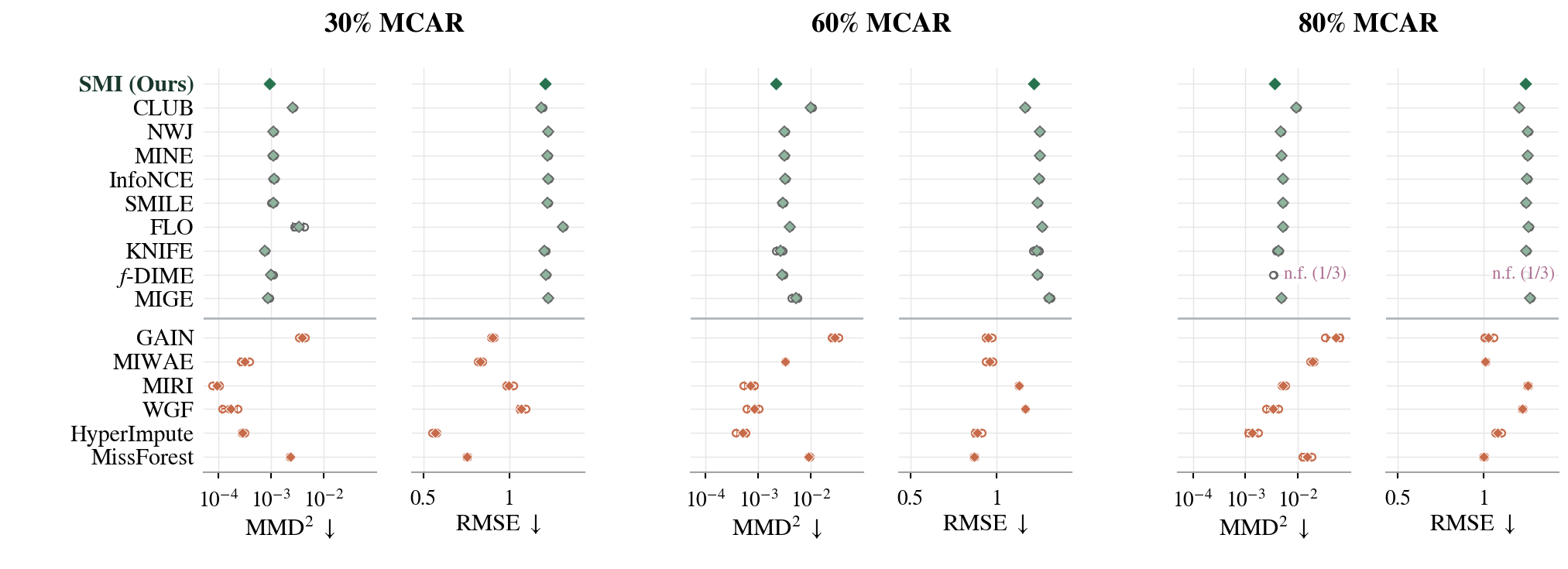}
\caption{Miami Housing.}\label{fig:imputation-raw-miami}
\end{subfigure}
\caption[]{\textbf{Missing data imputation (continued).}}
\end{figure*}

\subsection{Nuisance Suppression on Waterbirds}
\label{app:nuisance}

\paragraph{Data and objective.}
Waterbirds \citep{sagawa_distributionally_2020} contains $4{,}795$ training, $1{,}199$ validation, and $5{,}794$ test images.
The release is linked at \url{https://github.com/kohpangwei/group_DRO#waterbirds}.
Its CUB-200-2011 bird images (\url{https://www.vision.caltech.edu/datasets/cub_200_2011/}) are restricted to non-commercial research and education; Places backgrounds (\url{http://places2.csail.mit.edu/}) are provided for academic research and education.
Image copyrights remain with their owners.
We split the validation set into 598 tuning and 601 development-reporting examples; both are used during development.
From frozen ResNet-50 features, an adapter produces a 64-dimensional representation $R_\theta$ for bird classification.
MI-based methods minimise
\begin{equation}
\mathcal L_{\mathrm{CE}}(\theta)+\beta I(R_\theta;A\mid L),
\end{equation}
where $A$ denotes background and $L$ bird class; SMI replaces the MI term with its multiscale spread counterpart.

Test data are not used to fit models, probes, or standardisers.
Test results informed baseline and probe-protocol development.

\paragraph{MI-based methods.}
All ten methods use regularisation coefficients $\beta\in\{0.2,1,5\}$.
SMI uses
$\widetilde R=R_\theta+\sigma\epsilon$,
with independent Gaussian $\epsilon$ and
$\sigma\sim\operatorname{LogUniform}(0.5,2)$; labels are unchanged.
The critic receives $\widetilde R/\sqrt{1+\sigma^2}$, retaining gradients through this rescaling.
Class-conditional softmax posteriors parameterise the log density ratio:
\begin{equation}
\label{eq:waterbirds-ratio}
T_\psi(\widetilde r,a,l,\sigma)
=
\log q_\psi(a\mid\widetilde r,l,\sigma)
-\log\pi_{\rm train}(a\mid l),
\qquad
\sum_a
\pi_{\rm train}(a\mid l)
e^{T_\psi(\widetilde r,a,l,\sigma)}
=1.
\end{equation}
At the exact posterior, this recovers the conditional log density ratio.
SMI fits joint-versus-product binary cross-entropy and uses the stopped input gradient to update the representation.
Joint and product samples use independent noise draws from the same distribution.
Product representations are sampled with replacement within bird class, and group-balanced estimator samples are reweighted to the empirical training distribution.

The other MI methods use unperturbed, class-conditioned features.
MINE \citep{belghazi_mutual_2018} retains its moving-normaliser gradient with decay $0.9$ and floor $10^{-4}$; SMILE \citep{song_understanding_2020} uses clipping threshold five and its Jensen--Shannon backward rule.
InfoNCE \citep{oord_representation_2018} draws nuisance candidates independently within class, allowing repeated labels, with 256 candidates for estimator updates and 64 for representation updates.
CLUB \citep{cheng_club_2020} fits a binary posterior by likelihood and minimises the joint-minus-product log-posterior contrast.
NWJ \citep{nguyen_estimating_2010} and KL $f$-DIME \citep{letizia_mutual_2024} use independent within-class product draws.
FLO \citep{guo_tight_2022} uses within-class IID batches, normalised embeddings, and inverse temperature one.
KNIFE \citep{pichler_differential_2022} fits eight-component full-covariance mixtures $p(R\mid A,L)$ and forms
\begin{equation}
p(R\mid L)
=
\sum_a
\pi_{\rm train}(a\mid L)\,
p(R\mid a,L).
\end{equation}
MIGE \citep{wen_mutual_2020} uses conditional-minus-marginal spectral scores, independent reference batches of 512, median-distance bandwidth, and eigen-energy threshold $0.99$.
For KNIFE and MIGE, orthonormal coordinates remove the exact zero-sum constraint induced by LayerNorm without information loss; prediction and probes still use the original 64-dimensional representation.

\paragraph{Other methods.}
Class-conditional adversarial training \citep{edwards_censoring_2016,louppe_learning_2017} reverses the representation gradient of nuisance cross-entropy, with $\lambda\in\{0.2,1,5\}$.
HSIC \citep{gretton_measuring_2005} uses class-weighted within-class estimates, an RBF representation kernel with squared bandwidth 64, a nuisance equality kernel, and $\lambda\in\{3,10,30\}$.
ERM and GroupDRO \citep{sagawa_distributionally_2020} use weight decay in $\{0,0.001,0.01\}$.
GroupDRO uses group-balanced batches and exponentiated group-weight updates with step size $0.01$.
DFR \citep{kirichenko_last_2023} fits $\ell_1$-regularised logistic heads on a frozen adapter trained without dependence or weight-decay penalties.
It uses $C\in\{0.01,0.1,1\}$ and averages coefficients from ten independently sampled, equally sized four-group training subsets, limited by the smallest group.
DFR uses LIBLINEAR, while logistic probes use LBFGS.
\paragraph{Training.}
We use five independent training runs.
Frozen ResNet-50 \citep{he_deep_2016} uses ImageNet-1K V2 weights, a shorter-side resize to 232 pixels and a $224\times224$ centre crop.
Training means and sample SDs standardise its 2,048 pooled features.
The adapter has one width-256 SiLU hidden layer, a linear 64-dimensional projection and non-affine LayerNorm, followed by a linear two-class head.

Dependence-penalised methods receive 600 task-only updates at $5\times10^{-4}$, then 800 updates at $10^{-4}$ with a fresh Adam optimiser.
Task batches contain 128 examples; Adam uses default moments.
ERM and GroupDRO share adapter initialisations and receive 1,400 updates, with their objectives and weight decay active throughout.
Applicable estimator networks have two width-128 SiLU hidden layers, learning rate $3\times10^{-4}$ and batch size 512, with 2,000 initial fitting updates and 20 auxiliary updates per representation update.

\paragraph{Evaluation.}
Worst-group accuracy is the minimum test accuracy over the four bird--background groups.
Separate probes fit training examples using within-class training standardisation.
Linear leakage uses one balanced $\ell_2$ logistic probe per bird class with $C=1$.
The MLP probe takes the representation and one-hot bird class, with one width-128 SiLU hidden layer.
Three probe fits use Adam at $10^{-3}$, no weight decay and four-group-balanced batches of 128, with checkpoints at 800 and 1,600 updates.

Leakage is the mean classwise departure of nuisance-prediction ROC AUC from chance:
\begin{equation}
\label{eq:nuisance-leakage}
D_{\rm probe}
=\frac{100}{2}\sum_{l\in\{0,1\}}
\left|\operatorname{AUC}(s(R),A\mid L=l)-\frac12\right|.
\end{equation}
This probe-discrimination metric ranges from 0 to 50 percentage points.
MLP leakage averages the three probe fits before summarising across five training runs.

\paragraph{Additional results.}
\Cref{tab:nuisance-wb-r50-64} adds overall accuracy to the main comparison.

\begin{table}[!t]
\centering\small
\setlength{\tabcolsep}{4pt}
\caption{\textbf{Waterbirds.} Test accuracy and background leakage; mean $\pm$ SD over five training runs.}
\label{tab:nuisance-wb-r50-64}
\begin{tabularx}{\textwidth}{l*{4}{>{\centering\arraybackslash}X}}
\toprule
 & \multicolumn{2}{c}{Accuracy (\%, $\uparrow$)} & \multicolumn{2}{c}{Leakage (points, $\downarrow$)} \\
\cmidrule(lr){2-3}\cmidrule(lr){4-5}
Method & Overall & Worst group & Linear & MLP \\
\midrule
\multicolumn{5}{l}{\textit{MI-based methods}} \\
\rowcolor{smirow}
\textbf{SMI (Ours)} & $89.96\pm0.32$ & $\mathbf{71.68\pm1.33}$ & $\mathbf{18.19\pm5.35}$ & $\mathbf{9.20\pm5.49}$ \\
MINE & $90.68\pm0.32$ & $70.87\pm1.16$ & $26.21\pm2.42$ & $22.23\pm4.00$ \\
SMILE & $90.62\pm0.36$ & $71.43\pm1.83$ & $25.34\pm3.15$ & $17.69\pm1.76$ \\
InfoNCE & $\underline{90.75\pm0.22}$ & $71.43\pm1.40$ & $25.59\pm3.14$ & $18.12\pm4.09$ \\
CLUB & $90.22\pm0.24$ & $70.62\pm1.32$ & $24.82\pm1.78$ & $20.95\pm2.62$ \\
NWJ & $90.46\pm0.23$ & $70.34\pm1.09$ & $27.18\pm4.59$ & $18.82\pm2.32$ \\
FLO & $\mathbf{90.79\pm0.15}$ & $\underline{71.59\pm0.95}$ & $\underline{20.94\pm3.08}$ & $\underline{14.39\pm2.36}$ \\
KNIFE & $87.76\pm1.32$ & $65.79\pm1.91$ & $25.70\pm5.60$ & $21.67\pm4.78$ \\
$f$-DIME & $89.84\pm0.42$ & $71.37\pm1.01$ & $33.69\pm4.24$ & $34.50\pm3.65$ \\
MIGE & $83.38\pm1.17$ & $67.16\pm1.50$ & $30.55\pm2.21$ & $30.34\pm2.55$ \\
\midrule
\multicolumn{5}{l}{\textit{Other methods}} \\
Adversarial & $\mathbf{90.56\pm0.36}$ & $\underline{72.31\pm0.80}$ & $\underline{20.50\pm6.49}$ & $\mathbf{12.22\pm3.65}$ \\
HSIC & $\underline{89.67\pm0.27}$ & $\mathbf{73.05\pm0.87}$ & $\mathbf{16.12\pm1.77}$ & $15.41\pm2.15$ \\
ERM & $85.57\pm0.92$ & $68.57\pm1.41$ & $36.40\pm8.88$ & $37.40\pm7.63$ \\
GroupDRO & $84.72\pm0.51$ & $65.95\pm1.55$ & $21.15\pm6.29$ & $\underline{13.12\pm8.63}$ \\
DFR & $83.07\pm1.59$ & $68.27\pm4.21$ & $32.91\pm6.04$ & $30.23\pm3.37$ \\
\bottomrule
\end{tabularx}
\end{table}

\subsection{Edges-to-Photos Translation}
\label{app:edges-additional}

\paragraph{Data and objective.}
Edges2Shoes is distributed through \url{https://github.com/junyanz/pytorch-CycleGAN-and-pix2pix/blob/master/docs/datasets.md}.
It derives from UT Zappos50K (\url{https://vision.cs.utexas.edu/projects/finegrained/utzap50k/}), which restricts the dataset to academic, non-commercial use.
On Edges2Shoes \citep{yu_finegrained_2014,isola_imagetoimage_2017}, we adapt a common pretrained BicycleGAN to retain information about an independent latent code:
\begin{equation}
\min_\theta\ \mathcal L_{\mathrm{task}}(\theta)-\lambda I(H_\theta;Z),
\qquad H_\theta=\Phi(G_\theta(C,Z)),\qquad Z\sim\mathcal N(0,I_8),
\end{equation}
where $C$ is an edge map and $\Phi$ is a frozen image-feature map.
The MI objective is marginal over $C$; when finite,
$I(Z;H_\theta\mid C)=I(Z;H_\theta)+I(Z;C\mid H_\theta)$.
With deterministic generation and continuous codes, unperturbed MI need not be finite.
We therefore evaluate latent predictability and image-distribution matching separately.

Of 49,825 official training pairs, a fixed random permutation reserves 512 for development and 49,313 for adaptation.
Disjoint adaptation subsets contain 4,096 pairs for regressor fitting, 1,024 for regressor selection, 43,681 for estimator fitting and 512 reserved for diagnostics.
All adaptation pairs remain eligible for generator updates.
The regressor-fitting photos also fit the training feature projection; their first 512 calibrate the edge detector.
Development pairs are excluded from adaptation but may have appeared in public pretraining.
The 200 official validation pairs informed earlier exploration and are treated as a reporting set.
Three training runs use the same data splits and evaluation latent codes.

Each pair is split into edge/photo halves, resized bicubically to $256\times256$, converted to a grayscale edge map and RGB photo, and scaled to $[-1,1]$.
No random flips or other training augmentation are used.

\paragraph{MI-based methods.}
SMI spreads only the generated feature vector:
\begin{equation}
\label{eq:edges-smi-full}
\mathcal I_{\mathrm{SMI}}(\theta)
=\int_{0.01}^{1}I(H_\theta+\sigma\epsilon;Z)\,
\frac{\mathrm d\sigma}{\sigma\log100},
\qquad \epsilon\sim\mathcal N(0,I_{256}).
\end{equation}
Each example independently samples a scale at each update.
Joint pairs are $(H_\theta(C,Z)+\sigma\epsilon,Z)$; product pairs are $(H_\theta(C',Z')+\sigma\epsilon',Z)$ with independent $(C',Z')$ and $\epsilon'$.
Paired joint/product examples share the conditioning scale and use independent Gaussian noise.
Balanced binary cross-entropy fits the scale-conditioned logit $T_\eta$:
\begin{equation}
\mathcal L_{\mathrm{aux}}
=\tfrac12\mathbb E_{J,\sigma}\operatorname{softplus}(-T_\eta)
+\tfrac12\mathbb E_{P,\sigma}\operatorname{softplus}(T_\eta).
\end{equation}
Writing $\widetilde H=H_\theta+\sigma\epsilon$, generator minimisation uses
\begin{equation}
\mathcal L_{\mathrm{SMI},G}
=-\mathbb E\!\left[
\operatorname{stopgrad}\!\left(\nabla_{\widetilde H}
T_\eta(\widetilde H,Z,\log\sigma)\right)^{\mathsf T}H_\theta
\right].
\end{equation}
The minus sign gives MI maximisation; the surrogate value is not an MI estimate.

Other MI methods use unperturbed features and codes.
InfoNCE \citep{oord_representation_2018} and NWJ \citep{nguyen_estimating_2010} use concatenated-input MLP critics and the released all-pairs objectives.
InfoNCE uses 256 in-batch candidates, including the positive, without feature normalisation.
NWJ uses all off-diagonal product pairs and $\exp(T-1)$ without clipping; negative pairs retain generator gradients.
FLO \citep{guo_tight_2022} uses normalised width-256 bilinear embeddings, an auxiliary scalar head, 255 negatives and a trainable temperature initialised at one.
KNIFE \citep{pichler_differential_2022} fits 128-component full-covariance Gaussian mixtures for $p(Z)$ and $p(Z\mid H)$, with Tanh log-variance heads.
It differentiates the fitted marginal-minus-conditional entropy through $H$.
MIGE \citep{wen_mutual_2020} uses joint-minus-marginal spectral scores, support size 256, Gaussian median-distance bandwidth and eigen-energy threshold 0.99.
Only generated-feature coordinates enter its gradient.

\paragraph{Other methods.}
Task only minimises $\mathcal L_{\mathrm{task}}$.
Latent $\ell_1$ \citep{zhu_multimodal_2017} uses BicycleGAN's latent reconstruction with weight 0.5 on four prior-branch outputs; the encoder posterior mean predicts the code.
Encoder parameters are frozen for this loss while retaining input gradients and encoder gradients from the common task.
DSGAN \citep{yang_diversitysensitive_2019} uses eight independently sampled edge maps and two independent codes per map:
\begin{equation}
\mathcal L_{\mathrm{DSGAN}}
=-8\,\mathbb E\!\left[
\frac{d_1(G_\theta(C,Z),\operatorname{stopgrad}(G_\theta(C,Z')))}
{d_1(Z,Z')+10^{-5}}
\right],
\end{equation}
where $d_1$ is mean absolute coordinate difference.
The second image is detached, as in the release; no margin is enabled.
These regularisers act on unperturbed RGB outputs.
Pretrained is the single common generator before adaptation and has no training-run SD.

\paragraph{Training.}
Generator and encoder weights come from the public BicycleGAN Shoes checkpoint.
The generator is a width-64 U-Net with instance normalisation, ReLU and eight latent coordinates injected at every encoder level.
Dropout is disabled during adaptation.
The encoder is the released width-64 residual architecture for $256\times256$ images.
Each run freshly initialises two two-scale RGB PatchGAN discriminators, paired across methods.

A task minibatch contains eight real pairs; its first four edge maps are used for both encoded-posterior and independent-prior branches.
The common generator and encoder loss is
\begin{equation}
\mathcal L_{\mathrm{task}}
=\mathcal L_{\mathrm{LSGAN}}^{\mathrm{encoded}}
+\mathcal L_{\mathrm{LSGAN}}^{\mathrm{prior}}
+10\mathcal L_{\mathrm{image}\text{-}\ell_1}
+0.01\mathcal L_{\mathrm{KL}}.
\end{equation}
Image reconstruction uses the encoded branch; KL sums latent coordinates and averages examples; adversarial losses sum discriminator scales.
Public pretraining includes latent reconstruction, but adaptation adds it only for the latent $\ell_1$ comparator.

The frozen training representation uses ImageNet AlexNet \citep{krizhevsky_imagenet_2012}, $224\times224$ antialiased bilinear resizing and ImageNet channel normalisation.
Adaptive $2\times2$ pooling yields 1,024 features; PCA retains and whitens 256 components using adaptation photos.
The feature extractor and PCA projection remain fixed.

Discriminators receive 100 initial updates with generator/encoder fixed, using Adam at $2\times10^{-4}$ and moments $(0.5,0.999)$.
Fresh optimisers then take 20,000 generator updates, with Adam moments $(0.5,0.999)$ and no weight decay.
The learning rate is $10^{-5}$ for 4,000 updates, then cosine-decayed to $10^{-6}$.
No gradient clipping is used; every run completes the fixed budget.

Neural estimators have two width-256 SiLU hidden layers and method-specific output layers.
Adam uses learning rate $3\times10^{-4}$, batch size 256 and no weight decay, with moments $(0.9,0.999)$ except KNIFE's $(0.5,0.999)$.
Each estimator receives 10,000 initial fitting updates on 32,768 stored feature--code pairs from the common starting generator, with equal joint and independent-feature halves sampled with replacement.
Thereafter, eight estimator updates use fresh generator samples per generator update; MIGE has no fitting stage.
MI generator batches contain 256 examples, implemented by deterministic forward caching and backward replay in microbatches of eight, retaining full-batch negative-sample gradients.
Task-loss and estimator minibatches are sampled independently.
These shared architectures and schedules are benchmark adaptations.

After initial estimator fitting, $\lambda$ is fixed per method and run to match 30\% of the task-gradient norm, using median norms over 16 calibration batches.
Reporting-set metrics select neither $\lambda$ nor generator checkpoints.

\paragraph{Evaluation.}
We report results only for the final generators after 20,000 updates, fitting a separate MLP regressor to predict latent codes for each method and run.
The primary representation concatenates the 1,024-dimensional CLS and mean patch tokens of DINOv2 \citep{oquab_dinov2_2024,darcet_vision_2024}, using the ViT-L/14 model with four register tokens.
ImageNet ResNet-50 \citep{he_deep_2016} supplies a second 2,048-dimensional representation to assess sensitivity to the evaluation features.
Both use $224\times224$ antialiased bilinear resizing and ImageNet normalisation, with no evaluation PCA.
Feature means and standard deviations are fitted only on the regressor-training set.

Each edge map uses eight independent Gaussian codes; the codes for each data split and regressor initialisations are shared across methods.
The MLP has two width-256 SiLU hidden layers and eight outputs, using AdamW with default moments, batch size 512 and at most 200 epochs.
Selection-set MSE, evaluated every five epochs, chooses the regressor checkpoint and one of four learning-rate/weight-decay pairs in $\{3\times10^{-4},10^{-3}\}\times\{10^{-4},10^{-3}\}$.
Development and reporting pairs enter neither regressor fitting nor selection.
For $n$ reporting edge maps and $K=8$ codes per map, $N=nK$ generated images give
\begin{equation}
\label{eq:edges-r2}
R^2=\frac18\sum_{j=1}^{8}\left(
1-\frac{\sum_{a=1}^{N}(z_{aj}-\widehat z_{aj})^2}
{\sum_{a=1}^{N}(z_{aj}-\bar z_j)^2}
\right),
\end{equation}
where $\bar z_j$ is the reporting-set coordinate mean.
The reporting set has 200 edge maps and 1,600 outputs.
This coefficient of determination measures latent predictability.

The unperturbed condition applies no additional image processing.
JPEG (75) encodes and decodes 8-bit RGB at quality 75.
Resize downsamples to $128\times128$ and upsamples to $256\times256$ using bilinear interpolation, with antialiasing on downsampling.
Both use the fixed DINOv2 regressor and standardiser fitted on unperturbed images.

Distribution matching uses KID \citep{binkowski_demystifying_2018} on unperturbed outputs, with 2,048-dimensional FID-compatible Inception features, bilinear resizing to $299\times299$ and kernel $k(u,v)=(u^{\mathsf T}v/2048+1)^3$.
For paired real features $r_i$ and generated features $f_{ik}$, we use
\begin{align}
A_{ij}
&=k(r_i,r_j)+\frac1K\sum_{k=1}^{K}
\big[k(f_{ik},f_{jk})-k(r_i,f_{jk})-k(f_{ik},r_j)\big],\\
\widehat{\mathrm{KID}}
&=\frac{1}{n(n-1)}\sum_{i\ne j}A_{ij}.
\end{align}
This U-statistic averages over code indices, excluding pairs sharing an edge map from every term.
Values are reported in raw units.

Edge F1 measures agreement between HED \citep{xie_holisticallynested_2015} detections on unperturbed outputs and the input edge map.
A threshold calibrated on 512 real adaptation pairs is fixed at 0.6.
Matching uses a two-pixel circular tolerance, excludes a two-pixel border and permits multiple nearby matches.
F1 is computed per image and then averaged.
Tables report mean and sample SD across three training runs; the pretrained reference is evaluated once.

\paragraph{Additional results.}
\Cref{tab:edges-additional} reports KID, edge F1 and latent-code prediction using ResNet-50 features.

\begin{table}[!t]
\centering\small
\setlength{\tabcolsep}{4pt}
\caption{\textbf{Edges2Shoes additional results.} Reporting-set metrics at 20,000 generator updates; mean $\pm$ SD over three training runs.}
\label{tab:edges-additional}
\begin{tabularx}{\textwidth}{lc*{2}{>{\centering\arraybackslash}X}}
\toprule
Method & KID ($\downarrow$) & Edge F1 ($\uparrow$) & ResNet-50 MLP $R^2$ ($\uparrow$) \\
\midrule
\multicolumn{4}{l}{\textit{MI-based methods}} \\
\rowcolor{smirow}
\textbf{SMI (Ours)} & $(1.587\pm0.145)\times10^{-2}$ & $0.679\pm0.004$ & $\mathbf{0.584\pm0.008}$ \\
InfoNCE & $\underline{(1.293\pm0.084)\times10^{-2}}$ & $0.685\pm0.003$ & $0.523\pm0.007$ \\
NWJ & $(1.481\pm0.064)\times10^{-2}$ & $0.679\pm0.002$ & $\underline{0.573\pm0.016}$ \\
FLO & $\mathbf{(1.177\pm0.014)\times10^{-2}}$ & $\mathbf{0.690\pm0.002}$ & $0.470\pm0.005$ \\
KNIFE & $(1.451\pm0.154)\times10^{-2}$ & $0.684\pm0.002$ & $0.533\pm0.011$ \\
MIGE & $(1.420\pm0.148)\times10^{-2}$ & $\underline{0.689\pm0.004}$ & $0.520\pm0.009$ \\
\midrule
\multicolumn{4}{l}{\textit{Other methods and reference}} \\
Task only & $\underline{(1.187\pm0.045)\times10^{-2}}$ & $\mathbf{0.690\pm0.002}$ & $0.470\pm0.004$ \\
Latent $\ell_1$ & $(1.499\pm0.054)\times10^{-2}$ & $\underline{0.675\pm0.002}$ & $\underline{0.570\pm0.005}$ \\
DSGAN & $(7.666\pm3.324)\times10^{-2}$ & $0.539\pm0.040$ & $\mathbf{0.626\pm0.018}$ \\
Pretrained & $\mathbf{1.019\times10^{-2}}$ & $0.672$ & $0.463$ \\
\bottomrule
\end{tabularx}
\end{table}

\subsection{Computational Costs}
\label{app:synthetic-cost}

\paragraph{Measurement protocol.}
We time Gaussian MI minimisation with one shared correlation parameter on an isolated RTX 3090.
An iteration includes fresh sampling, $N_{\mathrm{aux}}=5$ estimator updates and one generator update; MIGE uses $N_{\mathrm{aux}}=0$.
Each trial uses one pinned CPU thread, $N_{\mathrm{warm}}=100$ warm-up iterations and $N_{\mathrm{iter}}=100$ timed iterations, with CUDA synchronisation around measurements.
Initialisation and input-scale calibration are excluded.
We report the median and range of mean iteration times over $N_{\mathrm{rep}}=3$ repetitions.
Memory is peak allocated GPU tensor memory during timing, including model and optimiser state.

The sweeps vary dimension $d$, minibatch size $B$ and number of correlated coordinates $r$, fixing $(r,B)=(2,256)$, $(d,r)=(100,2)$ and $(d,B)=(100,256)$, respectively.
A dagger marks any setting where a repetition's mean over the final 20 iterations differs by more than 10\% from its first 20; no measurements are excluded.

\paragraph{Results.}
\Cref{fig:synthetic-runtime} reports iteration time and peak memory across the three sweeps.
These iteration costs are hardware- and implementation-specific.

\begin{figure*}[!t]
\centering
\includegraphics[width=\textwidth]{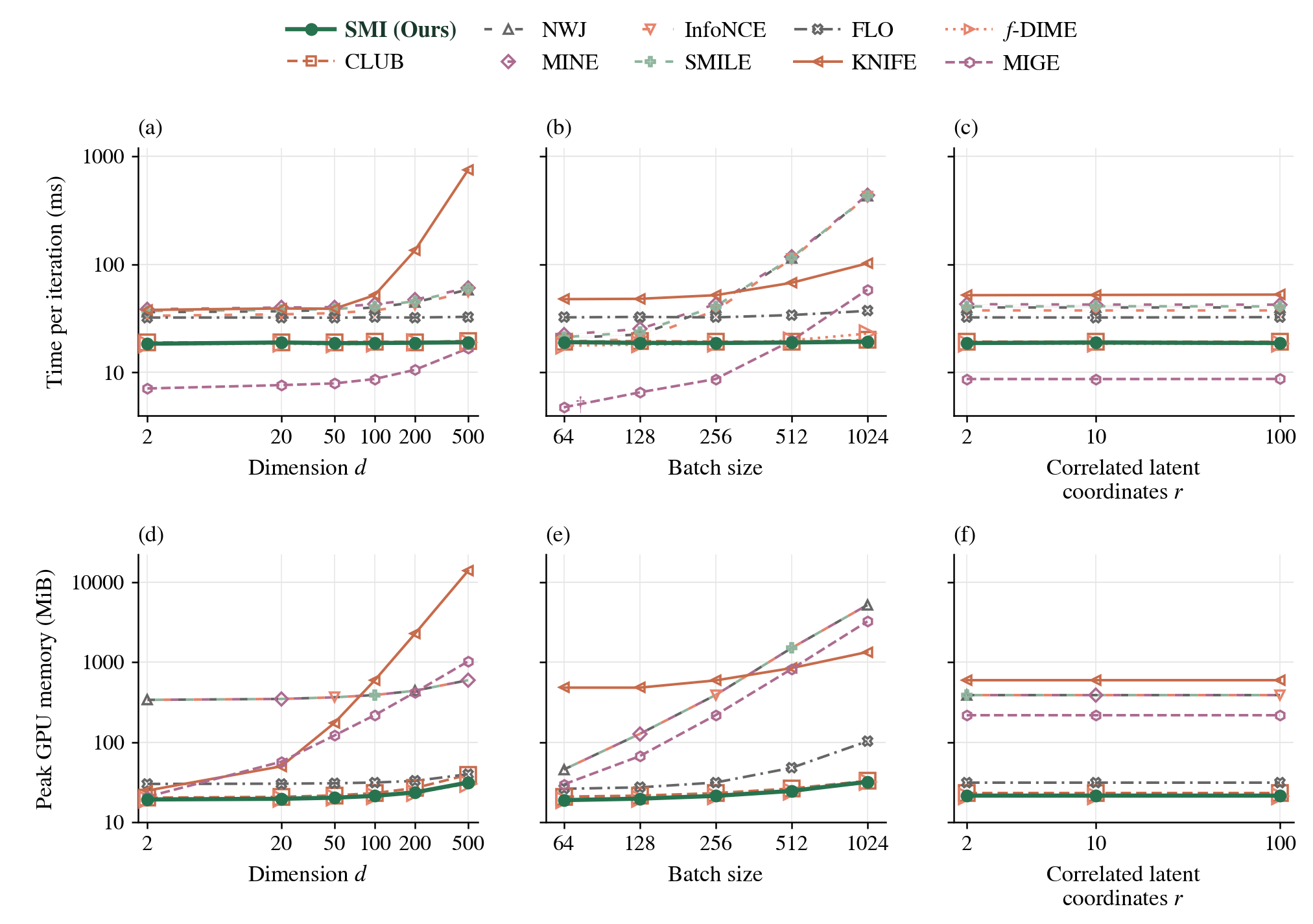}
\caption{\textbf{Computational costs.} Iteration time and peak allocated GPU memory across dimension, minibatch size and correlated coordinates; median and range over three repetitions.}
\label{fig:synthetic-runtime}
\end{figure*}

\subsubsection{Analytical Costs}
\label{app:mi-complexity}

\paragraph{Scope and notation.}
The bounds in \cref{tab:mi-computational-costs} concern the evaluated unconditional synthetic implementations, with $X,Y\in\mathbb R^d$.
As in \cref{alg:smi}, $B$ denotes minibatch size.
Let $N_{\mathrm{aux}}$ be the number of estimator updates per generator update; for SMI, $N_{\mathrm{aux}}=N_{\mathrm{clf}}$.
Write $N_\ell\ge2$ for the number of hidden layers, $N_{\mathrm{mix}}$ for mixture components and $h$ for hidden width.
Timing uses $h=256$, $N_\ell=2$, $N_{\mathrm{aux}}=5$ and $N_{\mathrm{mix}}=8$, except that MIGE has no estimator optimiser.
A dense MLP with $O(d)$ inputs and scalar or $O(d+h)$ outputs has $O(dh+N_\ell h^2)$ parameters and work per example, retaining $O(d+N_\ell h)$ activations.

We count dense arithmetic and reverse-mode differentiation without checkpointing.
Peak memory includes parameters, gradients, optimiser state, activations and temporary tensors in scalar words.
Sequential estimator updates increase runtime while reusing memory.
Analytical bounds exclude generators, sampling and dataset storage; measured iterations include sampling and generator backpropagation.

\begin{table*}[!t]
\centering
\caption{\textbf{MI control methods and computational costs.} Estimator costs per generator update for the evaluated synthetic implementations.}
\label{tab:mi-computational-costs}
\small
\setlength{\tabcolsep}{2pt}
\renewcommand{\arraystretch}{1.13}
\begin{tabularx}{\textwidth}{lYcc}
\toprule
Method & Objective or gradient & Time $O(\cdot)$ & Peak memory $O(\cdot)$ \\
\midrule
\rowcolor{smirow}
\textbf{SMI (Ours)} & Spread log-ratio gradient & $(N_{\mathrm{aux}}+1)B(dh+N_\ell h^2)$ & $dh+N_\ell h^2+B(d+N_\ell h)$ \\
CLUB & Sampled Gaussian contrast & $(N_{\mathrm{aux}}+1)B(dh+N_\ell h^2)$ & $dh+N_\ell h^2+B(d+N_\ell h)$ \\
NWJ & Variational KL bound & $(N_{\mathrm{aux}}+1)B^2(dh+N_\ell h^2)$ & $dh+N_\ell h^2+B^2(d+N_\ell h)$ \\
MINE & DV; moving normaliser & $(N_{\mathrm{aux}}+1)B^2(dh+N_\ell h^2)$ & $dh+N_\ell h^2+B^2(d+N_\ell h)$ \\
InfoNCE & Contrastive bound & $(N_{\mathrm{aux}}+1)B^2(dh+N_\ell h^2)$ & $dh+N_\ell h^2+B^2(d+N_\ell h)$ \\
SMILE & JS gradient; DV readout & $(N_{\mathrm{aux}}+1)B^2(dh+N_\ell h^2)$ & $dh+N_\ell h^2+B^2(d+N_\ell h)$ \\
FLO & Bilinear Fenchel--Legendre bound & $\begin{aligned}(N_{\mathrm{aux}}+1)\bigl[&B(dh+N_\ell h^2)\\&+B^2h\bigr]\end{aligned}$ & $\begin{aligned}&dh+N_\ell h^2\\&\quad+B(d+N_\ell h)+B^2\end{aligned}$ \\
KNIFE & Full-covariance mixture entropies & $\begin{aligned}&(N_{\mathrm{aux}}+1)B\\&\quad\times(dh+N_\ell h^2+N_{\mathrm{mix}}hd^2)\end{aligned}$ & $\begin{aligned}&dh+N_\ell h^2+N_{\mathrm{mix}}hd^2\\&\quad+B(d+N_\ell h+N_{\mathrm{mix}}d^2)\end{aligned}$ \\
$f$-DIME & KL density ratio; deranged pairs & $(N_{\mathrm{aux}}+1)B(dh+N_\ell h^2)$\rlap{$^{\dagger}$} & $dh+N_\ell h^2+B(d+N_\ell h)$ \\
MIGE & Spectral score difference & $B^3+B^2d$ & $B^2d$ \\
\bottomrule
\end{tabularx}
\vspace{3pt}
\begin{minipage}{\textwidth}
\footnotesize
$^{\dagger}$Expected time for rejection-sampled derangements. Bounds refer to the evaluated implementations and exclude generator and data-sampling costs.
DV: Donsker--Varadhan; JS: Jensen--Shannon.
\end{minipage}
\end{table*}

\paragraph{SMI.}
One ratio network processes $O(B)$ spread joint/product pairs.
Noise generation costs $O(Bd)$; each fitting update costs $O(B(dh+N_\ell h^2))$.
The classifier logit is $T_\phi(x_t,y,t)=\log[c_\phi(x_t,y,t)/(1-c_\phi(x_t,y,t))]$, giving the estimated score difference $\widehat v_t=\nabla_{x_t}T_\phi$.
For the evaluated $\alpha_t=1$ kernel, the surrogate is
\begin{equation}
\widehat{\mathcal L}_{\mathrm{sur}}
=\frac1B\sum_{i=1}^{B} w_{\mathrm{SMI}}(t_i)\,
\operatorname{stopgrad}\!\left[\widehat v_{t_i}(x_{t_i,i},y_i)\right]^{\top}x_{t_i,i}.
\end{equation}
One reverse pass obtains all input gradients at the same order of work.
Stopping them avoids higher-order differentiation; the generator uses a vector--Jacobian product.
Sampling one scale per example introduces no factor for a discretised noise schedule.
Thus $N_{\mathrm{aux}}$ fitting updates and one generator update require $O((N_{\mathrm{aux}}+1)B(dh+N_\ell h^2))$ estimator work.
Network and optimiser state occupy $O(dh+N_\ell h^2)$ memory, and a batch retains $O(B(d+N_\ell h))$ activations and score vectors.
Their sum gives SMI's peak-memory bound in \cref{tab:mi-computational-costs}.

\end{document}